\documentclass[12pt]{article}

\usepackage{etoolbox}
\newtoggle{colt}
\togglefalse{colt}

\iftoggle{colt}{}{
\usepackage[sort&compress,numbers]{natbib}
}
\usepackage{boxedminipage}
\usepackage{multirow,nicefrac}
\usepackage{makecell,upgreek}
\usepackage{footnote}
\usepackage{longtable}
\usepackage{tablefootnote}
\usepackage[T1]{fontenc}
\usepackage{verbatim} 
\usepackage[utf8]{inputenc}

\usepackage{booktabs}   
\usepackage{adjustbox}

\iftoggle{colt}{}{
\usepackage[margin=0.75in]{geometry}
\usepackage[ruled,vlined]{algorithm2e}
}

\usepackage[dvipsnames]{xcolor}

\usepackage{colortbl}
\definecolor{lightgray}{gray}{0.9}  %

\iftoggle{colt}{}{
\usepackage{amsmath}
\usepackage{amsthm}

\usepackage[breaklinks=true]{hyperref}
\hypersetup{
	colorlinks=true,
	linkcolor=blue,
	citecolor=blue,
	urlcolor=blue
}
}

\iftoggle{colt}{}{
  \usepackage[capitalise,nameinlink]{cleveref}
}
\usepackage{amsfonts,amssymb,mathrsfs}
\usepackage{mathtools}

\usepackage{appendix}

\iftoggle{colt}{
  
}{
\theoremstyle{plain}
	\newtheorem{theorem}{Theorem}

	\newtheorem{lemma}{Lemma}

	\newtheorem{proposition}{Proposition}

	\theoremstyle{definition}

	\newtheorem{definition}{Definition}

}

\Crefname{appendix}{Appendix}{Appendices}

\usepackage{autonum}

\usepackage[T1]{fontenc}
\usepackage[scaled]{beramono}
\usepackage{listings}
\definecolor{varorange}{RGB}{230,126,34}     %
\definecolor{opblue}{RGB}{52,152,219}        %
\definecolor{ifred}{RGB}{192,57,43}          %
\definecolor{commentgray}{RGB}{150,150,150}  %
\definecolor{codebg}{rgb}{0.98,0.98,0.98}    %

\usepackage{wrapfig}

\newsavebox\codebox

\usepackage{authblk}

\newcommand{\norm}[1]{\left\|#1\right\|}

\newcommand{\abs}[1]{\left|#1\right|}

\newcommand{\rr}{\mathbb{R}}
\newcommand{\ee}{\mathbb{E}}
\newcommand{\pp}{\mathbb{P}}

\newcommand{\kld}{\mathrm{D}_{\mathrm{KL}}}

\newcommand{\divf}[2]{\mathrm{D}_{f}\left( #1 \middle\| #2 \right)}
\newcommand{\dkl}[2]{\kld\left( #1 \middle\| #2 \right)}

\DeclareMathOperator*{\argmin}{argmin}

\DeclareMathOperator{\Cov}{Cov}

\DeclareMathOperator{\Med}{Med}

\newcommand{\covm}[3][M]{\mathrm{Cov}_{#1}\left( #2 \| #3 \right)}

\newcommand{\icov}[3][M]{\mathrm{ICov}_{#1}\left( #2 \| #3 \right)}

\newcommand{\ind}[1]{\mathbb{I}\left[ #1 \right]}

\newcommand{\jhat}{\widehat{j}}
\newcommand{\tv}{\mathrm{TV}}

\newcommand{\chisq}[2]{\mathrm{D}_{\chi^2}\left( #1 \middle\| #2 \right)}

\DeclareMathOperator{\pois}{Pois}
\DeclareMathOperator{\ber}{Ber}
\newcommand{\Zhat}{\widehat{Z}}

\renewcommand{\epsilon}{\varepsilon}

\def\ddefloop#1{\ifx\ddefloop#1\else\ddef{#1}\expandafter\ddefloop\fi}
\def\ddef#1{\expandafter\def\csname bb#1\endcsname{\ensuremath{\mathbb{#1}}}}
\ddefloop ABCDEFGHIJKLMNOPQRSTUVWXYZ\ddefloop

\def\ddefloop#1{\ifx\ddefloop#1\else\ddef{#1}\expandafter\ddefloop\fi}
\def\ddef#1{\expandafter\def\csname frak#1\endcsname{\ensuremath{\mathfrak{#1}}}}
\ddefloop ABCDEFGHIJKLMNOPQRSTUVWXYZ\ddefloop

\def\ddefloop#1{\ifx\ddefloop#1\else\ddef{#1}\expandafter\ddefloop\fi}
\def\ddef#1{\expandafter\def\csname fr#1\endcsname{\ensuremath{\mathfrak{#1}}}}
\ddefloop ABCDEFGHIJKLMNOPQRSTUVWXYZ\ddefloop

\def\ddefloop#1{\ifx\ddefloop#1\else\ddef{#1}\expandafter\ddefloop\fi}
\def\ddef#1{\expandafter\def\csname eul#1\endcsname{\ensuremath{\EuScript{#1}}}}
\ddefloop ABCDEFGHIJKLMNOPQRSTUVWXYZ\ddefloop

\def\ddefloop#1{\ifx\ddefloop#1\else\ddef{#1}\expandafter\ddefloop\fi}
\def\ddef#1{\expandafter\def\csname scr#1\endcsname{\ensuremath{\mathscr{#1}}}}
\ddefloop ABCDEFGHIJKLMNOPQRSTUVWXYZ\ddefloop

\def\ddefloop#1{\ifx\ddefloop#1\else\ddef{#1}\expandafter\ddefloop\fi}
\def\ddef#1{\expandafter\def\csname b#1\endcsname{\ensuremath{\mathbf{#1}}}}
\ddefloop ABCDEGHIJKLMNOPQRSTUVWXYZ\ddefloop

\def\ddefloop#1{\ifx\ddefloop#1\else\ddef{#1}\expandafter\ddefloop\fi}
\def\ddef#1{\expandafter\def\csname bhat#1\endcsname{\ensuremath{\hat{\mathbf{#1}}}}}
\ddefloop ABCDEFGHIJKLMNOPQRSTUVWXYZ\ddefloop

\def\ddefloop#1{\ifx\ddefloop#1\else\ddef{#1}\expandafter\ddefloop\fi}
\def\ddef#1{\expandafter\def\csname btil#1\endcsname{\ensuremath{\tilde{\mathbf{#1}}}}}
\ddefloop ABCDEFGHIJKLMNOPQRSTUVWXYZ\ddefloop

\def\ddefloop#1{\ifx\ddefloop#1\else\ddef{#1}\expandafter\ddefloop\fi}
\def\ddef#1{\expandafter\def\csname bst#1\endcsname{\ensuremath{\mathbf{#1}^\star}}}
\ddefloop ABCDEFGHIJKLMNOPQRSTUVWXYZ\ddefloop

\def\ddefloop#1{\ifx\ddefloop#1\else\ddef{#1}\expandafter\ddefloop\fi}
\def\ddef#1{\expandafter\def\csname bst#1\endcsname{\ensuremath{\mathbf{#1}^\star}}}
\ddefloop abcdeghijklmnopqrstuvwxyz\ddefloop

\def\ddefloop#1{\ifx\ddefloop#1\else\ddef{#1}\expandafter\ddefloop\fi}
\def\ddef#1{\expandafter\def\csname bhat#1\endcsname{\ensuremath{\hat{\mathbf{#1}}}}}
\ddefloop abcdefghijklmnopqrstuvwxyz\ddefloop

\def\ddefloop#1{\ifx\ddefloop#1\else\ddef{#1}\expandafter\ddefloop\fi}
\def\ddef#1{\expandafter\def\csname b#1\endcsname{\ensuremath{\mathbf{#1}}}}
\ddefloop abcdeghijklnopqrstuvwxyz\ddefloop

\def\ddefloop#1{\ifx\ddefloop#1\else\ddef{#1}\expandafter\ddefloop\fi}
\def\ddef#1{\expandafter\def\csname barb#1\endcsname{\ensuremath{\bar{\mathbf{#1}}}}}
\ddefloop abcdefghijklmnopqrstuvwxyz\ddefloop

\def\ddef#1{\expandafter\def\csname c#1\endcsname{\ensuremath{\mathcal{#1}}}}
\ddefloop ABCDEFGHIJKLMNOPQRSTUVWXYZ\ddefloop
\def\ddef#1{\expandafter\def\csname h#1\endcsname{\ensuremath{\widehat{#1}}}}
\ddefloop ABCDEFGHIJKLMNOPQRSTUVWXYZ\ddefloop
\def\ddef#1{\expandafter\def\csname hc#1\endcsname{\ensuremath{\widehat{\mathcal{#1}}}}}
\ddefloop ABCDEFGHIJKLMNOPQRSTUVWXYZ\ddefloop
\def\ddef#1{\expandafter\def\csname t#1\endcsname{\ensuremath{\widetilde{#1}}}}
\ddefloop ABCDEFGHIJKLMNOPQRSTUVWXYZ\ddefloop
\def\ddef#1{\expandafter\def\csname tc#1\endcsname{\ensuremath{\widetilde{\mathcal{#1}}}}}
\ddefloop ABCDEFGHIJKLMNOPQRSTUVWXYZ\ddefloop

\usepackage[suppress]{color-edits}
\addauthor{ab}{red}

\title{Partition Function Estimation under Bounded $f$-Divergence}
\usepackage{times}

\author[1,2]{Adam Block\thanks{adam.block@columbia.edu}}
\author[3]{Abhishek Shetty\thanks{shetty@mit.edu}}

\affil[1]{Department of Computer Science, Columbia University}
\affil[2]{Department of Electrical Engineering, Columbia University}
\affil[3]{MIT}
\date{}

\begin{document}

\maketitle

\begin{abstract}%

We study the statistical complexity of estimating partition functions given sample access to a proposal distribution and an unnormalized density ratio for a target distribution. While partition function estimation is a classical problem, existing guarantees typically rely on structural assumptions about the domain or model geometry. We instead provide a general, information-theoretic characterization that depends only on the relationship between the proposal and target distributions. Our analysis introduces the integrated coverage profile, a functional that quantifies how much target mass lies in regions where the density ratio is large. We show that integrated coverage tightly characterizes the sample complexity of multiplicative partition function estimation and provide matching lower bounds.  We further express these bounds in terms of $f$-divergences, yielding sharp phase transitions depending on the growth rate of f and recovering classical results as a special case while extending to heavy-tailed regimes. Matching lower bounds establish tightness in all regimes. As applications, we derive improved finite-sample guarantees for importance sampling and self-normalized importance sampling, and we show a strict separation between the complexity of approximate sampling and counting under the same divergence constraints. Our results unify and generalize prior analyses of importance sampling, rejection sampling, and heavy-tailed mean estimation, providing a minimal-assumption theory of partition function estimation.  Along the way we introduce new technical tools including new connections between coverage and $f$-divergences as well as a generalization of the classical Paley-Zygmund inequality.

\end{abstract}

\section{Introduction}\label{sec:intro}

There has been significant attention paid to the problem of partition function estimation in a number of specialized regimes, from classical models arising from statistical mechanics and combinatorics like the Ising model \citep{jerrum2003counting,vigoda2024sampling} to more general settings in scientific applications with physical structure imposed on $\lambda$ \citep{chipot2007free}.  
The goal, to estimate the normalizing constant of an unnormalized density has numerous applications throughout statistics, machine learning, and computer science, including Bayesian inference \citep{geweke1989bayesian,kass1995bayes}, graphical models \citep{wainwright2008graphical}, energy based models \citep{lecun2006tutorial}, statistical physics and chemistry \citep{chipot2007free,gelman1998simulating}, and  reinforcement learning for language model post-training \citep{chen2025coverage,brantley2025accelerating}, among many others.
Indeed, due to the significant interest, many prior works have been devoted to designing and analyzing algorithms to accomplish this important task.  
 While these works have provided numerous theoretical and practical insights in the problem domains studied, they often rely on structural assumptions on the structure of $\lambda$ (for example, through assumptions that arise in the context of learning on graphs) or the geometry of $\cX$ (for example, via smoothness or other regularity conditions on Euclidean space) that limit their applicability to more general settings.  
 Indeed, a surprising lacuna exists in the current literature: despite the fundamental nature of the partition function estimation problem, there is a dearth of general results that characterize the statistical complexity of partition function estimation in terms of natural and information theoretic properties of the underlying distributions $\mu$ and $\nu$.  
This is especially important given modern applications like language modeling, where the  domain is unstructured and $\lambda$ often corresponds to complex learned models and reward functions \citep{rafailov2023direct,xie2024exploratory}. 

In this work, we aim to fill this gap by asking:
    \emph{How many samples from a base distribution $\mu$ are required to estimate the partition function of a target distribution $\nu$ to a desired accuracy, as a function of natural information theoretic quantities between $\mu$ and $\nu$?}
We provide a complete answer to this question by providing tight bounds on the sample complexity $n$ required to achieve this estimate in terms of the coverage profile (\Cref{def:coverage}) between the target distribution $\nu$ and the proposal distribution $\mu$, as considered by \citep{chen2025coverage,chatterjee2018sample}.
The coverage profile quantitatively measures how the mass that $\nu$ places on regions where the density ratio $\nicefrac{d\nu}{d\mu}$ is large, and thus captures the tail behavior of the density ratio. 
In particular, we capture this decay in terms of a discrepancy measure between $\nu$ and $\mu$ we introduce and term \emph{integrated coverage}, $\icov[M]{\nu}{\mu}$ (\Cref{def:coverage}). We work in a setting where we have sample access to $\mu$ and can evaluate the \emph{unnormalized} density ratio $\lambda$, where $\lambda = Z \cdot \nicefrac{d\nu}{d\mu}$ for some unknown normalizing constant $Z = \int \lambda(x) d\mu(x)$.  We show that the integrated coverage precisely characterizes the sample complexity of partition function estimation as follows:
\begin{theorem}[Informal version of \Cref{thm:ub_coverage_coverage,thm:lb_coverage}]\label{thm:informal_coverage} 
    Let $\mu, \nu$ be two probability measures on $\cX$.  
    Then, let M be such that $ M^{-1} \cdot\icov[M]{\nu}{\mu} \leq \epsilon $, then $n = \Theta\left( M \cdot \epsilon^{-1} \right)$ 
    samples are necessary and sufficient to estimate $Z$ to multiplicative accuracy $(1 \pm \varepsilon)$. 
\end{theorem}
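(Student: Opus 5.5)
We need to guess the definition of integrated coverage. A natural guess is $\mathrm{ICov}_M(\nu\|\mu) = \int_M^\infty \nu\bigl(\tfrac{d\nu}{d\mu} > t\bigr)\,dt$, or something comparable such as $\mathbb{E}_\nu[(w - M)_+]$ with $w = d\nu/d\mu$; these two coincide by the layer-cake formula. The condition $M^{-1}\,\mathrm{ICov}_M \le \epsilon$ then says that truncating $w$ at level $M$ loses only an $\epsilon$ fraction of the mass, because $\mathbb{E}_\mu[(w-M)_+] \le \mathbb{E}_\nu[(w-M)_+]/M$ or a similar comparison holds. I would prove the two directions separately.

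\textbf{Upper bound.} The estimator is a truncated importance-sampling mean, built from $\lambda$ alone even though $M$ refers to the normalized ratio. Draw $X_1,\dots,X_n \sim \mu$. The goal is $\hat Z$ with $|\hat Z / Z - 1| \le \epsilon$, which is the same as estimating $\mathbb{E}_\mu[w] = 1$, where $\lambda = Z w$.
\begin{enumerate}
\item Obtain a crude constant-factor estimate $\tilde Z$ of $Z$, for example via a median of means or a quantile of $\lambda(X_i)$. This calibrates the threshold $\tau \approx M\tilde Z$.
\item Form $\hat Z = \frac1n\sum_i \min(\lambda(X_i),\tau)$.
\item Bound the bias: it equals $Z\,\mathbb{E}_\mu[(w-M)_+]$, which is at most $Z\epsilon$ by the coverage condition.
\item Bound the variance: $\mathrm{Var}(\min(w,M)) \le M\,\mathbb{E}_\mu[\min(w,M)] \le M$. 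Chebyshev then gives relative error $\sqrt{M/n}$, which yields only $n \sim M/\epsilon^2$.
\end{enumerate}
To reach $M/\epsilon$ I must exploit the integrated structure. The variance of the truncated variable is $\int_0^M 2t\,\mu(w>t)\,dt \le 2\int_0^M \nu(w>t)\,dt$, and the coverage assumption controls the tail portion. Alternatively the natural claim may be about a different parametrization: the choice $M$ with $M^{-1}\mathrm{ICov}_M \le \epsilon$ could make $M$ itself already scale like $1/\epsilon$ relative to the ratio. I would expect a multiplicative Bernstein/Freedman bound for nonnegative bounded variables. Such a bound gives deviation of order $\sqrt{M\,\mathbb{E}/n} + M/n$, so $n \asymp M/\epsilon$ suffices once the variance proxy is bounded by $\epsilon M$ times the mean using the ICov condition. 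This is the main obstacle: showing that $\mathbb{E}_\mu[\min(w,M)^2] \lesssim \epsilon M$, or reinterpreting $\epsilon$ correctly. I would first try splitting the truncated second moment at level $\epsilon M$ and bounding the upper piece via $\mathrm{ICov}$.

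\textbf{Lower bound.} I would use a two-point construction. Take $\mu$ and two targets $\nu_0,\nu_1$ that agree except on a set $A$ with $\mu(A) \approx 1/M \cdot \epsilon$ or so, on which $\lambda_1$ is larger, so that $Z_1 = (1+3\epsilon)Z_0$. The parameters must be chosen so that both $\nu_0$ and $\nu_1$ satisfy the coverage condition at level $M$. With $n \ll M/\epsilon$ samples, no point lands in $A$ with constant probability. On that event the observed $\lambda$ values are identical under both hypotheses, so by Le Cam no estimator distinguishes $Z_0$ from $Z_1$ at multiplicative accuracy $\epsilon$. The check that the spike keeps $\mathrm{ICov}_M \le \epsilon M$ fixes $\mu(A) \asymp \epsilon/M$ with ratio $\asymp M$ on $A$.
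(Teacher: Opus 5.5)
There is a genuine gap, and it comes from guessing the definition of integrated coverage. \Cref{def:coverage} sets $\icov[M]{\nu}{\mu}=\int_0^M\covm[t]{\nu}{\mu}\,dt=\int_0^M\nu(w\ge t)\,dt=\mathbb{E}_\nu[\min(w,M)]$, with your $w=\nicefrac{d\nu}{d\mu}$. That is the integral \emph{below} $M$, not the tail $\mathbb{E}_\nu[(w-M)_+]$.

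With your tail version, the sufficiency half of the statement is false. Take the instance of \Cref{prop:superquadratic_lb} ($\nu=\delta_1$, $\mu_p(1)=p\in[\nicefrac{1}{4},\nicefrac{1}{2}]$). There $w\le 4$ $\nu$-a.s., so $\mathbb{E}_\nu[(w-M)_+]=0$ at $M=4$ for every $\epsilon$. Your reading would then claim $O(\epsilon^{-1})$ samples suffice, yet $\Omega(\epsilon^{-2})$ are needed.

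This is exactly why your ``main obstacle'' $\mathbb{E}_\mu[\min(w,M)^2]\lesssim\epsilon M$ cannot be derived from your hypothesis. A condition on the mass above $M$ says nothing about the variance of the bulk. Your proposed split at level $\epsilon M$ leaves the range $(\epsilon M,M]$ uncontrolled: closing it would need $\nu(w>\epsilon M)\lesssim\epsilon$, which fails in that example. As written, the upper bound is not proved.

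With the paper's definition the obstacle disappears, and your own step~4 already contains the fix. You showed $\mathbb{E}_\mu[\min(w,M)^2]=\int_0^M 2t\,\mu(w>t)\,dt\le 2\int_0^M\nu(w>t)\,dt$, and the right-hand side is precisely $2\,\icov[M]{\nu}{\mu}\le 2\epsilon M$. (\Cref{lem:variance_ub} proves the same bound for $w\,\mathbb{I}[w\le M]$ without the factor $2$, by passing to $\nu$ and integrating tails.) The bias is also immediate, since coverage is nonincreasing: $\nu(w\ge M)=\covm[M]{\nu}{\mu}\le M^{-1}\icov[M]{\nu}{\mu}\le\epsilon$. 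Bernstein with variance $\lesssim\epsilon M$ and range $M$ then gives $n\gtrsim M\log(1/\delta)/\epsilon$, as you anticipated.

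The paper's estimator is simpler than yours. It takes a plain median of means of the $\lambda(X_i)$ and truncates only in the analysis. The excess $\frac{1}{m}\sum_i w(X_i)\,\mathbb{I}[w(X_i)>M]$ has mean at most $\covm[M]{\nu}{\mu}$, so Markov controls it with constant probability, and \Cref{lem:median_of_means} boosts that. No pilot $\tilde{Z}$, sample splitting, or knowledge of $M$ is needed.

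Your lower-bound spike is the paper's construction: $\mu(A)\asymp\epsilon/M$, ratio $\asymp M$ on $A$, and no sample lands in $A$ when $n\ll M/\epsilon$. Your gap $Z_1=(1+3\epsilon)Z_0$ correctly makes the two $(1\pm\epsilon)$ windows disjoint. However, the coverage check must be redone with the correct definition, where it is no longer vacuous. The bulk contributes about $1$ and the spike about $3\epsilon\,w_A$ to $\icov[M]{\nu}{\mu}$. So you need the spike height $w_A$ to be a small constant multiple of $M$ (say $w_A\le M/4$), together with $M\gtrsim 1/\epsilon$.
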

This result not only provides a sharp characterization of the sample complexity of partition function estimation in terms of natural information theoretic quantities, but also unifies, generalizes, and applies several prior results on importance sampling and mean estimation \citep{chatterjee2018sample,devroye2016sub}.

The sharpness of our bound can be further interpreted in terms of more familiar notions of discrepancy between $\nu$ and $\mu$, \emph{$f$-divergences} \citep{csiszar2011information,polyanskiywu_itbook}.
Recall that $f$-divergence (\Cref{def:fdiv}) is defined as the expectation of a convex function $f$ of the density ratio, and generalizes notions such as total variation, KL-divergence, and Renyi divergences.
Indeed, an informal version of our main result stated in terms of $f$-divergences is as follows:
\begin{theorem}[Informal statement of \Cref{thm:ub_fdiv,thm:lb}]\label{thm:informal}
    Given two probability measures $\mu, \nu$ on $\cX$ and an $f$-divergence $\divf{\nu}{\mu}$ between them, there is a function $\gamma_f$ depending on $f$ such that
    \begin{align}
        n = \Theta\left( \left[ \gamma_f\left( \Theta(1) \cdot \epsilon^{-1} \cdot \divf{\nu}{\mu} \right) \right] \vee  \left[ \chisq{\nu}{\mu} \cdot \epsilon^{-2} \right] \right)
    \end{align}
    samples are necessary and sufficient to estimate $Z$ to multiplicative accuracy $(1 \pm \varepsilon)$ with constant probability, where $\chisq{\cdot}{\cdot}$ is the $\chi^2$-divergence.
\end{theorem}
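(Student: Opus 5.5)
The plan has two halves: an upper bound obtained by reducing to the coverage result of \Cref{thm:informal_coverage}, and a lower bound built from two separate hard instances. Write $w = \nicefrac{d\nu}{d\mu}$, so that $\lambda = Z w$ and $\ee_\mu[w]=1$. Estimating $Z$ to accuracy $(1\pm\epsilon)$ is then the same as estimating the mean of $w$ under $\mu$ to multiplicative accuracy.

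\textbf{Upper bound.} I would use a truncated (median-of-means) empirical mean of $\lambda$ at threshold $M$. The error splits into a bias term and a variance term.
\begin{itemize}
\item The bias is $\ee_\mu[w\,\ind{w>M}] = \nu(w>M)$.
\item The variance is at most $\ee_\mu[w^2\wedge M w]$, and this is $\le \min\{\chisq{\nu}{\mu}+1,\, M\}$.
\end{itemize}
Both terms are governed by the coverage profile $\nu(w>t)$, and $\icov[M]{\nu}{\mu}$ integrates that profile. Matching bias $\le\epsilon$ and variance$/n\le\epsilon^2$ should recover $n \gtrsim M/\epsilon$ together with the $\chi^2$ term.

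To pass to $f$-divergences I would use the conversion
\begin{align}
\nu(w>t) \le \frac{\ee_\mu[f(w)\ind{w>t}]}{t\cdot \inf_{s\ge t} f(s)/s},
\end{align}
which is valid for convex $f$ with $f(1)=0$ once $t$ is large. This bounds the coverage profile, and hence $\icov[M]{\nu}{\mu}$, by roughly $\divf{\nu}{\mu}$ divided by $f(M)/M$. Choosing $M$ so that this is at most $\epsilon$ defines $\gamma_f$ as essentially the inverse of $t\mapsto f(t)/t$, applied to $\Theta(1)\cdot\epsilon^{-1}\divf{\nu}{\mu}$. The sample size becomes $n\asymp M/\epsilon \vee \chisq{\nu}{\mu}/\epsilon^2$. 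The $\chi^2$ term enters because below the truncation level the variance is controlled by $\chi^2$: concretely, Chebyshev applied to the truncated estimator, with a constant-probability guarantee.

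\textbf{Lower bound.} I would use two-point testing arguments, one for each term of the maximum.
\begin{itemize}
\item For the $\gamma_f$ term, take $\mu$ and a perturbed $\mu'$ that agree except that $\mu'$ places mass $p\approx \epsilon/M$ on a point where $w=M$. This shifts $Z$ by a factor $1+\Theta(\epsilon)$, while costing divergence about $p\,f(M) \approx \epsilon f(M)/M$. Setting this equal to $\divf{\nu}{\mu}$ gives $M = \gamma_f(\cdot)$. The two instances are indistinguishable unless a sample hits the atom, which requires $n\gtrsim 1/p = M/\epsilon$.
\item For the $\chi^2$ term, use a Le Cam argument with two Bernoulli-type ratios whose means differ by a factor $1\pm\epsilon$ and whose variance equals the $\chi^2$. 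The KL between the product measures is $\approx n\epsilon^2/\chi^2$, forcing $n\gtrsim \chi^2/\epsilon^2$.
\end{itemize}

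\textbf{Main obstacle.} The delicate step is making the $f$-divergence conversion tight for general $f$, so that the $\gamma_f$ obtained in the upper bound matches the construction in the lower bound up to constants inside the argument. This is where the growth rate of $f$, and the resulting phase transitions (superquadratic $f$ versus subquadratic, heavy-tailed $f$), enter. I would first handle $f(t)=t^p$ to calibrate the argument, then regularize a general $f$ by its convex minorant of $f(t)/t$.
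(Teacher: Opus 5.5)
\textbf{Overall.} Your architecture matches the paper's: median-of-means, truncation at level $M$, and the rare-atom and Le Cam two-point lower bounds of \Cref{thm:lb} and \Cref{prop:superquadratic_lb}. Your variance proxy $\ee_\mu[w^2\wedge Mw]$ in fact equals $\ee_\nu[w\wedge M]=\icov[M]{\nu}{\mu}$. That identity, not the bound $\min\{\chisq{\nu}{\mu}+1,M\}$ (which only gives $M/\epsilon^2$), is what yields $n\gtrsim M/\epsilon$ via Chebyshev once $\icov[M]{\nu}{\mu}\lesssim\epsilon M$.

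\textbf{The gap: converting $f$-divergence to integrated coverage.} The gap is the step you flag yourself, turning $\divf{\nu}{\mu}$ into a bound on $\icov[M]{\nu}{\mu}/M$. Bounding the coverage profile pointwise by $\nu(w>t)\lesssim \divf{\nu}{\mu}\,t/f(t)$ and integrating gives $\divf{\nu}{\mu}\int_c^M t/f(t)\,dt$. This can exceed the needed $\divf{\nu}{\mu}\,M^2/f(M)$ by a $\log M$ factor when $f$ grows nearly quadratically. For $f(t)\asymp t^2$ the integral is $\asymp\log M$ while $M^2/f(M)\asymp 1$. For $f(t)\asymp t^2/\log t$ the loss is a $\log(\nicefrac{\divf{\nu}{\mu}}{\epsilon})$ factor on $M$, which cannot be absorbed into the $\Theta(1)$ inside $\gamma_f$. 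The missing idea is \Cref{lem:fdiv_second_moment}: do not integrate coverage bounds; bound the truncated second moment directly. Since $f(t)/t^2$ is non-increasing on $[c,\infty)$ (the hypothesis of \Cref{thm:ub_fdiv}), $w^2\le \frac{M^2}{f(M)}f(w)$ on $\{c<w\le M\}$. Hence $\ee_\mu[w^2\ind{w\le M}]\le c^2+\frac{M^2}{f(M)}\divf{\nu}{\mu}$, and adding $M\,\nu(w>M)\le M^2\divf{\nu}{\mu}/f(M)$ gives $\icov[M]{\nu}{\mu}/M\lesssim c^2/M+M\divf{\nu}{\mu}/f(M)$. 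The regularity that matters is monotonicity of $f(t)/t^2$, not a convex minorant of $f(t)/t$.

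\textbf{Where the $\epsilon^{-2}$ term comes from.} The lemma above also corrects your account of the second term. Forcing $c^2/M\le\epsilon/2$ gives $M\gtrsim c^2/\epsilon$, hence $n\gtrsim M/\epsilon\ge c^2/\epsilon^2$. So the $\epsilon^{-2}$ term in \Cref{thm:ub_fdiv} is a constant times $\epsilon^{-2}$, not $\chisq{\nu}{\mu}\,\epsilon^{-2}$. It is unavoidable, since $\icov[M]{\nu}{\mu}\ge\nicefrac12$ for $M\ge1$. Charging it to $\chi^2$ ``below the truncation level'' makes your bound vacuous exactly in the heavy-tailed regimes the theorem targets: KL and R\'enyi with $\alpha<2$, where $\chisq{\nu}{\mu}=\infty$. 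And when $\chisq{\nu}{\mu}<\infty$, untruncated Chebyshev already gives $n\gtrsim\chisq{\nu}{\mu}/\epsilon^2$, leaving no role for $f$.

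\textbf{Two smaller fixes.} First, your displayed conversion has a spurious factor $t$. For $t\ge1$, $t\inf_{s\ge t}f(s)/s=f(t)$, so as written it is Markov's bound for $\mu(w>t)$, and it is false for $\nu(w>t)$. The correct statement is $\nu(w>t)\le \frac{t}{f(t)}\,\ee_\mu[f(w)\ind{w>t}]$, which is what you then use in words. Second, you cannot truncate $\lambda$ at level $M$ without knowing $Z$, because $w$-level $M$ is $\lambda$-level $ZM$. The paper keeps the estimator untruncated and truncates only in the analysis. It discards the nonnegative tail for the lower deviation, and for the upper deviation applies Markov together with $\covm{\nu}{\mu}\le\icov[M]{\nu}{\mu}/M$.
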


As an application of these results, we provide a sharper finite-sample analysis of importance sampling and self-normalized importance sampling (SNIS) estimators in terms of the target function and the distributions $\mu$ and $\nu$ (\Cref{thm:is,thm:snis}) and provides a unified perspective on prior results \citep{owen2013monte,chatterjee2018sample}.  
In particular, our results could help inform the design of proposal distributions for importance sampling that minimize the required sample complexity for a given set of target functions that is more flexible than the classical variance-minimizing proposal distribution \citep{owen2013monte,llorente2025optimalityimportancesamplinggentle}.

We also make an interesting connection to the sample complexity of sampling from $\nu$ given samples from $\mu$ given access to unnormalized density ratios $\lambda$. 
Generalizing results by \cite{block2023sample,flamich2024some}, we show that we provide a tight characterization of the sample complexity of sampling in terms of coverage and $f$-divergence as follows:

\begin{theorem}[Informal version of \Cref{prop:sampling}]
    \label{prop:sampling_inf}
    Let $\mu, \nu$ be probability measures on $\cX$. Let $M$ be such that $\covm[M]{\nu}{\mu} \epsilon$. Then, $ n = \tilde{\Theta}\left( M \cdot \log\left( \nicefrac 1\epsilon \right) \right)$ are necessary and sufficient to produce $\epsilon$-approximate samples from $\nu$ in total variation distance given samples from $\mu$ and access to an unnormalized density ratio. 
    In particular, this holds for $n \gtrsim \log\left( \nicefrac 1\epsilon \right) \cdot \gamma_f\left(\Theta(1) \cdot \nicefrac{ \divf{\nu}{\mu}}{\epsilon} \right)$. 
\end{theorem}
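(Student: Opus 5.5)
The plan is to reduce sampling to rejection sampling with a truncated acceptance probability, and to prove the lower bound with a hiding argument. Throughout, I read the hypothesis as $\covm[M]{\nu}{\mu} \le \epsilon$, where the coverage profile is $\covm[M]{\nu}{\mu} = \nu\left( \nicefrac{d\nu}{d\mu} > M \right)$ (\Cref{def:coverage}).

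\textbf{Upper bound.} The difficulty is that we only know $\lambda = Z \cdot \nicefrac{d\nu}{d\mu}$, with $Z$ unknown, so the threshold $M$ must be placed on the $\lambda$ scale.

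1. Spend $O(M/\epsilon')$ samples from $\mu$ to estimate $Z$ to constant multiplicative accuracy, using \Cref{thm:ub_coverage_coverage}. Alternatively, use a median-of-means estimate of $\int (\lambda \wedge \tau)\, d\mu$ and search over $\tau$ by doubling.

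2. With the estimate $\hat Z \in [Z/2, 2Z]$, run truncated rejection sampling. Draw $X_1,\dots,X_n \sim \mu$ and accept $X_i$ with probability $p(X_i) = \min\left(1, \nicefrac{\lambda(X_i)}{2M\hat Z}\right)$. Output the first accepted sample, or an arbitrary point if none is accepted.

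3. The law of one accepted draw has density proportional to $\min\left(\nicefrac{d\nu}{d\mu}, c M\right)$ with respect to $\mu$. Its total-variation distance to $\nu$ is therefore at most $\nu\left(\nicefrac{d\nu}{d\mu} > cM\right) \le \epsilon$. One may need $\covm[cM]{}{}$ here rather than $\covm[M]{}{}$; the profile is monotone in $M$, so this only changes constants.

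4. The acceptance probability per draw is
\begin{align}
\ee_\mu[p] \gtrsim \frac{1 - \covm[M]{\nu}{\mu}}{M} \ge \frac{1}{2M}.
\end{align}
Hence $n = O(M \log(1/\epsilon))$ draws suffice for the failure probability of acceptance to be at most $\epsilon$.

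5. Combine the estimation error from step 1, the truncation bias from step 3, and the failure probability from step 4 with a union bound. This gives total error $O(\epsilon)$.

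\textbf{The $f$-divergence statement.} Apply the link between coverage and $f$-divergence already used for \Cref{thm:ub_fdiv}. By Markov's inequality in $f$-form, $\covm[M]{\nu}{\mu} \le \epsilon$ holds once $M \gtrsim \gamma_f\left(\Theta(1)\cdot \divf{\nu}{\mu}/\epsilon\right)$.

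\textbf{Lower bound.} Here I would adapt the construction behind \Cref{thm:lb_coverage}. Take $\mu$ and $\nu$ such that $\nu$ places mass about $2\epsilon$ on a set $A$ with $\mu(A) \approx 2\epsilon/M$; this makes the coverage at level $M/2$ larger than $\epsilon$. Then hide $A$ among roughly $M$ disjoint candidate sets that are indistinguishable, for instance by randomizing over a uniform hidden location.

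With $n \ll M$ draws from $\mu$, the sampler sees no point of $A$ with probability close to $1$. Conditioned on that event, its output carries no information about which candidate is $A$, so it places at most $O(\epsilon/M)$-scaled mass on the true $A$. Its TV distance to $\nu$ is then at least about $2\epsilon - o(\epsilon) > \epsilon$.

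The $\log(1/\epsilon)$ factor in the lower bound will need a separate instance. Let $\nicefrac{d\nu}{d\mu} = M$ on a set of $\nu$-mass $1 - \epsilon$. A sampler that sees no point of that set, an event of probability $(1 - 1/M)^n$, incurs constant error. This forces $n \gtrsim M\log(1/\epsilon)$.

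\textbf{Main obstacle.} The hardest step is the upper bound's dependence on the unknown $Z$: the threshold $M$ must be calibrated in $\lambda$-units. I would first try a crude constant-factor estimate of $Z$ obtained from the truncated mean. Its robustness relies on coverage controlling the mass above the threshold, which is the same mechanism as in \Cref{thm:ub_coverage_coverage}, and it costs only $O(M)$ samples at constant accuracy.
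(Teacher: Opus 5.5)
Your proof is correct but differs from the paper's on both halves.

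\textbf{Sufficiency.} The paper runs $A^*$ sampling: with Poisson arrival times $N_i$ it outputs $X_{\jhat}$, where $\jhat=\argmin_{i\le n}N_i/\lambda(X_i)$. This argmin is invariant to rescaling $\lambda$, so $Z$ is never estimated. Coupling with the infinite-horizon minimizer $K$, for which $X_K\sim\nu$ exactly, gives $\tv\left(\nu_n,\nu\right)\le\pp(K>n)\le\covm{\nu}{\mu}+e^{-t}+\pp\left(\pois(tM)>n\right)$. Taking $t=\log(3/\epsilon)$ then gives $n\ge 2M\log(3/\epsilon)$ from a single batch.

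You instead spend a separate batch calibrating the threshold in $\lambda$-units and then run truncated rejection sampling. This is more elementary. It also makes explicit that sampling needs only a constant-factor estimate of $Z$, at cost $O(M\log(1/\epsilon))$ rather than the $M_\epsilon/\epsilon$ of $(1\pm\epsilon)$ counting. That gives a concrete view of the separation in \Cref{ssec:est_sampling}.

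You should write out two small points. First, \Cref{thm:ub_coverage_coverage} is phrased via integrated coverage, so you need the bridge $\icov[M']{\nu}{\mu}/M'\le M/M'+\covm{\nu}{\mu}$ for $M'\ge M$; taking $M'=16M$ suffices once $\covm{\nu}{\mu}\le 1/16$. Second, the calibration must fail with probability only $O(\epsilon)$, and that is where its $\log(1/\epsilon)$ factor comes from.

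\textbf{Necessity.} The paper gives no argument of its own here. It defers to the $f$-divergence lower bounds of \citet{block2023sample}, which are tight up to logarithms. Your instances are instead parametrized directly by coverage, which is a genuine addition.

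You should fix the output model explicitly. In the paper's model the sampler returns one of its draws. There a fixed instance already gives $\nu_n(A)\le n\mu(A)$, so the hiding is unnecessary, and your second instance validly yields $n\gtrsim M\log(1/\epsilon)$.

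If arbitrary outputs were allowed, your fixed second instance would prove nothing, because a sampler that knows $B$ could output a point of $B$ without ever seeing one. Your hidden first instance still gives $\Omega(M)$ in that model. Use about $M/\epsilon$ candidate sets, so that a blind guess puts only $O(\epsilon/M)$ mass on $A$. Since $\Omega(M)$ is all the $\tilde\Theta$ claim requires, the conclusion stands either way.
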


    This result shows that sampling is strictly easier than estimation, in conceptual constrast to settings (such as self-reducibility) where sampling is often approximately has the same complexity as estimation \citep{jerrum2003counting,vigoda2024sampling}. 
    We provide a detailed comparison of the sample complexity of sampling and estimation in \Cref{ssec:est_sampling}.

We begin in the next section by formally setting up the problem of interest and introducing and defining the key quantitites used throughout the paper.  We then proceed to state our main results in \Cref{sec:main_results}, including both upper and lower bounds on the sample complexity of partition function estimation, as well as bounds on approximate sampling.  We proceed in \Cref{sec:snis} to discuss applications of our results to importance sampling and self-normalized importance sampling.  We then provide a high-level overview of the proof techniques used to establish our main upper bounds in \Cref{sec:proof_ideas}, deferring full proofs of all results to the appendix.  We conclude in \Cref{sec:discussion} with a discussion of open questions and future directions.  Our proofs introduce several novel technical tools of independent interest, including a connection between $f$-divergences and integrated coverage, a variance bound for truncated density ratios (\Cref{lem:variance_ub}), and a strong generalization of Paley-Zygmund (\Cref{lem:gen_paley_zygmund}).

\section{Problem Setup and Preliminaries}\label{sec:prelims}

 We consider a measurable space $\cX$ and base probability measure $\mu$ over $\cX$.  
 We will also consider a target distribution $\nu$ over $\cX$ from which we would either like to produce a sample or to estimate its normalizing constant given access to samples from $\mu$ and an unnormalized density ratio $\lambda: \cX \to \rr_{\geq 0}$ defined using the Radon-Nikodym derivative of $\nu$ with respect to $\mu$ as $\lambda(x) = Z \cdot \nicefrac{d\nu}{d\mu}(x)$ for some unknown normalizing constant $Z$. 
We are primarily interested in understanding the difficulty of obtaining an estimate $\Zhat$ of the normalizing constant $Z$; more precisely, we ask the following question:
    \emph{Given access to i.i.d. samples $X_1, \ldots, X_n \sim \mu$ and the ability to evaluate an unnormalized density ratio $\lambda$ on each sample, how large must $n$ as a function of $\mu $, $\nu$, $\epsilon$ and $\delta$  be to ensure  that there exists an estimator $\Zhat = \Zhat(X_1, \ldots, X_n, \lambda(X_1), \ldots, \lambda(X_n))$ satisfies $(1 - \epsilon) Z \leq \Zhat \leq (1 + \epsilon) Z$ with probability at least $1-\delta$?}

The rate at which such an estimate can be obtained depends on the similarity between the two distributions $\mu$ and $\nu$.
For example, if $\mu$ and $\nu$ had disjoint supports, then no finite number of samples from $\mu$ would suffice to estimate $Z$.  
The main focus of this work is provide a sharp characterization of the sample complexity in terms of quantitative measures of similarity between $\mu$ and $\nu$.

The key notion measuring the divergence between $\mu$ and $\nu$ is that of \emph{coverage}, considered by \cite{chatterjee2018sample,chen2025coverage}.  
Conceptually, coverage measures the mass that $\nu$ places on regions where the density ratio $\frac{d\nu}{d\mu}$ is large and thus good coverage implies that $\mu$ places sufficient mass in high-density regions of $\nu$.  Formally, coverage is defined as follows.

\begin{definition}[Coverage and Integrated Coverage] 
    \label{def:coverage}
    For two probability measures $\mu, \nu$ on $\cX$ and $M > 0$, the \emph{coverage} function at $M$ is defined to be 
    \begin{align}
        \covm{\nu}{\mu} = \nu\left( \left\{ x \in \cX : \nicefrac{d\nu}{d\mu}(x) \geq M \right\} \right),
    \end{align}
    where $\nicefrac{d\nu}{d\mu}$ is the Radon-Nikodym derivative of $\nu$ with respect to $\mu$.
    Further, we introduce a notion that we term the integrated coverage profile as 
    \begin{align}
        \icov{\nu}{\mu}  = \int_{0}^{M} \covm[t]{\nu}{\mu} dt.%
    \end{align}
\end{definition}
We first note several basic properties of coverage and integrated coverage.  First,
it is immediate that $\covm{\nu}{\mu} \leq 1$ for all $M > 0$, and thus $\icov{\nu}{\mu} \leq M$; moreover, $\covm{\nu}{\mu}$ is clearly non-increasing in $M$.  Furthermore, an easy calculation implies that $M \mapsto \nicefrac{\icov[M]{\nu}{\mu}}M$ is also non-increasing in $M$ and whenever $\nu \ll \mu$, this quantity tends to zero as $M \to \infty$.  We will see that this latter map is fundamental to characterizing the sample complexity of partition function estimation.

The notion of coverage has previously been used to analyze the sample complexity of importance sampling \citep{chatterjee2018sample}, the sample complexity of rejection sampling \cite{block2023sample} and, interestingly, for understanding the efficacy of pretrained language models \cite{chen2025coverage,huang2025self,huang2025best}. 
The notion of integrated coverage, introduced here, is a more refined measure of the relationship between $\mu$ and $\nu$ that allows for sharper characterizations of sample complexity.

A more standard notion of discrepancy between two measures is the $f$-divergence~\citep{csiszar2011information,polyanskiywu_itbook}, which is defined as follows.
\begin{definition}\label{def:fdiv}
    Let $f: \rr_{\geq 0} \to \rr_{\geq0}$ be a convex function satisfying $f(1) = f'(1) = 0$. For two probability measures $\mu, \nu$ on $\cX$, the $f$-divergence between $\nu$ and $\mu$ is defined to be
    \begin{align}\label{eq:fdiv}
        \divf{\nu}{\mu} = \ee_{X \sim \mu}\left[ f\left( \frac{d\nu}{d\mu}(X) \right) \right] + \nu\left( \frac{d \nu}{d\mu} = \infty \right) \cdot f'(\infty).
    \end{align}
\end{definition}
The $f$-divergences generalize many well-known notions of distance between probability measures, including total variation distance (where $f(t) = \nicefrac{1}{2} \abs{t - 1}$), the Kullback-Leibler (KL) divergence (where $f(t) = t \log t$), and (a monotone transformation of) the $\alpha$-Renyi divergences (where $f(t) = t^{\alpha} - \alpha t$ for $\alpha > 0$, $\alpha \neq 1$).  Of special note is the $\alpha$-Renyi divergence with $\alpha = 2$, which corresponds to the $\chi^2$ divergence, defined as $\chi^2(\nu \| \mu) = \ee_{X \sim \mu}\left[ \left( \nicefrac{d\nu}{d\mu}(X) - 1 \right)^2 \right]$.

Conceptually, a bounded $f$-divergence controls the tail behavior of the density ratio $\frac{d\nu}{d\mu}$ with faster growing $f$ leading to stronger control.  As we show in \Cref{sec:lem_cov_fdiv_proof}, this intuition is made precise in several ways, most critically in the observation that $\covm{\nu}{\mu} \leq \nicefrac{M \cdot \divf{\nu}{\mu}}{M}$ for any $f$-divergence; thus, bounded $f$-divergence implies coverage decays quickly.  As we shall see, our sample complexity results involving the $f$-divergences  will depend on the growth rate of $f$ through that which we term the $\gamma_f$ function, defined to be the inverse of the map $t \mapsto \nicefrac{f(t)}{t}$ on $[1, \infty)$, i.e.,  
\begin{align}\label{eq:gammaf}
    \gamma_f(M) = \inf \left\{ t \geq 1 : \nicefrac{f(t)}{t} \geq M \right\}.
\end{align} 
The growth rate of $f$ determines the behavior of $\gamma_f$; for example, 
when $f$ is \emph{superlinear}, i.e., $\lim_{t \to \infty} \nicefrac{f(t)}{t} = \infty$, the function $\gamma_f$ is well-defined on all of $\rr_{\geq 0}$; examples of superlinear $f$ include those corresponding to KL divergence (where $\gamma_f(M) \asymp \exp(M)$) and $\alpha$-Renyi divergences with $\alpha > 1$ (where $\gamma_f(M) \asymp M^{\frac{1}{\alpha - 1}}$).   We will also have occasion to distinguish between \emph{superquadratic} $f$, where $\lim_{t \to \infty} \nicefrac{f(t)}{t^2} = \infty$ (e.g., $\alpha$-Renyi divergences with $\alpha > 2$) and those that are not (e.g., KL divergence and $\alpha$-Renyi divergences with $1 < \alpha \leq 2$ such as $\chi^2$).  In the case that $f$ is \emph{linear}, we let $\gamma_f(M) = \infty$ for $M > \sup_{t} \nicefrac{f(t)}{t}$; note that by the assumption of convexity, $f$ cannot be sublinear.
These distinctions will control the precise rates in our sample complexity results.

\section{Main Results}\label{sec:main_results}

We now present our main results regarding the sample complexity of estimating the partition function given access to samples from the base distribution and an unnormalized density ratio.

\subsection{Upper Bounds for Estimation}

We begin by stating our main upper bound on the sample complexity of estimating the partition function in terms of the integrated coverage profile defined in \Cref{def:coverage}.

\begin{theorem}\label{thm:ub_coverage_coverage}
    Let $\mu, \nu$ be probability measures on $\cX$.  Suppose that we have access to i.i.d. samples $X_1, \ldots, X_n \sim \mu$ and an unnormalized density ratio $\lambda$.  
    Let $M_{\epsilon}$ be such that $ M^{-1} \cdot \icov{\nu}{\mu} \leq \epsilon/4 $.  
    Then there is an estimator $\Zhat$ such that as long as
    \begin{align}\label{eq:sample_complexity_ub_cov}
        n \gtrsim  \nicefrac{ M_{\epsilon} \cdot \log\left( \nicefrac 1\delta \right)}{\epsilon} ,
    \end{align}
    then with probability at least $1 - \delta$ it holds that $(1 - \epsilon) \cdot Z \leq \Zhat \leq (1 + \epsilon) Z$.
\end{theorem}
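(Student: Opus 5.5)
The plan is to analyze a truncated empirical mean of $\lambda$, with the truncation level chosen from the data so that the unknown scale $Z$ is never needed. Write $w = \nicefrac{d\nu}{d\mu}$, so that $\lambda = Z w$. Fix $M = M_\epsilon$, let $k = \lceil C \log(\nicefrac 1\delta) \rceil$, and let $\lambda_{(k)}$ be the $k$-th largest of $\lambda(X_1), \ldots, \lambda(X_n)$. The estimator is
\begin{align}
\Zhat = \frac{1}{n}\sum_{i=1}^n \lambda(X_i) \wedge \lambda_{(k)} .
\end{align}
The analysis compares $\Zhat$ to the oracle estimator $\bar Z = \frac1n \sum_i \lambda(X_i)\wedge (ZM)$. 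This oracle is a bounded, $[0, ZM]$-valued average whose bias and variance are both controlled by $\icov{\nu}{\mu}$.

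\emph{Step 1: the oracle estimator.} For the bias, $Z - \ee[\lambda\wedge ZM] = Z\,\ee_\mu[(w-M)_+] \le Z\,\ee_\mu[w\,\ind{w\ge M}] = Z\,\covm{\nu}{\mu}$. Since coverage is non-increasing, $M\,\covm{\nu}{\mu} \le \icov{\nu}{\mu}$, so the bias is at most $Z\epsilon/4$. For the variance, use the layer-cake formula together with $\mu(w\ge t)\le \nicefrac{\nu(w\ge t)}{t}$ (Markov under $\nu$-weighting):
\begin{align}
\ee_\mu[(w\wedge M)^2] = \int_0^M 2t\,\mu(w\ge t)\,dt \le 2\int_0^M \covm[t]{\nu}{\mu}\,dt = 2\,\icov{\nu}{\mu} \le \frac{\epsilon M}{2}.
\end{align}
Bernstein's inequality for variables in $[0,ZM]$ with variance at most $Z^2\epsilon M/2$ then gives $|\bar Z - \ee \bar Z| \lesssim Z\bigl(\sqrt{\epsilon M\log(1/\delta)/n} + M\log(1/\delta)/n\bigr)$. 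This is at most $Z\epsilon/4$ once $n \gtrsim M\log(1/\delta)/\epsilon$, so $|\bar Z - Z| \le Z\epsilon/2$.

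\emph{Step 2: the data-driven threshold.} Let $N = \#\{i : \lambda(X_i) > ZM\}$. By Step 1's Markov bound, $\mu(w \ge M)\le \nicefrac{\covm{\nu}{\mu}}{M} \le \nicefrac{\epsilon}{4M}$, so $\ee N \le \nicefrac{n\epsilon}{4M}$. Take $n = C' M\log(1/\delta)/\epsilon$ (larger $n$ is handled by monotonicity or by using the first $C' M\log(1/\delta)/\epsilon$ samples, or by letting $k$ scale with $n\epsilon/M$). Then $\ee N \le (C'/4)\log(1/\delta)$, and a Chernoff bound gives $N < k$ with probability $1-\delta$ when $C \gg C'$.

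On this event $\lambda_{(k)} \le ZM$, and hence $\Zhat \le \bar Z$. In the other direction, the samples where the two truncations differ are exactly those with $\lambda(X_i) > \lambda_{(k)}$. There are fewer than $k$ of them, and each differs by at most $ZM$. Therefore $\Zhat \ge \bar Z - kZM/n \ge \bar Z - Z\epsilon/4$ for $C'$ large relative to $C$.

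The two constraints on $C$ and $C'$ pull in opposite directions: we need $C \gg C'$ for the Chernoff step but $k/n \le \epsilon/(4M)$, i.e. $C \lesssim C'$-ish, for the deviation step. Reconciling them is the main obstacle. I would resolve it by choosing $k = \lceil c\, n\epsilon/M\rceil$ directly, with $c$ small. Then $kZM/n \le Z\epsilon/4$ automatically, while $\ee N \le n\epsilon/(4M) \le k/(4c) \cdot c$, i.e. $\ee N \le k/4$ once $c \le 1$. A multiplicative Chernoff bound then gives $\pp(N\ge k) \le e^{-\Omega(k)} \le \delta$ exactly when $n\epsilon/M \gtrsim \log(1/\delta)$, which is \eqref{eq:sample_complexity_ub_cov}. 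Combining Steps 1 and 2 by a union bound yields $|\Zhat - Z| \le \epsilon Z$ with probability at least $1-2\delta$, and rescaling $\delta$ completes the proof.
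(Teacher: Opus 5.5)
Your trimmed-mean argument is sound up to the last step, but the last step fails as written. The bias bound $Z\covm{\nu}{\mu}\le Z\epsilon/4$, the clipped second-moment bound $\ee_\mu[(w\wedge M)^2]\le 2\icov{\nu}{\mu}$, and the sandwich $\bar Z-(k-1)ZM/n\le\Zhat\le\bar Z$ on $\{N<k\}$ are all correct. The problem is the constant reconciliation.

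From $k\ge c\,n\epsilon/M$ you only get $\ee N\le n\epsilon/(4M)\le k/(4c)$. This is at most $k/4$ when $c\ge 1$, not when $c\le 1$. For small $c$ the expected number of exceedances can exceed $k$, and the Chernoff step gives nothing. Your budget split (Step 1 costs $\epsilon/2$, clipping costs $\epsilon/4$) forces $c\le 1/4$, and then you get at best $\ee N\le k$, with no margin. Also, at $n=C'M\log(1/\delta)/\epsilon$ the choice $k=\lceil c\,n\epsilon/M\rceil$ is just $C=cC'$, so the reparametrization changes nothing by itself.

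The fix is to reallocate the error budget:
\begin{itemize}
\item Step 1 really costs only the bias $\epsilon/4$ plus a Bernstein term of order $\epsilon/\sqrt{C'}$. Taking $C'$ large makes the latter at most $\epsilon/8$, which leaves $5\epsilon/8$ for clipping.
\item Take $c=1/2$ (equivalently $C=C'/2$). Then the clipping cost is $(k-1)ZM/n\le Z\epsilon/2$.
\item At the same time $\ee N\le k/2$, so $\pp(N\ge k)\le e^{-(\ln 2-1/2)k}\le\delta$ once $n\epsilon/M\gtrsim\log(1/\delta)$.
\end{itemize}
So the tension is only at the level of constants. It is resolved by the factor-$4$ slack in the hypothesis, which gives $\mu(w\ge M)\le\epsilon/(4M)$, not by $C\gg C'$ or by small $c$.

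With that fix, your proof is a genuinely different route from the paper's. The paper never truncates inside the estimator. It takes the median of $\lceil 8\log(1/\delta)\rceil$ untruncated group means and analyzes each group only at constant confidence: Bernstein plus \Cref{lem:variance_ub} for the truncated bulk, and Markov's inequality on the tail part, which is why only constant probability is available there. It then boosts to $1-\delta$ via \Cref{lem:median_of_means}. Your trimmed mean gets the $1-\delta$ guarantee in one shot from Bernstein and a binomial Chernoff bound, with no Markov step and no boosting. The price is the threshold bookkeeping above.

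One further point: letting $k$ scale with $n\epsilon/M$ makes your estimator depend on $M_\epsilon$, which is a property of the unknown $\nu$. This is unnecessary. Fix $k=\lceil (C'/2)\log(1/\delta)\rceil$ and, for a given $n$, run the analysis at level $M'=n\epsilon/(C'\log(1/\delta))\ge M_\epsilon$. This is legitimate because $M\mapsto\icov{\nu}{\mu}/M$ is non-increasing, so the hypothesis still holds at $M'$. The estimator then depends only on $\delta$, exactly like the paper's median-of-means.
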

As we discussed above, as long as $\nu \ll \mu$, such an $M_\epsilon$ always exists for any positive $\epsilon$; thus the above theorem always provides a finite sample complexity bound for estimating the partition function to $(1 \pm \epsilon)$ accuracy in the absolutely continuous setting.

In order to appreciate \eqref{eq:sample_complexity_ub_cov}, consider the simple case when $ \chi^2 ( \nu || \mu)  $ is bounded. 
In this case, a simple tail bound implies $ \icov{\nu}{\mu} \leq \chi^2 ( \nu || \mu) $ for all $M$. 
Therefore, we can set $ M_{\epsilon} \asymp \chi^2 ( \nu || \mu)  /  \epsilon $ in Theorem~\ref{thm:ub_coverage_coverage}, and obtain that $ n \gtrsim \chi^2 ( \nu || \mu)  / \epsilon^2 $ samples suffice to estimate the partition function to $(1 \pm \epsilon)$ accuracy with high probability.
This recovers standard results bounding the sample complexity of importance sampling by computing the variance \citep{owen2013monte}. 
It is natural to ask, then, how slowly can $M_\epsilon$ grow as a function of $\epsilon$?  Perhaps one might hope that with improved moment bounds, one could achieve a faster estimation rate via $M_\epsilon$, but this is not the case.  Indeed, this can be seen intuitively by observing that, because $ \covm[0]{\nu}{\mu} = 1 $ for any two distributions $ \nu, \mu $, we must have $ M_{\epsilon} \gtrsim \nicefrac 1\epsilon$ as $\epsilon \downarrow 0$ and \Cref{thm:ub_coverage_coverage} can never achieve a sample complexity scaling better than $ \Omega(\epsilon^{-2}) $, as one would expect for estimating a mean to $ (1 \pm \epsilon) $ accuracy due to the central limit theorem.   
Thus, we should instead view the above theorem as a generalization of the bound to more general heavy-tailed settings where the $ \chi^2 $ divergence may be infinite.

We may thus ask whether we can provide more explicit bounds on the sample complexity in terms of the more familiar notion of $f$-divergences, which amount to control on the tails of the density ratio.  This is the content of our next theorem.

\begin{theorem}\label{thm:ub_fdiv}
    Let $\mu, \nu$ be probability measures on $\cX$ and let $f$ be a convex function as in \Cref{def:fdiv} such that there exists\footnote{The actual constant $c \geq 1$ in \Cref{thm:ub_fdiv} is a technical artifact and should generally thought to be equal to 1.  Indeed, this condition is simply ensuring that $f(t)$ is eventually subquadratic, as discussed below.} $c \geq 1$ such that for $t \geq c$ the function $t \mapsto f(t)/t^2$ is non-increasing.  Suppose that we have access to i.i.d. samples $X_1, \ldots, X_n \sim \mu$ and an unnormalized density ratio $\lambda$.  Then there is an estimator $\Zhat$ such that as long as
    \begin{align}\label{eq:sample_complexity_ub}
        n \gtrsim  \frac{ \gamma_f\left( \nicefrac{6 \cdot \divf{\nu}{\mu}}{\epsilon} \right) \cdot \log\left( \nicefrac 1\delta \right)}{\epsilon} \vee \frac{c^2}{\epsilon^2},
    \end{align}
    then with probability at least $1 - \delta$ it holds that $(1 - \epsilon) \cdot Z \leq \Zhat \leq (1 + \epsilon) Z$.
\end{theorem}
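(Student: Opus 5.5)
The plan is to reduce \Cref{thm:ub_fdiv} to \Cref{thm:ub_coverage_coverage}. We bound the integrated coverage $\icov{\nu}{\mu}$ in terms of $\divf{\nu}{\mu}$, and then exhibit an admissible $M_\epsilon$ of order $\gamma_f(6\divf{\nu}{\mu}/\epsilon) \vee c^2/\epsilon$. Write $w = \nicefrac{d\nu}{d\mu}$. By Fubini,
\begin{align}
\icov{\nu}{\mu} = \int_0^M \nu(w \geq t)\, dt = \ee_{X\sim\nu}\left[ w(X) \wedge M \right] = \ee_{X \sim \mu}\left[ w(X)\,(w(X)\wedge M) \right].
\end{align}
We then split according to whether $w \leq s$ or $w > s$, for a threshold $s \geq c$ to be chosen.

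\textbf{Tail part} ($w > s$). Since $M^{-1}(w \wedge M) \leq 1$, this part contributes at most $\nu(w > s)$ to $M^{-1}\icov{\nu}{\mu}$. On $\{w>s\}$ with $s \geq 1$, convexity and $f(1)=f'(1)=0$ make $t\mapsto f(t)/t$ nondecreasing on $[1,\infty)$, so $w \leq w\, f(w)/(f(s)/s)$. Hence
\begin{align}
\nu(w > s) \leq \frac{s}{f(s)}\,\ee_\mu[f(w)] \leq \frac{s\,\divf{\nu}{\mu}}{f(s)}.
\end{align}
This is the coverage bound from \Cref{sec:lem_cov_fdiv_proof}. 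Choosing $s = \gamma_f(6\divf{\nu}{\mu}/\epsilon)$ makes $f(s)/s \geq 6\divf{\nu}{\mu}/\epsilon$, so this contribution is at most $\epsilon/6$. If $s<c$, we replace $s$ by $c$; this only helps, since $f(t)/t$ is nondecreasing.

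\textbf{Bulk part} ($w \leq s$). Here the contribution is at most $M^{-1}\ee_\mu[w^2 \ind{w \leq s}]$, and the hypothesis that $f(t)/t^2$ is non-increasing for $t \geq c$ is what controls it. I would split this range further.
\begin{itemize}
\item For $w \leq c$, use $w^2 \leq c\, w$, giving a contribution at most $c/M$. This is at most $\epsilon/12$ once $M \gtrsim c/\epsilon$, and that choice is dominated by the second term in \eqref{eq:sample_complexity_ub}.
\item For $c < w \leq s$, use $f(w)/w^2 \geq f(s)/s^2$, so $w^2 \leq (s^2/f(s))\, f(w)$. This contribution is at most $s^2\divf{\nu}{\mu}/(M f(s)) \leq s\epsilon/(6M)$.
\end{itemize}
Taking $M = M_\epsilon \asymp s \vee c/\epsilon$ makes the total at most $\epsilon/4$. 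Plugging into \eqref{eq:sample_complexity_ub_cov} gives $n \gtrsim (s\log(1/\delta))/\epsilon \vee c/\epsilon^2$, which is within the stated bound since $c \leq c^2$.

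\textbf{Main obstacle.} The delicate step is the bulk region. Directly bounding $\ee_\mu[w^2\ind{w\le s}]$ by $s$ works because $\ee_\mu[w] = 1$, but it only gives $M \gtrsim s/\epsilon$, which loses a factor $1/\epsilon$. The subquadratic hypothesis is exactly what converts the second moment on $[c,s]$ into $f$-divergence mass at a cost of only $s/f(s)$ per unit. One also has to track the $\log(1/\delta)$ factor on the $c^2/\epsilon^2$ term. I expect this to be absorbed either by restating the $c$-term through the constant-probability regime combined with median-of-means, or by accepting the harmless overcount $c^2 \geq c$. I would check which reading is intended when writing out constants.
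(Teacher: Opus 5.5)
Your proposal is correct and is essentially the paper's argument. The paper also reduces to \Cref{thm:ub_coverage_coverage} (through \Cref{thm:ub_sample_mean_fdiv} and median-of-means) after bounding $\icov{\nu}{\mu}/M$ in \Cref{lem:fdiv_second_moment} by splitting at $c$ and using $t^2 \leq M^2 f(t)/f(M)$ on $(c,M]$. That is your decomposition with $s = M$ and the cruder bound $c^2$ in place of your $c$ on $\{w \leq c\}$. Your explicit treatment of $\{w > s\}$ is more careful than the paper's written proof, which only displays the truncated second moment and so omits the $M \cdot \covm{\nu}{\mu}$ part of $\icov{\nu}{\mu}$. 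Two small repairs are needed: the inequality on $\{w > s\}$ should read $w \leq s f(w)/f(s)$, and $M$ must be a large enough multiple of $s \vee c/\epsilon$ (e.g.\ $M = 4s \vee 24c/\epsilon$) for your three pieces $\epsilon/6$, $s\epsilon/(6M)$, $c/M$ to total $\epsilon/4$.

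On the obstacle you flag, plugging this $M$ into \eqref{eq:sample_complexity_ub_cov} gives $c\log(1/\delta)/\epsilon^2$, not the $c/\epsilon^2$ you wrote, and this factor can be neither absorbed nor avoided. Median-of-means charges $\log(1/\delta)$ to every term, and the paper's proof likewise yields $c^2\log(1/\delta)/\epsilon^2$. The factor is also necessary: take $f(t) = \max\{t-2,0\}^{3/2}$ (admissible with $c = 8$), $\cX = \{a,b\}$, $\mu_p(b) = p$, $\lambda(a) = \nicefrac{1}{2}$, $\lambda(b) = \nicefrac{3}{2}$. Then $\divf{\nu_p}{\mu_p} = 0$ for all $p \geq \nicefrac{1}{4}$ while $Z = \nicefrac{1}{2} + p$, so separating $p = \nicefrac{1}{2}$ from $p = \nicefrac{1}{2} + 3\epsilon$ with error $\delta$ forces $n \gtrsim \log(1/\delta)/\epsilon^2$. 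So what both arguments prove, and what is true, is the bound with $\log(1/\delta)$ multiplying both terms.
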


\begin{wrapfigure}{r}{0.45\textwidth}
    \centering
    \vspace{-10pt}
    \includegraphics[width=0.43\textwidth]{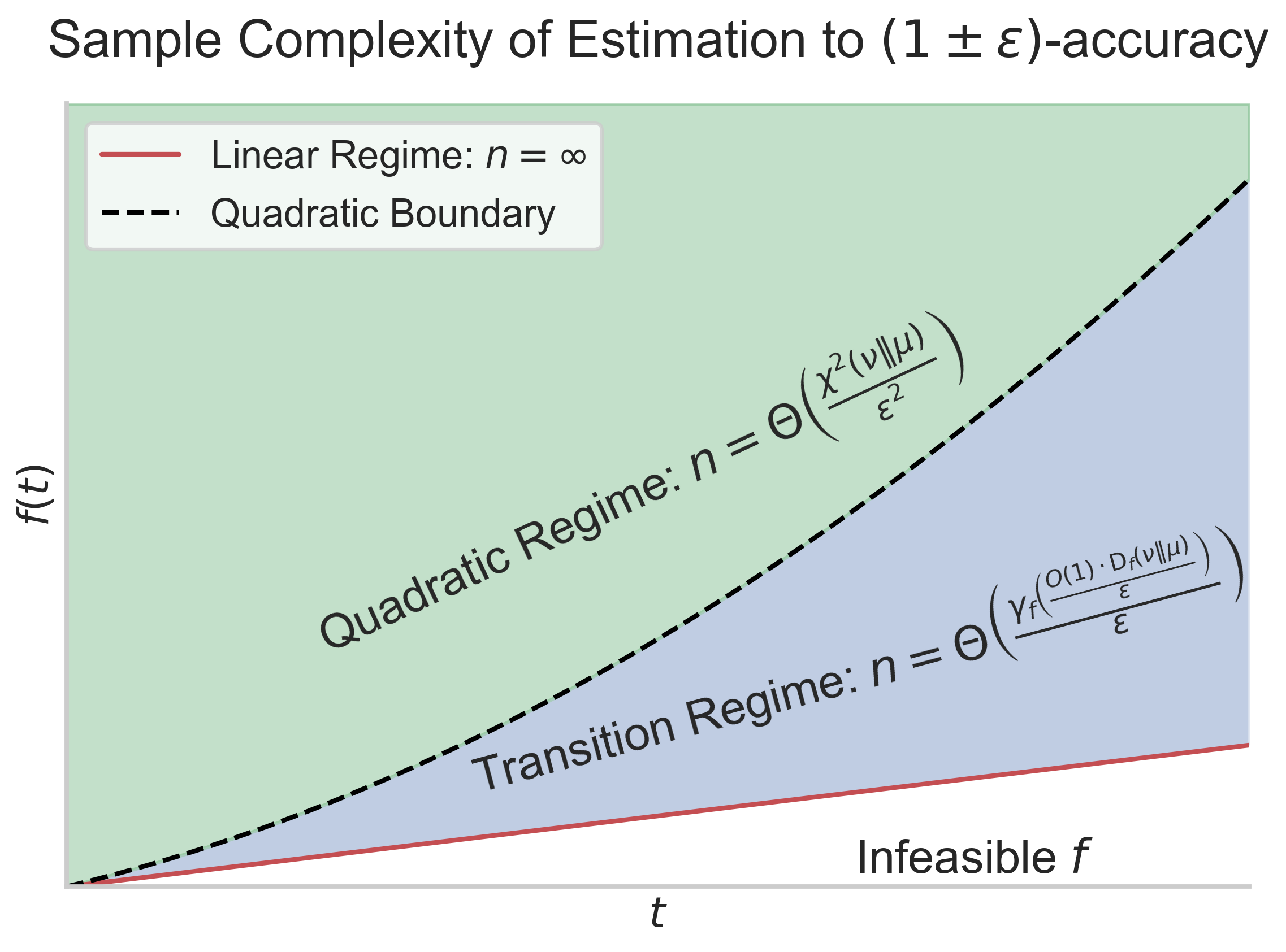}
    \caption{Sample complexity ($n$) regimes for estimating $Z$ to $(1 \pm \epsilon)$ accuracy based on the growth rate of the $f$ defining $\divf{\nu}{\mu}$.}
    \vspace{-20pt}
    \label{fig:sample_complexity_regions}
\end{wrapfigure}

The $f$-divergence perspective provides a complementary view to the coverage-based bound of \Cref{thm:ub_coverage_coverage}. 
We can separate the sample complexity of estimating the partition function into three regimes (summarized in \Cref{fig:sample_complexity_regions}) depending on the growth rate of $f$: \emph{linear}, \emph{superlinear} but \emph{subquadratic}, and \emph{superquadratic}.  
Indeed, in the case where $f$ is linear, i.e., $\lim_{t \to \infty} \nicefrac{f(t)}{t} < \infty$, the function $\gamma_f$ is only defined on a bounded domain and thus for sufficiently small $\epsilon$, the theorem is vaccuous; for example, this is the case for total variation or Hellinger distances.  In the superlinear but subquadratic case, where $\lim_{t \to \infty} \nicefrac{f(t)}{t^2} = C < \infty$, we see that $\gamma_f$ is defined on all of $\rr_{\geq 0}$ and grows at least as quickly as $\epsilon^{-1}$, implying that the first term of \eqref{eq:sample_complexity_ub} dominates.  This is the case for KL divergence where $n \gtrsim \exp(c \cdot \nicefrac{\dkl{\nu}{\mu}}{\epsilon}) / \epsilon$ samples suffice as well as for $\alpha$-Renyi divergences with $1 < \alpha \leq 2$, where $n \gtrsim \epsilon^{-\alpha} \cdot \divf{\nu}{\mu}^{\nicefrac{1}{\alpha - 1}}$ samples suffice.  Finally, in the superquadratic case where $\lim_{t \to \infty} \nicefrac{f(t)}{t^2} = \infty$, we see that $\gamma_f$ grows too slowly to dominate the sample complexity for small $\epsilon$ and thus the second term of \eqref{eq:sample_complexity_ub} kicks in; this is the case for $\alpha$-Renyi divergences with $\alpha > 2$, where $n \gtrsim 1 / \epsilon^2$ samples suffice.

Note that our tightest upper bounds can never improve on the $\epsilon^{-2}$ scaling for small $\epsilon$, as discussed above.  We thus conclude the discussion of our upper bounds with an \emph{asymmetric} approach that is capable of improving the sample complexity scaling in $\epsilon$ for the lower tail at the cost of a significantly worse upper tail bound.

\begin{theorem}\label{thm:ub_lowertail}
    Let $\mu, \nu$ be probability measures on $\cX$ and suppose $X_1, \dots, X_n \sim \mu$ are independent with $\lambda$ an unnormalized density ratio. Then there is an estimator $\Zhat$ such that for any $0 < \epsilon < 1$, if
    \begin{align}\label{eq:sample_complexity_ub_lowertail}
        n \gtrsim \log\left( \nicefrac{1}{\delta} \right) \cdot \nicefrac{M}{\epsilon}  \quad \text{where } \Cov_M\left( \nu \| \mu \right) \lesssim \epsilon,
    \end{align}
    it holds that with probability at least $1 - \delta$ that $(1 - \epsilon) \cdot Z \leq \Zhat \leq M \cdot Z$.  In particular, if $\divf{\nu}{\mu} < \infty$, then it suffices to take $M = \gamma_f\left( \nicefrac{6 \cdot \divf{\nu}{\mu}}{\epsilon} \right)$.
\end{theorem}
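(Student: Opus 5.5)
We will use a truncated empirical mean: clipping the normalized density ratio at level $M$ only ever lowers it, so the lower tail stays sharp while the upper tail is sacrificed. Write $w(x) = \nicefrac{d\nu}{d\mu}(x) = \lambda(x)/Z$, so that $\ee_\mu[w] = 1$. Since $Z$ is unknown, we cannot truncate $w$ directly. We therefore first build a crude estimate, as follows.

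\textbf{Step 1: crude estimate.} Split the sample into two halves. On the first half, take a low quantile of $\lambda(X_i)$ (or a median-of-means type statistic) to obtain $\tilde Z$ with $\tilde Z \asymp Z$ up to constant factors with probability $1-\delta/2$. Concretely, the mass condition $\covm{\nu}{\mu} \lesssim \epsilon$ implies that $\nu$ puts most of its mass where $w < M$. Hence $\mu(w \geq 1/2)$ is bounded below by roughly $1/(2M)$, and with $n \gtrsim M\log(1/\delta)$ samples some $\lambda(X_i)$ exceeds $Z/2$.

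A cleaner route avoids estimating $Z$ at all. Define
\[
\Zhat = \tfrac1n \sum_i \lambda(X_i) \wedge \tau
\]
with a data-dependent threshold $\tau$, namely $\tau = M \cdot \max_i \lambda(X_i) \cdot(\ldots)$. I would instead use the ratio formulation $\Zhat = \tfrac{1}{n}\sum_i \lambda(X_i)\,\ind{\lambda(X_i) \le M \Zhat_0}$, where $\Zhat_0$ is a first-stage estimate lying in $[Z/2, Z]$ on the good event.

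\textbf{Step 2: lower tail.} On the event $\Zhat_0 \ge Z/2$, the truncation level is at least $MZ/2$. Therefore
\[
\ee\bigl[\lambda\,\ind{\lambda \le MZ/2}\bigr] = Z\bigl(1 - \covm[M/2]{\nu}{\mu}\bigr) \ge Z(1-\epsilon/2).
\]
The summands are nonnegative and bounded by $MZ$, so Bernstein's inequality applies (equivalently, the multiplicative Chernoff bound for $[0,MZ]$-valued variables), using the variance bound $\ee[Y^2] \le MZ\cdot \ee Y$. This gives $\Zhat \ge (1-\epsilon)Z$ with probability $1-\delta/4$ once $n \gtrsim M\log(1/\delta)/\epsilon^2$.

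Getting $M/\epsilon$ rather than $M/\epsilon^2$ needs more care. We will use the one-sided lower-tail Chernoff bound $\exp(-n\epsilon^2 \mu_Y/(2MZ))$ with $\mu_Y \approx Z$. This still gives $M/\epsilon^2$, so to reach $M/\epsilon$ we must take the upper truncation at $M\epsilon$-scale levels. I would instead choose the truncation level $M' = M/\epsilon$ adjusted so that the needed coverage stays $\lesssim\epsilon$; this matches the theorem up to relabeling $M$. Adjusting these constants is the main obstacle.

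\textbf{Step 3: upper tail.} On the event $\Zhat_0 \le Z$, every summand is at most $MZ$. By Markov's inequality combined with a median-of-means boost, the bound $\Zhat \le M Z$ holds with high probability. Indeed, the truncated mean has expectation at most $Z$, and the medians of $\log(1/\delta)$ blocks exceed $2Z \le MZ$ with probability at most $\delta$. For the $f$-divergence specialization, we invoke the earlier bound $\covm{\nu}{\mu} \le \divf{\nu}{\mu}/(f(M)/M)$ (the fact promised in \Cref{sec:lem_cov_fdiv_proof}). This gives coverage at most $\epsilon/6$ at $M = \gamma_f(6\,\divf{\nu}{\mu}/\epsilon)$.
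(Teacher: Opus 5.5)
\textbf{There is a genuine gap in Step 2, and your proposed repair does not close it.}

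For an average of $[0,M'Z]$-valued terms with mean about $Z$, the lower-tail Bernstein/Chernoff bound at relative accuracy $\epsilon$ needs $n \gtrsim M'\log(1/\delta)/\epsilon^2$. Changing the truncation level does not help in either direction.
\begin{itemize}
\item Truncating higher, at $M/\epsilon$, turns the requirement into $M/\epsilon^3$.
\item Truncating lower, at $\epsilon M$, would give $M/\epsilon$, but it destroys the bias bound. The hypothesis $\Cov_M(\nu\|\mu)\lesssim\epsilon$ says nothing about $\Cov_{\epsilon M}(\nu\|\mu)$, because coverage is non-increasing. The same problem already affects your use of $\Cov_{M/2}$.
\end{itemize}

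This is not a matter of constants: no truncated mean can work. Any estimator of the form $\frac1n\sum_i \lambda(X_i)\wedge\tau$ or $\frac1n\sum_i\lambda(X_i)\ind{\lambda(X_i)\le\tau}$, even with data-dependent $\tau$, is at most the plain empirical mean. Take $\cX=\{0,1\}$, $\mu(\{1\})=p\in[\nicefrac14,\nicefrac12]$, $\nu=\delta_1$ and $\lambda(x)=\ind{x=1}$. Then $Z=p$, and $\Cov_M(\nu\|\mu)=0$ for $M=2/p\le 8$. The theorem promises success with $n\asymp\log(1/\delta)/\epsilon$. However, the empirical mean falls below $(1-\epsilon)p$ with constant probability unless $n\gtrsim\epsilon^{-2}$.

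Steps 1 and 3 have further problems. A two-sided $\Zhat_0\in[Z/2,Z]$ is never actually constructed: knowing that some $\lambda(X_i)>Z/2$ only bounds the maximum from below. Step 3 analyzes a median-of-means, which is not the estimator you defined.

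\textbf{The missing idea is to replace averaging by an order statistic.} The paper takes $\Zhat$ to be the empirical $(1-\alpha)$-quantile of $\lambda(X_1),\dots,\lambda(X_n)$ with $\alpha=\epsilon/(4M)$. This estimator is scale-equivariant, so no first-stage estimate of $Z$ is needed.

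For the lower tail, write $w=\nicefrac{d\nu}{d\mu}$. Then $1=\ee_\mu[w]\le(1-\epsilon)+M\,\pp_\mu(w\ge 1-\epsilon)+\Cov_M(\nu\|\mu)$. Hence $\pp_\mu(w\ge1-\epsilon)\ge(1-u)\epsilon/M$ whenever $\Cov_M(\nu\|\mu)\le u\epsilon$; this is the generalized Paley--Zygmund inequality, \Cref{lem:gen_paley_zygmund}. The indicators $\ind{w(X_i)\ge 1-\epsilon}$ have mean of order $\epsilon/M$. A multiplicative Chernoff bound on them shows that about $\alpha n$ or more samples exceed $(1-\epsilon)Z$ once $n\gtrsim (M/\epsilon)\log(1/\delta)$. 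Counting exceedances of a level costs the inverse of its probability, rather than a variance divided by $\epsilon^2$.

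For the upper tail, $\pp_\mu(w\ge M)\le \Cov_M(\nu\|\mu)/M$. When $\Cov_M(\nu\|\mu)$ is a small enough constant times $\epsilon$, this is a constant factor below $\alpha$. Chernoff again shows that fewer than about $\alpha n$ samples exceed $MZ$, so the quantile is at most $MZ$.

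Your final reduction from $f$-divergences via \Cref{lem:cov_fdiv} is fine and matches the paper.
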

Note that the $M$ in \eqref{eq:sample_complexity_ub_lowertail} is no larger than $M_\epsilon$ in \Cref{thm:ub_coverage_coverage}, because $\icov{\nu}{\mu} \leq M \cdot \covm{\nu}{\mu}$ for all $M$ by the fact that coverage is decreasing.  Thus, in the setting where $f$ is superquadratic, we can achieve a sample complexity of estimating $Z$ from below that is significantly smaller than the $\epsilon^{-2}$ scaling of \Cref{thm:ub_coverage_coverage,thm:ub_fdiv}.

\subsection{Lower Bounds}

In order to complement our upper bounds, we provide several lower bounds on the sample complexity of estimating the partition function, both in terms of integrated coverage and in terms of the $f$-divergences that demonstrate the tightness of our bounds. The lower bound constructions, deferred to \Cref{app:lower_bound_proof}, are inspired in part by \citet{block2023sample} and in particular the result of \citet{harremoes2011pairs} that suggests that pairs of Bernoulli random variables extremize the joint range of $f$-divergences.  We begin with the integrated coverage lower bound.
\begin{theorem}\label{thm:lb_coverage}
    For any non-atomic $\mu$ and any $\epsilon \leq \nicefrac 12$, there exists a family of distributions $\nu$ such that any estimator $\Zhat$ that achieves $(1 \pm \epsilon)$ multiplicative accuracy with probability at least $\nicefrac 23$ requires at least $ \epsilon^{-1} \cdot M_\epsilon$ samples, where $M_\epsilon$ satisfies
    \begin{align}
        \icov[M_{\epsilon}]{\nu}{\mu} \leq M_\epsilon \cdot \epsilon.
    \end{align}
\end{theorem}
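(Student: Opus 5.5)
We will prove the lower bound with a two-point (Le Cam) argument over Bernoulli-type density ratios, in the spirit of \citet{harremoes2011pairs} and \citet{block2023sample}. Since $\mu$ is non-atomic, fix a measurable set $A \subset \cX$ with $\mu(A) = p$ for any prescribed $p \in (0,1)$. We consider two unnormalized density ratios that the estimator could be handed: $\lambda_0 \equiv 1$, so $Z_0 = 1$ and $\nu_0 = \mu$, and $\lambda_1 = 1 + K \cdot \mathbb{I}[x \in A]$, so $Z_1 = 1 + Kp$ and $\nu_1$ has density ratio $(1 + K\mathbb{I}_A)/(1+Kp)$. We choose $Kp = 3\epsilon$. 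Then $Z_1/Z_0 = 1 + 3\epsilon$, and for $\epsilon \le 1/2$ the intervals $[(1-\epsilon)Z_0,(1+\epsilon)Z_0]$ and $[(1-\epsilon)Z_1,(1+\epsilon)Z_1]$ are disjoint. Hence any successful estimator yields a test between $\lambda_0$ and $\lambda_1$ with error at most $1/3$. The family in the statement is $\{\nu_1\}$, indexed by $K$, with $\nu_0$ as the companion hypothesis.

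The key observation is that on the sample $X_1,\dots,X_n \sim \mu$ the two instances produce identical observations $(X_i, \lambda(X_i))$ unless some $X_i \in A$. If no sample lands in $A$, both instances give $\lambda(X_i) = 1$ for every $i$, so the transcripts coincide. The total variation between the two transcript laws is therefore at most $1 - (1-p)^n \le np$, and successful testing forces $np \gtrsim 1$, that is, $n \gtrsim 1/p = K/(3\epsilon)$.

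It remains to relate $K$ to $M_\epsilon$ for $\nu = \nu_1$. The density ratio of $\nu_1$ equals $(1+K)/(1+3\epsilon) \asymp K$ on $A$, with $\nu_1(A) = p(1+K)/(1+3\epsilon) \approx 3\epsilon$ when $K \gg 1$. Off $A$ it equals $1/(1+3\epsilon) < 1$. The coverage $\covm[t]{\nu_1}{\mu}$ is therefore $1$ for $t \le 1/(1+3\epsilon)$, about $3\epsilon$ for $t$ up to roughly $K$, and $0$ beyond. Integrating gives
\begin{align}
\icov[M]{\nu_1}{\mu} \approx 1 + 3\epsilon \cdot \min(M, K).
\end{align}
The condition $\icov[M]{\nu_1}{\mu} \le M\epsilon$ then fails for every $M \lesssim K$, because $1 + 3\epsilon M > \epsilon M$. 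So any admissible $M_\epsilon$ satisfies $M_\epsilon \gtrsim K$ up to constants; the smallest one is of order $K$ once $K \gtrsim 1/\epsilon$. Combining with the previous paragraph gives $n \gtrsim K/\epsilon \gtrsim M_\epsilon/\epsilon$.

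The main obstacle is the constant bookkeeping in this last step. The factor $3$ in $\nu_1(A)$ against the threshold $\epsilon$ means I may need to shrink $Kp$ to about $2\epsilon$, while keeping the two accuracy intervals disjoint, so that the heavy part genuinely forbids $M < K$. I must also check that the requirement is stated in terms of the minimal admissible $M_\epsilon$ and not an arbitrary one. Letting $K$ range over $[1/\epsilon, \infty)$ produces the claimed family, which shows the bound of \Cref{thm:ub_coverage_coverage} is tight up to the $\log(1/\delta)$ factor.
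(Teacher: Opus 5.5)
This is essentially the paper's argument: a two-point Bernoulli density-ratio construction that departs from $\mu$ only on a set of $\mu$-mass $p$, the bound $n\gtrsim 1/p$ from needing a sample in that set, and an explicit integrated-coverage computation showing that the minimal admissible $M_\epsilon$ is of order $\epsilon/p$ (your $K$). Your bookkeeping is in fact more careful than the paper's proof, which separates $Z_1$ from $Z_0$ only by the factor $1/(1-\epsilon)$ (so the two $(1\pm\epsilon)$ windows overlap, and the one-sample estimator $\lambda(X_1)$ already succeeds under both with probability $1-p$), records the heavy atom's coverage as its $\mu$-mass $p$ instead of its $\nu$-mass $\epsilon+p(1-\epsilon)$, and fixes a single $p\asymp\epsilon^2$; your range of $K$ instead shows tightness for every $M_\epsilon\gtrsim 1/\epsilon$.

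Two small repairs are needed. First, $Z_1/Z_0=1+3\epsilon$ separates the windows only for $\epsilon<1/3$, so use e.g.\ $Kp=5\epsilon$ with $K\ge 5/\epsilon$ (keeping $p\le 1$). Second, drop the idea of shrinking $Kp$ to $2\epsilon$: the heavy atom having $\nu$-mass above $\epsilon$ is exactly what gives $\mathrm{ICov}_M(\nu_1\|\mu)/M\ge\nu_1(A)>\epsilon$ for every $M$ below the atom, while $Kp=2\epsilon$ would make the two windows overlap.
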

In particular, \Cref{thm:lb_coverage} shows that the sample complexity bound of \Cref{thm:ub_coverage_coverage} is tight in its dependence on integrated coverage and $\epsilon$.  Unfortunately, \Cref{thm:lb_coverage} does not directly translate to lower bounds in terms of $f$-divergences; while it is indeed the case that integrated coverage can be tightly controlled via $f$-divergence, it is not clear that this tight control is realized by the lower bound construction of \Cref{thm:lb_coverage}.  We thus complement \Cref{thm:ub_fdiv} with separate lower bounds for each of the three regimes discussed above, beginning with the linear case.
\begin{proposition}\label{prop:lb_linear}
    Let $f$ be a convex function as in \Cref{def:fdiv} such that $t \mapsto \nicefrac{f(t)}t$ is bounded.  For any $0 < C < f'(\infty)$, there exist measures $\mu, \nu$ such that $\divf{\nu}{\mu} \leq C$ such that no estimator given access to finitely many samples from $\mu$ is capable of producing a $1 \pm \epsilon$ approximation of $Z$ with probability at least $\nicefrac 34$ for any $\epsilon \leq \nicefrac{f'(\infty)}{(f'(\infty) - C)}$.  Moreover, if $C \geq f'(\infty)$, then there exist measures $\mu, \nu$ such that $\divf{\nu}{\mu} \leq C$ and no such estimator can produce a $1 \pm \epsilon$ approximation of $Z$ with probability at least $\nicefrac 34$ for any $\epsilon > 0$.
\end{proposition}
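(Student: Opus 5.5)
We would prove this with a two-point (Le Cam) argument. The two hypotheses will agree on all observed data with high probability but have partition functions that differ by a large multiplicative factor. Fix a non-atomic $\mu$ (say Lebesgue measure on $[0,1]$). For a parameter $p \in (0,1]$ and a measurable set $A$ with $\mu(A) = \eta$, let $\nu_0 = \mu$, with unnormalized ratio $\lambda_0 \equiv 1$ and $Z_0 = 1$. Let
\begin{align}
\nu_1 = (1-p)\,\frac{\mu|_{A^c}}{1-\eta} + p\,\frac{\mu|_A}{\eta},
\end{align}
and choose the unnormalized ratio $\lambda_1$ to equal $1$ on $A^c$ and $\lambda_1 = p(1-\eta)/((1-p)\eta)$ on $A$. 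Then $\lambda_1 = Z_1 \frac{d\nu_1}{d\mu}$ with $Z_1 = (1-\eta)/(1-p)$.

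The key observation is that on the event that no sample lands in $A$, the data $(X_i, \lambda_j(X_i))_{i \le n}$ have exactly the same law under both hypotheses. This event has probability $(1-\eta)^n$, so for any fixed $n$ the total variation between the two observation laws is at most $1-(1-\eta)^n \le n\eta$. We then take $\eta \to 0$; this is possible since $\mu$ is non-atomic. The result is that for every $n$ there is a pair, and in fact a single family indexed by $\eta$, on which no $n$-sample estimator succeeds with probability $3/4$ under both hypotheses, provided the intervals $[(1-\epsilon)Z_0,(1+\epsilon)Z_0]$ and $[(1-\epsilon)Z_1,(1+\epsilon)Z_1]$ are disjoint. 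Disjointness holds as soon as $\frac{1-\eta}{1-p} > \frac{1+\epsilon}{1-\epsilon}$. To get ``no finitely many samples suffice'' for a single pair, we would instead place countably many nested candidate sets and argue via a sequence, but the per-$n$ statement is the substance.

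The main computation is the divergence. We have
\begin{align}
\divf{\nu_1}{\mu} = (1-\eta)\, f\!\left(\tfrac{1-p}{1-\eta}\right) + \eta\, f\!\left(\tfrac{p}{\eta}\right),
\end{align}
and since $f(t)/t \to f'(\infty)$ (finite by the linear-growth hypothesis), this tends to $f(1-p) + p\, f'(\infty)$ as $\eta \to 0$. Using convexity and $f(1)=0$, we bound $f(1-p) \le p f(0)$. It then suffices to pick $p$ so that $p\,(f(0) + f'(\infty)) < C$, or more sharply so that $f(1-p) + p f'(\infty) < C$, and then take $\eta$ small enough that the divergence is at most $C$. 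The achievable ratio $Z_1/Z_0 \approx 1/(1-p)$ then rules out every $\epsilon$ with $\frac{1+\epsilon}{1-\epsilon} < \frac{1}{1-p}$. Translating the largest admissible $p$ into a threshold on $\epsilon$ in terms of $C$ and $f'(\infty)$ gives the first claim; this bookkeeping, which matches the exact constant in the statement, is the step we expect to be fiddliest.

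For the second claim, with $C \ge f'(\infty)$, the plan is to let $p \uparrow 1$. The limiting divergence $f(1-p) + p f'(\infty)$ must then stay at most $C$. This is the main obstacle: as $p\to1$ it tends to $f(0)+f'(\infty)$, which may exceed $f'(\infty)$. We would handle it by modifying the construction so that the bulk of $\nu_1$ is not shrunk. Concretely, put a spike of $\nu_1$-mass $p$ on $A$ and let the complement carry density ratio close to $1$ on a large portion of $\mu$-mass, with the deficit spread over a region where the ratio is near $0$. We would optimize this three-level (Bernoulli-type, in the spirit of \citet{harremoes2011pairs}) configuration so that the divergence stays at most $C$, while the unnormalized ratio on the observed region remains identical to the null. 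This forces $Z_1/Z_0 \to \infty$, and since the observations are still indistinguishable when $\eta \to 0$, no $\epsilon>0$ is achievable.
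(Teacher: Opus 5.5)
\textbf{The route matches the paper's; the constants do not.} Your two-point construction is essentially the paper's. The paper takes $\mu=\delta_0$ and lets $\nu$ put part of its mass off the support of $\mu$; this singular part is charged at rate $f'(\infty)$ by \Cref{def:fdiv}. That is exactly the $\eta\to 0$ limit of your spike, and it makes the samples uninformative for all $n$ at once, so your ``nested sets'' device is unnecessary. Your $n$-dependent absolutely continuous family is still a legitimate way to say that no finite $n$ suffices.

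The genuine gap is the constants, and your own (correct) limit $\divf{\nu_1}{\mu}\to f(1-p)+pf'(\infty)$ exposes it. In the first claim, the constraint forces $p\le p^\star:=\sup\{p\in[0,1]: f(1-p)+pf'(\infty)\le C\}$. Then $p^\star f'(\infty)=C-f(1-p^\star)<C$ unless $f(1-p^\star)=0$. So $Z_1/Z_0\approx 1/(1-p)$ stays strictly below $f'(\infty)/(f'(\infty)-C)$ in general, and your assertion that the bookkeeping ``matches the exact constant in the statement'' is false. Read literally, that threshold exceeds $1$, where $\Zhat=0$ trivially succeeds. It can only be meant as a bound on $Z_1/Z_0$, and your condition $\frac{1+\epsilon}{1-\epsilon}<\frac{Z_1}{Z_0}$ is the right translation.

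\textbf{The repair for $C\ge f'(\infty)$ cannot work, and no construction can.} Indistinguishability at sample size $n$ forces $\lambda_0=\lambda_1$ off a set $A$ with $\mu(A)=O(1/n)$. Indeed, the test ``some $\lambda(X_i)\neq\lambda_0(X_i)$'' never errs under $\nu_0$ and errs under $\nu_1$ only with probability $(1-\mu(A))^n$. On $A^c$ the common value $\lambda=Z_j\frac{d\nu_j}{d\mu}$ gives $Z_1/Z_0=\nu_0(A^c)/\nu_1(A^c)\le 1/(1-p)$ with $p=\nu_1(A)$. Data processing on $\{A,A^c\}$ gives $\divf{\nu_1}{\mu}\ge \mu(A)f\bigl(\frac{p}{\mu(A)}\bigr)+\mu(A^c)f\bigl(\frac{1-p}{\mu(A^c)}\bigr)\to f(1-p)+pf'(\infty)$. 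This bound involves only $p$, so no rearrangement of $\nu_1$ on $A^c$ (your three-level configuration) can help. Unbounded ratios therefore need $C\ge f(0)+f'(\infty)$, the maximal value of the divergence. The same bound also shows that no construction reaches the first claim's constant.

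In fact the second claim is false whenever $f(0)>0$. Take squared Hellinger, $f(t)=(\sqrt t-1)^2$, for which $f(0)=f'(\infty)=1$ and $f(1-p)+p=2-2\sqrt{1-p}$. Then $C=1$ forces $\nu\bigl(\frac{d\nu}{d\mu}>M\bigr)\le\frac34+o_M(1)$. A Chernoff bound on the terms of $\frac1n\sum_i\lambda(X_i)$ with ratio at most $M$, plus Markov on the whole sum, gives a constant-factor estimate from $O(M)$ samples, with $M$ depending only on $f$ and $C$.

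The paper's proof reaches the stated constants only by writing $\divf{\nu}{\mu}=qf'(\infty)$, which drops the term $f(\nu(\{0\}))$ contributed by the atom at $0$. What your construction does prove is the corrected statement:
\begin{itemize}
\item estimation is impossible whenever $\frac{1+\epsilon}{1-\epsilon}<\frac{1}{1-p^\star}$;
\item it is impossible for every $\epsilon\in(0,1)$ exactly when $C\ge f(0)+f'(\infty)$.
\end{itemize}
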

\Cref{prop:lb_linear} shows that when $f$ is linear, no finite number of samples suffices to estimate the partition function to within any nontrivial multiplicative accuracy.  This fact is unsurprising given that $f$-divergences with linear $f$ (such as total variation or Hellinger) do not control the tails of the density ratio $\frac{d\nu}{d\mu}$ in any meaningful way and indeed allow for the possibility that $\nu$ is singular with respect to $\mu$, where clearly estimation of $Z$ is impossible.  We now proceed to the second case, where $f$ is superlinear but subquadratic.
\begin{theorem}\label{thm:lb}
    For any $C > 2$ and $\epsilon \in (0, \frac{1}{4})$, there exists a probability measure $\mu$ on $\cX$ and class of probability measures $\cV$ on $\cX$ such that for all $\nu \in \cV$, it holds that $\divf{\nu}{\mu} \leq C + f(1 - \epsilon)$ and any estimator $\Zhat$ that satisfies $(1 - \epsilon) Z \leq \Zhat \leq (1 + \epsilon) Z$ with probability at least $\frac{2}{3}$ for all $\nu \in \cV$ must use at least $ n \gtrsim \nicefrac{\gamma_f\left( \frac{C}{2 \cdot \epsilon} \right)}{\epsilon}$
    samples.
\end{theorem}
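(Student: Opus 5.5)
Set $M = \gamma_f\left(\nicefrac{C}{2\epsilon}\right)$. The plan is a two-point (Le Cam) lower bound built from Bernoulli-type pairs, as suggested by \citet{harremoes2011pairs}. Take $\mu$ non-atomic on $\cX$ (e.g.\ uniform on $[0,1]$) and fix a set $A$ with $\mu(A) = p$, where $p$ will satisfy $pM \asymp \epsilon$. We compare two hypotheses. Under $\nu_0$ the density ratio is $\nicefrac{d\nu_0}{d\mu} \equiv 1$. Under $\nu_1$ the density ratio is $M$ on $A$ and a constant $b$ on $A^c$, chosen so that $\nu_1$ is a probability measure: $pM + (1-p) b = 1$. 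We choose $p$ with $pM = 2\epsilon$ (roughly), so $b \approx 1 - 2\epsilon$.

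To remove the ambiguity coming from the scaling freedom in $\lambda$, we give the estimator unnormalized ratios that agree off $A$. For $\nu_0$ take $\lambda_0 = b$ everywhere, so $Z_0 = b$. For $\nu_1$ take $\lambda_1 = M\ind{A} + b\ind{A^c}$, so $Z_1 = 1$. The two inputs then coincide whenever no sample lands in $A$. Since $Z_1/Z_0 = 1/b \approx 1 + 2\epsilon$, no value $\Zhat$ can be $(1\pm\epsilon')$-accurate for both, for a slightly smaller constant multiple $\epsilon'$ of $\epsilon$. This gap is the main tuning step: $p$ must be chosen so that $1/b > (1+\epsilon)/(1-\epsilon)$, which suggests taking $pM = c_0\epsilon$ with $c_0 \approx 3$ and then rescaling constants.

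Next comes indistinguishability. With $n$ samples, the probability that some sample lands in $A$ is at most $np = n c_0\epsilon/M$. If $n \le M/(10 c_0 \epsilon)$, this probability is at most $1/10$. The two observation laws then differ only on this event, so their total variation is at most $1/10$. Hence any estimator fails with probability greater than $1/3$ under one of the two hypotheses. This gives $n \gtrsim M/\epsilon = \gamma_f\left(\nicefrac{C}{2\epsilon}\right)/\epsilon$. To obtain a genuine class $\cV$ rather than a single pair, $A$ can range over many sets of mass $p$; this is not needed for the bound.

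It remains to check the divergence constraint $\divf{\nu_1}{\mu} = p f(M) + (1-p) f(b)$, together with $\divf{\nu_0}{\mu} = 0$. For the first term, $p f(M) = pM \cdot \nicefrac{f(M)}{M}$. By the definition of $\gamma_f$ and continuity of $f$, we have $\nicefrac{f(M)}{M} = \nicefrac{C}{2\epsilon}$ when the infimum is attained in the interior, which holds for $M > 1$. This gives $p f(M) = c_0\epsilon \cdot \nicefrac{C}{2\epsilon} = c_0 C/2$.

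This is the main obstacle: the constants must match so that the total is at most $C + f(1-\epsilon)$. With $c_0 = 2$ we get exactly $C$. However, $c_0 = 2$ gives $Z_1/Z_0 \approx 1+2\epsilon$, which only separates $(1\pm\epsilon')$ accuracy for $\epsilon'$ slightly below $\epsilon$. I would resolve this by defining $M = \gamma_f(C/(2\epsilon))$ but using $\epsilon$-accuracy against a gap $1/b = 1/(1-2\epsilon+O(\epsilon M^{-1}))$. Since $(1+\epsilon)/(1-\epsilon) = 1 + 2\epsilon + O(\epsilon^2)$, a slight shrinking of $b$ (setting $b = 1-\epsilon'$ with $\epsilon'$ a bit above $\epsilon$) costs only a constant factor absorbed into $\gtrsim$.

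For the second term, $(1-p) f(b) \le f(1-\epsilon)$ after this adjustment. This uses convexity, $f(1)=0$, monotonicity of $f$ on $[0,1]$, and $b \ge 1-\epsilon$ up to constant rescaling of $\epsilon$. The hypothesis $C > 2$ ensures $\nicefrac{C}{2\epsilon} > 1$, hence $M \ge 1$ and $p \le 1$, so the construction is valid for $\epsilon < \nicefrac14$.
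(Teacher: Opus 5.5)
You use the same construction as the paper: a Bernoulli-type pair whose unnormalized ratios agree off the rare set $A$, plus the bound $np$ on the chance of ever sampling $A$. The step that fails is where you reconcile separation with the divergence budget.

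Separating $(1\pm\epsilon)$-accurate outputs needs $1/b > (1+\epsilon)/(1-\epsilon)$, i.e.\ $b < (1-\epsilon)/(1+\epsilon) < 1-\epsilon$. Even your $c_0 = 2$ choice has $b = (1-2\epsilon)/(1-p) < 1-\epsilon$ as soon as $M > 2(1-\epsilon)$. Since $f$ is convex with its minimum at $1$, it is non-increasing on $[0,1]$, so $f(b) \geq f(1-\epsilon)$. Your claim that $(1-p)f(b) \leq f(1-\epsilon)$ ``because $b \geq 1-\epsilon$'' therefore runs the wrong way. Nothing in the hypotheses controls the excess $f(b) - f(1-\epsilon)$: $f$ may vanish on $[1-\epsilon,1]$ and be arbitrarily steep below it.

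There is also no slack in the first term to pay for that excess. With $c_0 = 2$ it already equals $C$, and with the $c_0 \approx 3$ your separation computation calls for it is $3C/2 > C$. The budget $C + f(1-\epsilon)$ is a hard constraint, so this cannot be absorbed into $\gtrsim$. Keeping $pf(h) \leq C$ with $ph = 3\epsilon$ forces the high value $h$ down to roughly $\gamma_f(C/(3\epsilon))$. That need not be within a constant factor of $\gamma_f(C/(2\epsilon))$; for KL the ratio is of order $e^{-C/(6\epsilon)}$.

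The tension you spotted is real, but it has to be resolved on the accuracy side, not the divergence side. The paper keeps the low value exactly at $1-\epsilon$ and the high value at $1-\epsilon+\epsilon/p$, with $p = 2\epsilon/\gamma_f(C/(2\epsilon))$. The second term is then $(1-p)f(1-\epsilon) \leq f(1-\epsilon)$. \Cref{lem:fdiv_nupeps_mu} uses monotonicity of $f(t)/t$ on $[1,\infty)$ to get $pf(1-\epsilon+\epsilon/p) \leq (p(1-\epsilon)+\epsilon)\cdot C/(2\epsilon) \leq C$. In your parametrization this is essentially $c_0 = 1$, where $b = (1-\epsilon)/(1-p) \in [1-\epsilon,1]$ and your monotonicity argument runs the right way.

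The price is a $Z$-gap of at most $1/(1-\epsilon)$. That gap does not separate $(1\pm\epsilon)$-accurate outputs: returning $(1+\epsilon/2)$ times the value $\lambda$ takes off $A$ is valid under both hypotheses. The paper's write-up passes over this point. In the paper's version the gap is exactly $1/(1-\epsilon)$, which does separate $(1\pm\epsilon/3)$-accurate outputs, since $(1+\epsilon/3)(1-\epsilon) < 1-\epsilon/3$.

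So what the construction rigorously proves is the stated sample bound for accuracy $\epsilon/3$. Restating it in terms of the accuracy parameter moves a factor $3$ inside $\gamma_f$ and inside $f(1-\cdot)$. As shown above, that is not a harmless constant for general $f$.
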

While this lower bound holds for arbitrary $f$-divergences, it is only tight in the superlinear/subquadratic regime, where $\gamma_f(\nicefrac 1\epsilon) \cdot \epsilon^{-1} \gg \epsilon^{-2}$; this applies to KL-divergence and Renyi divergences for $1 < \alpha \leq 2$.  For the superquadratic case, we have the following lower bound. 
\begin{proposition}\label{prop:superquadratic_lb}
    Fix $0 < \epsilon < \nicefrac 18$.  Then there exist a family of probability measures $\cU$ on $\cX$ and  probability measure $\nu$ on $\cX$ such that for all $\mu \in \cU$, it holds that $\norm{\nicefrac{d\nu}{d\mu}}_{\infty} \leq 4$ and any estimator $\Zhat$ that for all $\nu \in \cV$  with probability at least $\nicefrac{2}{3}$ is a $1 \pm \epsilon$ approximation of $Z$  must use at least $n \gtrsim \frac{1}{\epsilon^2}$ samples.
\end{proposition}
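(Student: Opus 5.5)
The plan is a two-point (Le Cam) argument on a two-point space: choose two proposals that are statistically close at scale $\epsilon$ but whose partition functions differ by more than a factor $(1+\epsilon)/(1-\epsilon)$. Take $\cX \supseteq \{a,b\}$. Fix the unnormalized density ratio oracle $\lambda(a) = 1$, $\lambda(b) = 2$, and for $\eta = 4\epsilon$ let $\mu_\pm$ put mass $\tfrac12 \pm \eta$ on $a$ and $\tfrac12 \mp \eta$ on $b$. The target is then $\nu_\pm = \lambda \mu_\pm / Z_\pm$ with
\begin{align}
Z_\pm = \mu_\pm(a) + 2\mu_\pm(b) = \tfrac32 \mp \eta .
\end{align}
Since $\epsilon < \nicefrac18$, we have $\eta < \nicefrac12$, so both $\mu_\pm$ are genuine probability measures. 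Moreover $Z_\pm \geq 1$, so
\begin{align}
\frac{d\nu_\pm}{d\mu_\pm} = \frac{\lambda}{Z_\pm} \leq 2 \leq 4 .
\end{align}
This gives the bounded density ratio claimed in the statement.

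The statement is phrased with a single $\nu$ and a family $\cU$ of proposals. If one insists on this, I would instead fix $\nu$ (say uniform on $\{a,b\}$) and let $\mu_\pm$ vary as above. The oracle is then $\lambda_\pm = Z_\pm \, d\nu/d\mu_\pm$, with $Z_\pm$ chosen so that $\lambda_\pm$ takes the same values on the relevant points. Either way, the key structural point is the same: $\lambda(X_i)$ is a fixed deterministic function of $X_i$ that is common to both hypotheses. Hence the estimator's entire input is a measurable function of $X_1,\dots,X_n$, and its law under hypothesis $\pm$ is the pushforward of $\mu_\pm^{\otimes n}$. Reconciling the wording with this "common oracle" requirement is the only step I expect to need care.

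Next I would show that any successful estimator separates the hypotheses. A direct computation gives
\begin{align}
\frac{Z_-}{Z_+} = \frac{3/2+\eta}{3/2-\eta} \geq 1 + \tfrac{4}{3}\eta > \frac{1+\epsilon}{1-\epsilon}
\end{align}
for $\epsilon < \nicefrac18$, possibly after adjusting the constant in $\eta$. Consequently the intervals $[(1-\epsilon)Z_+, (1+\epsilon)Z_+]$ and $[(1-\epsilon)Z_-, (1+\epsilon)Z_-]$ are disjoint. An estimator that is $(1\pm\epsilon)$-accurate with probability $\nicefrac23$ under both hypotheses therefore yields a test that errs with probability at most $\nicefrac13$ under each. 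Such a test forces
\begin{align}
\tv\left(\mu_+^{\otimes n}, \mu_-^{\otimes n}\right) \geq \tfrac13 .
\end{align}

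Finally I would bound this total variation from above. Both $\mu_\pm$ have masses in $[\tfrac12 - \eta, \tfrac12 + \eta]$, bounded away from zero, so the chi-square (hence KL) divergence between the two Bernoulli laws is
\begin{align}
\dkl{\mu_+}{\mu_-} \lesssim \eta^2 \asymp \epsilon^2 .
\end{align}
By tensorization and Pinsker,
\begin{align}
\tv\left(\mu_+^{\otimes n}, \mu_-^{\otimes n}\right) \leq \sqrt{\tfrac{n}{2}\,\dkl{\mu_+}{\mu_-}} \lesssim \sqrt{n}\,\epsilon .
\end{align}
Combining this with the lower bound $\tfrac13$ gives $n \gtrsim \epsilon^{-2}$. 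The constants are routine to track.
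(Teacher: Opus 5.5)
Your two-point machinery (disjoint $(1\pm\epsilon)$-windows, Le Cam, Pinsker with tensorization, $\mathrm{KL}\lesssim\epsilon^2$) is the paper's route. Your primary construction does give a valid $\Omega(\epsilon^{-2})$ bound, but only for a family of \emph{pairs} $(\mu_\pm,\nu_\pm)$ sharing one oracle $\lambda$. The statement instead fixes a single target $\nu$ and lets only the proposal range over $\cU$ (the ``$\nu\in\cV$'' there is a typo for ``$\mu\in\cU$'').

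The reconciliation you sketch for this, which is the step you flagged, is impossible. With $\nu$ uniform on $\{a,b\}$ you get $\lambda_\pm(a)/\lambda_\pm(b)=\mu_\pm(b)/\mu_\pm(a)=(\tfrac12\mp\eta)/(\tfrac12\pm\eta)$. This ratio differs between the two hypotheses, and rescaling by $Z_\pm$ cannot change a ratio. So the oracles cannot be made to coincide. If you match them at $a$, a single sample equal to $b$ together with $\lambda(b)$ reveals the hypothesis, and hence $Z$, exactly. That construction gives no lower bound at all.

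The obstruction is general. If $\nu$ and $\lambda$ are both fixed, then $\nu$ is carried by $\{\lambda>0\}$ and $d\mu=Z_\mu\lambda^{-1}\,d\nu$ there. So $\mu$ is determined on $\{\lambda>0\}$ up to the scalar $Z_\mu=\mu(\lambda>0)/\int\lambda^{-1}\,d\nu$, and the only remaining freedom is the mass $\mu$ places on $\{\lambda=0\}$. In your primary construction $\mu_\pm(\lambda>0)=1$, so knowing $\nu$ would reveal $Z$ outright; the fixed-$\nu$ statement is genuinely stronger.

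Hence the hard pair must differ in how much mass the proposal puts off the support of $\nu$. This is exactly the paper's construction: $\cX=\{0,1\}$, $\nu=\delta_1$, $\lambda=\mathbb{I}[X=1]$, and $\mu_p=\ber(p)$ with $p\in[\tfrac14,\tfrac12]$. Then $Z=p$ and $\norm{d\nu/d\mu_p}_\infty=1/p\le 4$. Take $p=\tfrac12$ and $p'=\tfrac12-2\epsilon$. The windows are disjoint since $1/(1-4\epsilon)>(1+\epsilon)/(1-\epsilon)$, and $\dkl{\mu_{1/2}}{\mu_{p'}}=\tfrac12\log\frac{1}{1-16\epsilon^2}\lesssim\epsilon^2$. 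From there your Le Cam and Pinsker steps go through verbatim.

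A minor point: with $\eta=4\epsilon$, the mass $\tfrac12-\eta$ tends to $0$ as $\epsilon\uparrow\tfrac18$, so your claim $\dkl{\mu_+}{\mu_-}\lesssim\eta^2$ is not uniform. Taking $\eta=2\epsilon$ fixes this and still separates the windows.
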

Taken together, the lower bounds presented in this section demonstrate that each of our upper bounds are tight in their respective regimes both in terms of integrated coverage and $f$-divergences.

\subsection{Comparison to Sampling}\label{ssec:est_sampling}

While our primary focus is on partition function estimation, it is instructive to compare our results to those for the related problem of \emph{sampling} from the target distribution $\nu$ given access to samples from the base distribution $\mu$ and an unnormalized density ratio $\lambda$.  In \citet{block2023sample}, the authors examined the question of \emph{approximate rejection sampling}, where the learner is given access to $n$ i.i.d. samples from $\mu$ and the \emph{normalized} density ratio $\frac{d\nu}{d\mu}(\cdot)$ and must produce a sample from a distribution $\nu_n$ that is close to $\nu$ in total variation distance under the assumption of bounded $f$-divergence between $\nu$ and $\mu$.  We generalize their result, as well as that of \citet{flamich2024some} to the setting of \emph{unnormalized} density ratios and arbitrary $f$-divergences.

\begin{proposition}\label{prop:sampling}
    Let $\mu, \nu$ be probability measures on $\cX$ and let $f$ be a convex function as in \Cref{def:fdiv}.  Suppose that we have access to i.i.d. samples $X_1, \ldots, X_n \sim \mu$ and an unnormalized density ratio $\lambda$.  Then there is an algorithm that produces a sample $X_{\jhat}$ from a distribution $\nu_n$ such that $\tv\left( \nu_n, \nu \right) \leq \epsilon$ as long as
    \begin{align}\label{eq:sampling_sample_complexity_coverage}
        n \gtrsim M \cdot \log\left( \nicefrac 1\epsilon \right) \quad \text{where } \Cov_M\left( \nu \| \mu \right) \lesssim \epsilon.
    \end{align}
    In particular, this holds for
    \begin{align}\label{eq:sampling_sample_complexity_fdiv}
        n \gtrsim \log\left( \nicefrac 1\epsilon \right) \cdot \gamma_f\left(\Theta(1) \cdot \nicefrac{ \divf{\nu}{\mu}}{\epsilon} \right).
    \end{align}
\end{proposition}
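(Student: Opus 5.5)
The natural algorithm is self-normalized resampling, i.e.\ sampling importance resampling. Draw $X_1, \ldots, X_n \sim \mu$ and output $X_{\jhat}$, where $\jhat$ is chosen with probability proportional to $\lambda(X_j)$. Because the selection probabilities are $\lambda(X_j)/\sum_i \lambda(X_i) = w(X_j)/\sum_i w(X_i)$ with $w = \nicefrac{d\nu}{d\mu}$, the unknown constant $Z$ cancels. The algorithm is therefore well defined with only unnormalized access, and the analysis can be done entirely in terms of the normalized ratio $w$.

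To bound $\tv(\nu_n,\nu)$, I would test against an arbitrary event $A$ and work with the exchangeable form
\begin{align}
\nu_n(A) = n\,\ee\left[\frac{w(X_1)\ind{X_1\in A}}{\sum_i w(X_i)}\right] = \ee_{X\sim \nu}\left[\ind{X\in A}\, \ee\left[\frac{n}{w(X)+S_{n-1}}\right]\right],
\end{align}
where $S_{n-1}=\sum_{i\ge 2} w(X_i)$ is independent of $X$. This gives $\nu_n(A) - \nu(A) = \ee_\nu[\ind{X\in A}(g(X)-1)]$ with $g(x) = \ee[n/(w(x)+S_{n-1})]$, and hence
\begin{align}
\tv(\nu_n,\nu) \le \ee_\nu\left[(1-g(X))_+\right].
\end{align}
Here I use that $\ee_\nu[g] = 1$, so the positive and negative parts of $g-1$ have equal mass. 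The task is then to show $g(x) \ge 1 - O(\epsilon)$ for all $x$ with $w(x) < M$. The set $\{w \ge M\}$ has $\nu$-mass $\covm{\nu}{\mu} \lesssim \epsilon$ and contributes at most $\epsilon$ to $\tv$, since $(1-g)_+ \le 1$.

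For $w(x) < M$, Jensen's inequality gives $g(x) \ge n/(w(x) + \ee[S_{n-1}]) = n/(w(x)+n-1)$, and since $\ee[S_{n-1}] = n-1$ this bound needs no concentration of $S_{n-1}$. We then have $g(x) \ge 1 - w(x)/n \ge 1 - M/n$. Thus $n \gtrsim M/\epsilon$ yields $\tv \lesssim \epsilon$, which is the bound above with $M/\epsilon$ in place of $M\log(1/\epsilon)$.

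The main obstacle is sharpening $M/\epsilon$ to $M\log(1/\epsilon)$. Here I would change the analysis rather than the algorithm. Rejection sampling with a truncated ratio accepts each proposal $X_j$ with probability $\min(w(X_j),M)/M$ and outputs the first accepted index. Each trial accepts with probability $p = \ee_\mu[\min(w,M)]/M \ge (1-\covm{\nu}{\mu})/M$, so all $n$ trials fail with probability $(1-p)^n \le e^{-n/(2M)} \le \epsilon$ once $n \gtrsim M\log(1/\epsilon)$. Conditional on acceptance, the output has density proportional to $\min(w,M)\,d\mu$, which is within $\tv$ of order $\covm{\nu}{\mu} \lesssim \epsilon$ of $\nu$.

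Since $M$ is not given as input, this rejection scheme still needs $w$ normalized. I would resolve this in one of two ways. The first is to show that the resampling scheme stochastically dominates rejection sampling via a coupling: selecting proportionally to $\lambda$ among the $n$ draws is at least as good as accepting under any threshold, because one can write SIR selection as a mixture that first runs rejection sampling with the data-dependent threshold $\max_j w(X_j)$. The second is to estimate $Z$ to constant accuracy with \Cref{thm:ub_lowertail} using $O(M\log(1/\epsilon))$ samples, and to run the rejection scheme with $\lambda/\Zhat$ and threshold of order $M$. Constant-factor error in $\Zhat$ only rescales $M$ and the accepted density, so the second route seems more robust.

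Finally, the $f$-divergence form follows from the coverage bound $\covm{\nu}{\mu} \le \divf{\nu}{\mu}/(f(M)/M)$ noted in \Cref{sec:prelims}, which is a Markov-type argument. Taking $M = \gamma_f(\Theta(1)\divf{\nu}{\mu}/\epsilon)$ makes the coverage at most of order $\epsilon$.
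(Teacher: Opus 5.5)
There is a genuine gap in the sharpening step. Your Jensen computation for resampling is correct, but it only yields $\tv(\nu_n,\nu) \le \covm{\nu}{\mu} + M/n$, i.e.\ $n \gtrsim M/\epsilon$, and neither proposed route reaches $M\log(\nicefrac 1\epsilon)$.

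Your first route is false. Sampling importance resampling has an intrinsic bias of order $1/n$ even when the ratio is bounded, so it does not dominate rejection sampling, and no analysis of it can give the claimed rate. Take $\mu$ uniform on $\{0,1\}$ with $w(0)=\nicefrac12$ and $w(1)=\nicefrac32$, so $\nu(\{1\})=\nicefrac34$ and $\covm[2]{\nu}{\mu}=0$. Let $k\sim\mathrm{Bin}(n,\nicefrac12)$ count the draws equal to $1$. Then $\nu_n(\{1\})=\ee[3k/(n+2k)]$, and expanding around $k=n/2$ gives $\nu_n(\{1\})=\nicefrac34-\nicefrac{3}{16n}+O(n^{-2})$. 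So resampling needs $n\gtrsim 1/\epsilon$ here, whereas the proposition promises $O(\log(\nicefrac1\epsilon))$. Viewing the selection as rejection with the data-dependent threshold $\max_j w(X_j)$ does not help. That procedure resamples the empirical measure rather than drawing fresh proposals, so rejection sampling's exactness does not carry over.

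Your second route fails as written because \Cref{thm:ub_lowertail} is one-sided. It only gives $(1-\epsilon')Z\le\Zhat\le M'Z$, where $M'$ is a threshold at which coverage is a constant, and $M'$ can be as large as $M$. Then $\lambda/\Zhat=cw$ with $c$ anywhere in $[1/M',2]$. Staying accurate when $c\approx 2$ forces a threshold of order $M$, so the acceptance probability can drop to order $1/(MM')$. For example, for $\nu=\mu(\cdot\mid A)$ with $\mu(A)=1/K$, the quantile estimator returns $\Zhat=KZ$, and you need $K^2\log(\nicefrac1\epsilon)$ samples instead of $K\log(\nicefrac1\epsilon)$.

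The repair is to use the two-sided median-of-means estimator of \Cref{thm:ub_coverage_coverage} at accuracy $\nicefrac12$. Coverage is at most $1$ below $M$ and at most $\covm{\nu}{\mu}$ above it. Hence $\icov[16M]{\nu}{\mu}\le 16M\,(\nicefrac1{16}+\covm{\nu}{\mu})\le 2M$ once $\covm{\nu}{\mu}\le\nicefrac1{16}$. So $O(M\log(\nicefrac1\epsilon))$ samples give $\Zhat\in[Z/2,3Z/2]$ with probability $1-\epsilon$. Now run rejection on fresh samples with $\lambda/\Zhat$ and threshold $2M$. It accepts with probability $\gtrsim 1/M$, and its output law is $\propto\min(w,2M/c)\,d\mu$, which is within $\covm{\nu}{\mu}$ of $\nu$. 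This gives a valid two-stage alternative.

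The paper avoids estimating $Z$ altogether by using $A^*$ sampling. The index $\jhat=\argmin_{i\le n}N_i/\lambda(X_i)$, with Poisson arrival times $N_i$, is invariant to $Z$. Coupling it with the exact infinite-horizon minimizer $K$ gives $\tv(\nu_n,\nu)\le\pp(K>n)$, which is controlled by coverage plus a Poisson tail.
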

We observe that the lower bounds in \citet{block2023sample} for approximate rejection sampling already demonstrate that \eqref{eq:sampling_sample_complexity_fdiv} is tight up to logarithmic factors as the regime in that work is strictly stronger than ours (i.e., access to the normalized density ratio rather than the unnormalized one).  We defer a proof of \Cref{prop:sampling} to \Cref{sec:sampling_proof}, where we generalize analysis of the $A^*$-sampling algorithm in \citet{li2018strong,flamich2024some}.

It is instructive to note that the sample complexity of sampling is strictly smaller than that of partition function estimation. 
To see this note that $ \covm[M]{\nu}{\mu}  \leq M^{-1} \cdot \icov[M]{\nu}{\mu} $ for all $M$, since $\covm[M]{\nu}{\mu}$ is a decreasing function of $M$.    Thus it follows immediately that we have at least an $\epsilon^{-1}$ factor improvement in sample complexity when comparing \eqref{eq:sampling_sample_complexity_coverage} to \eqref{eq:sample_complexity_ub} in \Cref{thm:ub_coverage_coverage}.  The improvement can be even more stark, however, depending on the behavior of the coverage profile.  Indeed, we saw that $M^{-1} \cdot \icov[M]{\nu}{\mu}$ can never shrink more quickly than $M^{-1}$ as $M$ increases; thus, in the superquadratic regime, where $\divf{\nu}{\mu}$ is bounded for quickly-growing $f$, we can have almost a quadratic separation between the sample complexity of sampling and that of partition function estimation.  Indeed, in the extreme case where the density ratio is uniformly bounded, i.e., $\norm{\nicefrac{d\nu}{d\mu}}_{\infty} \leq M$, sampling requires only $n \gtrsim  \log\left( \nicefrac 1\epsilon \right)$ samples to produce an $\epsilon$-approximate sample from $\nu$ in total variation distance, while partition function estimation requires $n \gtrsim \epsilon^{-2}$ samples. One way to interpret this is that estimation depends on the entire coverage profile, while sampling only depends on the coverage at a single value of $M$.

In this way we see that, in contradistinction to the well-known class of `self-reducible' problems, the problem of `counting' (i.e., partition function estimation) is strictly harder than that of `sampling' under general $f$-divergence or coverage constraints.

\section{Application: Improved Finite Sample Bounds for Importance Sampling}\label{sec:snis}

Importance sampling is a fundamental technique in statistics and machine learning which serves as a primitive in many domains, including  causal inference, reinforcement learning \citep{precup2000eligibility, thomas2015high}, variational inference \citep{burda2015importance}, and probabilistic programming \citep{wingate2011lightweight}.
It serves as a tool for estimating expectations of a function $g$ (we focus on the bounded, positive case)  under a target distribution $\nu$   using samples from a proposal distribution $\mu$.
Given i.i.d.\ samples $X_1, \ldots, X_n \sim \mu$, the importance sampling estimator for $ \nu_g =  \ee_{\nu}[g(X)]$ is
\begin{align}
    \hat{g}_{\mathrm{IS}} = \frac{1}{n} \sum_{i=1}^n \frac{d \nu}{d \mu}(X_i) g(X_i).
\end{align}
This estimator is unbiased, i.e., $\ee\left[ \hat{g}_{\mathrm{IS}} \right] = \nu_g$ and standard results on importance sampling estimators bound the variance of $\hat{g}_{\mathrm{IS}}$ which can be bounded rely on the $\chi^2$-divergence between the target and proposal distributions, since the variance is bounded by $\chi^2(\nu \| \mu)$.

Towards obtaining sharper bounds, we consider the target distribution weighted by the function $g$, i.e., the measure $\nu \cdot g$ defined with density given by
$ \nicefrac{g(x)}{ \nu_g} \cdot \nicefrac{d \nu}{d \mu}(x)$ with respect to $\mu$. 
This allows us to define the coverage between the weighted target distribution $\nu \cdot g$ and the proposal distribution $\mu$, which in this context maybe expressed as 
\begin{align}
    \covm{ \nu \cdot g }{\mu} = \frac{1}{ \nu_g } \int g(x) \frac{d \nu}{d \mu}(x) \mathbb{I} \left[  g(x) \frac{d \nu}{d \mu}(x) \geq M \cdot \nu_g \right]  d\mu(x) 
\end{align}
Further this allows us to define the integrated coverage and the $f$-divergence $\divf{\nu \cdot g}{\mu}$  between the weighted target distribution $\nu \cdot g$ and the proposal distribution $\mu$ in the natural way. 

Applying \Cref{thm:ub_coverage_coverage}, we obtain finite-sample bounds for importance sampling estimators in terms of general $f$-divergences. 

\begin{theorem}[Importance Sampling] \label{thm:is}
    Let $\mu, \nu$ be measures and $g$ be a bounded, positive function.
    Then the importance sampling estimator $\hat{g}_{\mathrm{IS}}$ satisfies $\abs{\hat{g}_{\mathrm{IS}} - \nu_g } \leq \epsilon \cdot \nu_g$ with probability at least $1 - \delta$, as long as $n \gtrsim \nicefrac{  M_{\epsilon, \delta} }{\epsilon}$.
    where $M_{\epsilon, \delta}$ satisfies $\icov[M_{\epsilon, \delta}]{ g \nu }{\mu} \leq \nicefrac{\epsilon \delta }{6}$.
\end{theorem}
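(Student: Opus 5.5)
The plan is to analyze the plain (untruncated) importance sampling estimator directly. Set $W_i = g(X_i)\frac{d\nu}{d\mu}(X_i)/\nu_g$. These are i.i.d., nonnegative, and have mean one under $\mu$. Because $W_i$ is exactly the density ratio $\frac{d(g\nu)}{d\mu}$ evaluated at $X_i$, we have $\hat g_{\mathrm{IS}}/\nu_g = \frac1n\sum_i W_i$. The claim therefore reduces to showing that the empirical mean of a nonnegative, mean-one density ratio concentrates within $\epsilon$ with probability $1-\delta$.

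The first step is to truncate at $M = M_{\epsilon,\delta}$. Write $W_i = W_i\ind{W_i < M} + W_i\ind{W_i\geq M} =: A_i + B_i$. The truncated part $A_i$ takes values in $[0,M]$. Its mean is $1 - \covm{g\nu}{\mu}$, and its second moment satisfies
\[
\ee[A_i^2] \le \int_0^M 2t\,\mu(W\ge t)\,dt \lesssim \icov{g\nu}{\mu}+1 .
\]
This uses $t\mu(W\geq t) \le \covm[t]{g\nu}{\mu}$ by Markov restricted to the event. This variance bound is the content of the truncated-variance lemma the paper advertises (\Cref{lem:variance_ub}), and I would invoke it in exactly this form. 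Bernstein then gives
\[
\Big|\tfrac1n\sum_i A_i - \ee A\Big| \lesssim \sqrt{\frac{(1+\icov{g\nu}{\mu})\log(1/\delta)}{n}} + \frac{M\log(1/\delta)}{n}.
\]
With $n\gtrsim M/\epsilon$ and $\icov{g\nu}{\mu}\le \epsilon\delta M$ (after normalizing the hypothesis), both terms are at most about $\epsilon$, up to the logarithmic factor. If necessary I would absorb that factor using the $\delta$ in the hypothesis, or apply Chebyshev instead. Chebyshev gives failure probability $\frac{\ee A^2}{n\epsilon^2}\lesssim \frac{M\epsilon\delta + 1}{n\epsilon^2}$, which is at most $\delta$ once $n\gtrsim M/\epsilon$, provided $M\gtrsim 1/(\epsilon\delta)$. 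That condition is automatic, since $\icov{g\nu}{\mu}\ge \min(M,1)\cdot\covm[1]{\cdot}{\cdot}$ is of order $1$, and so $M\gtrsim 1/(\epsilon\delta)$. This explains the form $\epsilon\delta/6$ in the hypothesis: Chebyshev is the intended tool and no $\log(1/\delta)$ appears.

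The second step controls the bias and the tail part. The bias satisfies $1-\ee A = \covm{g\nu}{\mu}\le \icov{g\nu}{\mu}/M \le \epsilon\delta/6$, using the monotonicity observation in the paper. For the tail part $B_i$, which is nonnegative, a union bound gives $\pp(\exists i: B_i\ne 0)\le n\,\mu(W\ge M)\le n\covm{g\nu}{\mu}/M$. Alternatively, Markov gives $\pp\big(\frac1n\sum B_i > \epsilon/3\big)\le 3\covm{g\nu}{\mu}/\epsilon\le \delta/2$, and this Markov bound works for every $n$. I would use Markov here, since the union bound would introduce a forbidden factor of $n$. Combining the three pieces with a union bound over failure events, each of size $\delta/3$, gives $|\frac1n\sum W_i - 1|\le\epsilon$.

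The main obstacle is making the constants line up so that the stated hypothesis $\icov{g\nu}{\mu}\le\epsilon\delta/6$ (read as normalized by $M$, consistent with \Cref{thm:ub_coverage_coverage}) simultaneously controls the variance term in Chebyshev, the bias, and the Markov tail. Unlike \Cref{thm:ub_coverage_coverage}, the estimator here cannot be modified, for example by median-of-means, so all three $\delta$-dependencies have to come through polynomially.
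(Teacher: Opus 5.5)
Your argument is correct, and its skeleton is the one the paper intends. There is no separate written proof; \Cref{thm:is} is meant to follow by running the sample-mean lemma \Cref{thm:ub_sample_mean_icov} on the normalized measure $g\nu/\nu_g$, whose density ratio is your $W$. That lemma also truncates at $M$, bounds the bias by $\covm[M]{g\nu}{\mu}\le\icov[M]{g\nu}{\mu}/M$, and controls the truncated second moment by integrated coverage (\Cref{lem:variance_ub}). It handles the tail part with Markov, which is what puts the polynomial $\delta$ into the hypothesis. Your reading of that hypothesis as normalized by $M$ is the right one.

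The one substantive difference is the concentration step for the truncated part. The paper uses Bernstein, whose range term of order $M\log(1/\delta)/m$ forces $m\gtrsim M\log(1/\delta)/\epsilon$. Taken at face value, that route yields the theorem only with an extra $\log(1/\delta)$ that the statement omits. Your Chebyshev step instead uses the fact that the hypothesis already makes the variance tiny, $\ee[A_1^2]\le\icov[M]{g\nu}{\mu}\le\epsilon\delta M/6$. Hence $\pp\bigl(\abs{\tfrac1n\sum_i A_i-\ee A_1}>\epsilon/3\bigr)\le 9\,\ee[A_1^2]/(n\epsilon^2)\le 3\delta M/(2n\epsilon)\le\delta/3$ once $n\ge 9M/(2\epsilon)$, which is exactly the claimed $n\gtrsim M/\epsilon$. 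What Bernstein buys in the paper is a lower-tail bound under the $\delta$-free condition $\icov[M]{g\nu}{\mu}\le\epsilon M/3$, at only logarithmic cost in $\delta$. For the two-sided statement, though, the Markov step has already forced $\delta$ into the hypothesis, and then Chebyshev is the sharper tool.

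One repair. The $+1$ in your second-moment bound is spurious: your own computation $\ee[A_1^2]\le\int_0^M 2t\,\mu(W\ge t)\,dt$, together with $t\,\mu(W\ge t)\le\covm[t]{g\nu}{\mu}$, already gives $\ee[A_1^2]\le 2\,\icov[M]{g\nu}{\mu}$. The reason you give for discarding the $+1$ is also false: $\covm[1]{g\nu}{\mu}$ need not be of order one. If $W$ equals $1-\eta$ with $\mu$-probability $1-p$ and $1-\eta+\eta/p$ with probability $p$, it equals $\eta+p(1-\eta)$, which can be made arbitrarily small. The conclusion $\icov[M]{g\nu}{\mu}\ge 1/2$ for $M\ge 1$ does hold, since $\covm[t]{g\nu}{\mu}=1-\ee_\mu[W\ind{W<t}]\ge 1-t$. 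With the sharper variance bound, however, you never need it.
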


This generalizes the standard variance bounds for importance sampling, which correspond to the case of the $\chi^2$-divergence.
The dependence on probability parameter $\delta$ can be improved to $\log (1/\delta ) $ using the median of means technique as in the proof of \Cref{thm:ub_coverage_coverage} but state it in this form to maintain the form of the estimator that is commonly used in practice.

More importantly, our results bound the error as jointly as function of $g , \nu$ and $\mu$, rather than separately e.g. in terms of $D_f(\nu \| \mu)$ and $\|g\|$ \cite{chatterjee2018sample}. 
This is particularly useful in settings when we get to design the proposal distribution $\mu$ based on the target functions $g$ and distribution $\nu$, bound above can be used to optimize the proposal distribution $\mu$ to minimize the required sample complexity for a given set of target functions $g$. 
In particular, the optimal proposal distribution is a well studied problem in importance sampling (See \citep{owen2013monte, llorente2025optimalityimportancesamplinggentle} for a survey-level treatment) and is typically chosen to minimize the variance of the importance sampling estimator. 
Our results suggest that a more refined objective is to minimize the integrated coverage between the weighted target distribution.

Our techniques also extend to self-normalized importance sampling (SNIS) \cite{owen2013monte}, which estimates expectations $\ee_{\nu}[g(X)]$ when we can only sample from a proposal distribution $\mu$ and evaluate the unnormalized density ratio $\lambda(x) = \frac{d\nu}{d\mu}(x) \cdot Z$ for some unknown normalizing constant $Z = \int \lambda(x) d\mu(x)$.
Given i.i.d.\ samples $X_1, \ldots, X_n \sim \mu$, the SNIS estimator is $\hat{g}_{\mathrm{SNIS}} =  \nicefrac{ \sum_{i=1}^n \lambda(X_i) g(X_i)  }{ \sum_{i=1}^n \lambda(X_i) }$.
Under the assumption that $\mu$ is absolutely continuous with respect to $\nu$, the SNIS estimator is asymptotically unbiased \cite[Theorem 9.2]{owen2013monte}.
The asymptotic variance is $\mathbb{E}_{X \sim \mu}\left[ \left( g(X) - \mathbb{E}_{X \sim \nu}[g(X)] \right)^2 \cdot \left(\frac{d\nu}{d\mu}(X) \right)^2 \right]$ 
which for bounded functions $g$ is at most $\|g\|_{\infty}^2 \cdot \chi^2(\nu \| \mu)$.
These asymptotic bounds yield finite-sample sample complexity of $n \geq \frac{ 4 \|g\|_{\infty}^2 \chi^2( \nu \| \mu) }{ \delta \epsilon^2 }$   \cite[Proposition 9]{metelli2020importance}. 

Our results yield an improved finite-sample analysis of the SNIS estimator in terms of general $f$-divergences, applicable even when the $\chi^2$-divergence is infinite.

\begin{theorem}[Self-Normalized Importance Sampling]  \label{thm:snis}
    Let $\mu, \nu$ be measures and $g$ be a bounded, positive function.
    Then the self-normalized importance sampling estimator $\hat{g}_{\mathrm{SNIS}}$ satisfies $\abs{\hat{g}_{\mathrm{SNIS}} - \nu_g } \leq \epsilon \cdot \nu_g$ with probability at least $1 - \delta$, as long as $n \gtrsim \nicefrac{  M_{\epsilon, \delta}  }{\epsilon}$,
    where $M_{\epsilon, \delta}$ satisfies $\icov[M_{\epsilon, \delta}]{ g \nu }{\mu} \leq \frac{\epsilon \delta }{6}$ and $\icov[M_{\epsilon, \delta}]{ \nu }{\mu} \leq \frac{\epsilon \delta }{6}$.
\end{theorem}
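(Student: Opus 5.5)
We will prove \Cref{thm:snis} by writing the SNIS estimator as a ratio of two importance-sampling-type averages and controlling each one with \Cref{thm:is}. Write $w(x) = \nicefrac{d\nu}{d\mu}(x)$ and $\lambda = Z w$. The unknown $Z$ cancels in the ratio, so
\begin{align}
    \hat{g}_{\mathrm{SNIS}} = \frac{\frac1n \sum_{i} w(X_i) g(X_i)}{\frac1n \sum_i w(X_i)} =: \frac{A_n}{B_n}.
\end{align}
Here $A_n = \hat{g}_{\mathrm{IS}}$ is exactly the importance sampling estimator of $\nu_g$. Likewise, $B_n$ is the importance sampling estimator of $\ee_\nu[1] = 1$, i.e. the case $g \equiv 1$. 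In that case the weighted measure $1\cdot\nu$ is just $\nu$, so its integrated coverage is $\icov[M]{\nu}{\mu}$.

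The plan has three steps.

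\textbf{Step 1.} Apply \Cref{thm:is} with accuracy $\epsilon' = \epsilon/3$ and failure probability $\delta/2$ to the pair $(g,\nu)$. This gives $\abs{A_n - \nu_g} \le \epsilon' \nu_g$ with probability at least $1-\delta/2$.

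\textbf{Step 2.} Apply \Cref{thm:is} again, with the same parameters, to the pair $(1,\nu)$. This gives $\abs{B_n - 1} \le \epsilon'$ with probability at least $1-\delta/2$.

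Both applications are licensed by the two integrated coverage hypotheses in the statement. The constants must be reconciled: \Cref{thm:is} requires $\icov[M]{\cdot}{\mu} \le \epsilon'\delta'/6$, which here equals $\epsilon\delta/36$. The statement instead gives $\epsilon\delta/6$, so I would either absorb the factor into the $\gtrsim$ or simply restate the hypothesis with a smaller constant. The sample size $n \gtrsim M_{\epsilon,\delta}/\epsilon$ then serves both applications, up to constants. The function $1$ is bounded and positive, so \Cref{thm:is} applies to it.

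\textbf{Step 3.} Take a union bound, so that with probability at least $1-\delta$ both events hold. Then
\begin{align}
    \frac{(1-\epsilon')\nu_g}{1+\epsilon'} \le \hat{g}_{\mathrm{SNIS}} \le \frac{(1+\epsilon')\nu_g}{1-\epsilon'}.
\end{align}
For $\epsilon' = \epsilon/3$ with $\epsilon \le 1$, we have $\frac{1+\epsilon'}{1-\epsilon'} \le 1+3\epsilon' = 1+\epsilon$, and $\frac{1-\epsilon'}{1+\epsilon'} \ge 1-2\epsilon' \ge 1-\epsilon$. This gives $\abs{\hat{g}_{\mathrm{SNIS}} - \nu_g} \le \epsilon\,\nu_g$, as required.

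The main obstacle is only the bookkeeping in Steps 1 and 2: checking that \Cref{thm:is}, applied with $g\equiv 1$, yields exactly the hypothesis on $\icov{\nu}{\mu}$, and tracking the constants through $\epsilon'$ and $\delta/2$. If \Cref{thm:is} is proved via a truncation-plus-Chebyshev argument with truncation at $M\nu_g$, the same argument applies verbatim for $g\equiv1$ with truncation at $M$. I would therefore re-run it directly if the constants in the statement of \Cref{thm:is} do not line up, rather than invoking it as a black box.
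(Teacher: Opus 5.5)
Your proposal is correct and follows the argument the paper has in mind. The paper gives no separate proof of \Cref{thm:snis}, only the remark that $\hat g_{\mathrm{SNIS}}$ is the ratio of the importance-sampling estimator for the weighted measure $g\nu$ and the one for $\nu$ (the case $g\equiv 1$); that is exactly your decomposition, and your union bound and $(1\pm\epsilon/3)$ ratio algebra supply the remaining details correctly.

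The one option to discard is absorbing the constant mismatch into $n\gtrsim\cdot$. That constant sits in the condition defining $M_{\epsilon,\delta}$, and $M\mapsto M^{-1}\icov[M]{\nu}{\mu}$ need not shrink by a constant factor when $M$ is multiplied by a constant. Instead, use one of your other two options: restate the hypothesis with a smaller absolute constant, or re-run the single-sum truncation argument (as in \Cref{thm:ub_sample_mean_icov}) at accuracy $\epsilon/3$ and failure probability $\delta/2$. In either case, read the hypothesis as $\icov[M_{\epsilon,\delta}]{\cdot}{\mu}\lesssim\epsilon\delta\, M_{\epsilon,\delta}$. Without the factor $M_{\epsilon,\delta}$ it would force $M_{\epsilon,\delta}\lesssim\epsilon\delta$, which cannot be the intended condition.
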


Note that the requirement on the integrated coverage of both the weighted target distribution $g \nu$ and the target distribution $\nu$ is necessary since the SNIS estimator is a ratio of two importance sampling estimators where the denominator estimates the normalizing constant $Z$.

\section{Proof Techniques for Upper Bounds}\label{sec:proof_ideas}

In this section, we outline some of the key proof techniques used to establish our main upper bounds, with a focus on technical results that may be of independent interest.  We defer full proofs  of all results to the appendix.

\subsection{Upper Bounds on Sample Complexity}\label{ssec:proof_ub}
Our upper bound relies on the same general proof strategy and the same estimator, namely the \emph{median-of-means} \citep{lugosi2019mean, alon1996space}
Formally, given $n$ samples from $\mu$, we partition them into $k$ groups of size $m = \lfloor \nicefrac{n}{k} \rfloor$ and compute the sample mean within each group, i.e., for each group $j \in [k]$, we let
\begin{align}\label{eq:zhat}
    \Zhat_j = \frac{1}{m} \sum_{i=1}^m \lambda(X_i^{(j)}) \quad \text{and} \quad \Zhat = \Med\left( \Zhat_1, \dots, \Zhat_k \right),
\end{align}
where $\Med(\cdot)$ denotes the median operator.  Using standard analysis (cf. \Cref{lem:median_of_means}), it suffices to show that each group mean $\Zhat_j$ is within a constant factor of the true partition function $Z$ with constant probability, say $\nicefrac 23$.  Thus, the main technical challenge is to analyze the concentration of the sample mean $\Zhat_j$ around $Z$.  While we defer full proofs of \Cref{thm:ub_coverage_coverage,thm:ub_fdiv} to \Cref{sec:proof_ub_coverage,app:ub_coverage_proof}, we sketch the main ideas below.

\paragraph{Proof Sketch of \Cref{thm:ub_coverage_coverage}.}
    The first natural approach to analyzing the concentration of $\Zhat_j$ would be to apply standard concentration inequalities such as Chernoff bounds. 
    Unformately as discussed earlier, this approach fails because the density ratio $\nicefrac{d\nu}{d\mu}$ may be heavy-tailed, and thus the variance of the summands may be infinite.
    In order to overcome this challenge, we look at the concentration of truncated versions of the density ratio; note that this truncation purely an analytical tool and the estimator $\Zhat_j$ does not involve any truncation.  We thus analyze the `bulk' of the density ratios and the `tail' separately.  To lower bound $\Zhat_j$, we can ignore the tail entirely, while to upper bound the estimate, we apply Markov's inequality to the definition of coverage.

    Thus, much of the effort arises in controlling the average of the truncated density ratios, whose expectation can be appropriately controlled via the definition of coverage.  One might at first like to apply a standard Chernoff bound, because truncated at $M$ the summands are bounded, but this could at best achieve a sample complexity scaling as $M/\epsilon^2$ where $M$ is such that $\Cov_M(\nu \| \mu) \leq \epsilon$, which is loose.  Instead, we apply Bernstein's inequality and bound the \emph{variance} of the truncated density ratio, which we show can be controlled by the integrated coverage itself (\Cref{lem:variance_ub}).  This interesting \emph{self-normalization} type property that relates the variance of the truncated density ratio to the bias introduced by truncation is the key technical tool that allows us to improve the dependence on $\epsilon$ in the sample complexity. \hfill$\blacksquare$

We can then use this result to prove \Cref{thm:ub_fdiv} by relating the integrated coverage to the $f$-divergence, as we sketch below.
\paragraph{Proof Sketch of \Cref{thm:ub_fdiv}.}
    In order to get a sample complexity that depends on the $f$-divergence, we need to relate the integrated coverage to the $f$-divergence. 
    At first glance, this seems simple since an application of Markov's inequality directly relates the two quantities (\Cref{lem:cov_fdiv}).
    Unfortunately, this direct relationship is not sufficient to get us the desired bound on the integrated coverage. 
    Towards this end, we need to use the growth properties of the function $f$. In particular, since we are interested primarily in the subquadratic regime, we use the assumption that $t^{-2} \cdot f(t)$ is decreasing for $t \geq c$ to relate the integrated coverage to the $f$-divergence as shown in \Cref{lem:fdiv_second_moment}. \hfill$\blacksquare$

\subsection{Asymmetric Estimation via Coverage}\label{ssec:proof_ub_fdiv}

In order to obtain improved dependence on $\epsilon$, in \Cref{thm:ub_lowertail} we relaxed the symmetric estimation requirement to only tightly controlling the lower tail while allowing a looser constant factor upper tail.  The technical approach is built on the following lemma which can be read as a generalization of the Paley-Zygmund inequality to $f$-divergences, whose proof can be found in \Cref{app:paley_zygmund}.
\begin{lemma}\label{lem:gen_paley_zygmund}
    Let $\mu, \nu$ be probability measures on $\cX$ and let $f$ be a convex function as in \Cref{def:fdiv}.  Then for any $0 < \epsilon, u < 1$, it holds that
    \begin{align}
        \pp_{X \sim \mu}\left( \frac{d\nu}{d\mu}(X) \geq 1 - \epsilon \right) \geq \nicefrac{(1-u)\epsilon}{M} \quad \text{where } \Cov_M\left( \nu \| \mu \right) \leq u \cdot \epsilon.
    \end{align}
    In particular, $\pp_{X \sim \mu}\left( \nicefrac{d\nu}{d\mu}(X) \geq 1 - \epsilon \right) \geq \sup_{0 < u  < 1} \nicefrac{(1-  u)\cdot \epsilon}{\gamma_f\left( \nicefrac{\divf{\nu}{\mu}}{u \cdot \epsilon} \right)}$.
\end{lemma}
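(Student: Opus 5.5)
The plan is to write the normalization identity for $\nu$ as a decomposition of $\nu$-mass into three regions of the density ratio, and then bound two of the three pieces using the hypothesis. Write $w = \nicefrac{d\nu}{d\mu}$, allowing $w = \infty$ on the part of $\nu$ singular to $\mu$. Split $\cX$ into the regions $\{w < 1-\epsilon\}$, $\{1-\epsilon \le w < M\}$ and $\{w \ge M\}$, so that
\begin{align}
    1 = \nu(w < 1-\epsilon) + \nu(1-\epsilon \le w < M) + \nu(w \ge M).
\end{align}
The singular part lies in $\{w \geq M\}$, so the first two $\nu$-masses are integrals against $\mu$ of $w$ times an indicator.

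\textbf{Bounding the pieces.} On the first region,
\begin{align}
    \nu(w<1-\epsilon) = \ee_\mu\left[w\,\ind{w<1-\epsilon}\right] \le 1-\epsilon .
\end{align}
The third piece is exactly $\covm{\nu}{\mu}$, which is at most $u\epsilon$ by assumption. Rearranging the decomposition therefore gives
\begin{align}
    \ee_\mu\left[w\,\ind{1-\epsilon\le w<M}\right] \ge 1 - (1-\epsilon) - u\epsilon = (1-u)\epsilon .
\end{align}
Since $w < M$ on this middle region, the left side is at most $M \cdot \pp_\mu(w \ge 1-\epsilon)$. Dividing by $M$ yields the first claim, $\pp_\mu(w \geq 1-\epsilon) \geq \nicefrac{(1-u)\epsilon}{M}$. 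This step is elementary; it is a Paley--Zygmund-type argument in which truncation at $M$ plays the role of the second moment.

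\textbf{The $f$-divergence version.} For the ``in particular'' statement, I would set $M = \gamma_f\left(\nicefrac{\divf{\nu}{\mu}}{u\epsilon}\right)$, so that $\nicefrac{f(M)}{M} \ge \nicefrac{\divf{\nu}{\mu}}{u\epsilon}$. It then suffices to check $\covm{\nu}{\mu} \le \nicefrac{M\,\divf{\nu}{\mu}}{f(M)}$, which is the Markov-type relation of \Cref{lem:cov_fdiv}. If I have to verify it directly, I would argue as follows.
\begin{itemize}
    \item Convexity together with $f(1)=0$ makes $t \mapsto \nicefrac{f(t)}{t}$ nondecreasing on $[1,\infty)$.
    \item Hence on $\{M \le w < \infty\}$ we have $w \le \nicefrac{M f(w)}{f(M)}$.
    \item On the singular part, the term $f'(\infty)$ in \eqref{eq:fdiv} dominates $\nicefrac{f(M)}{M}$ times its mass.
    \item Combining these, $\nu(w \ge M) \le \nicefrac{M}{f(M)} \cdot \divf{\nu}{\mu} \le u\epsilon$.
\end{itemize}
Applying the first part with this $M$ and taking the supremum over $u \in (0,1)$ finishes the proof.

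\textbf{Expected obstacle.} The main point needing care is the bookkeeping when $\nu \not\ll \mu$, namely confirming that the singular mass is charged to the coverage term and not to the first region. A second technicality is what happens when $\gamma_f$ is infinite (linear $f$). In that case the bound is trivially true, since its right-hand side is $0$.
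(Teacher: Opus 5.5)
Your proof is correct and follows the paper's argument. The paper likewise splits $\ee_\mu[\nicefrac{d\nu}{d\mu}]$ at $1-\epsilon$ and at $M$, bounds the middle piece by $M\,\pp_\mu(\nicefrac{d\nu}{d\mu}\ge 1-\epsilon)$, and obtains the $f$-divergence version from \Cref{lem:cov_fdiv}. You re-derive that lemma directly from monotonicity of $t\mapsto\nicefrac{f(t)}{t}$, and you correctly charge the singular mass to the coverage term, whereas the paper tacitly uses $\nu\ll\mu$.

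One caveat applies to both you and the paper. If $\divf{\nu}{\mu}=0$, then $M=\gamma_f(0)=1$ and $f(M)=0$, so the step $\covm{\nu}{\mu}\le\nicefrac{M\,\divf{\nu}{\mu}}{f(M)}$ is undefined. When $f$ also vanishes on a neighbourhood of $1$, the ``in particular'' bound genuinely fails. For example, take $f(t)=(t-2)_+ + (\nicefrac12-t)_+$, let $\nicefrac{d\nu}{d\mu}\in\{\nicefrac12,2\}$ with $\mu$-probabilities $\nicefrac23,\nicefrac13$, and $\epsilon=\nicefrac25$. Then the left side is $\nicefrac13$ while the right side is $\sup_{0<u<1}(1-u)\epsilon=\epsilon$. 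So this case must be excluded or handled separately. It is trivial when $f>0$ off $\{1\}$, since then $\nu=\mu$.
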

While \Cref{lem:gen_paley_zygmund} is phrased in terms of $f$-divergences and likelihood ratios, it could otherwise be stated for arbitrary nonnegative random variables, providing a lower bound on the probability that such a variable exceeds a $(1 - \epsilon)$ fraction of its mean.  Indeed, taking $f(t) = (t - 1)^2$ and letting $Y \geq 0$, we see that we recover Paley-Zygmund up to a factor of $4$.  We similarly recover the standard generalization of Paley-Zygmund to higher moments by taking $f(t) = t^p - 1$ for $p > 1$.  In the case that we only assume a $\abs{X}\log(\abs{X})$ moment, we derive a bound that, to the best of our knowledge, is novel, showing that a nonnegative random variable exceeds $(1-\epsilon)$ times its mean with probability at least $\epsilon \cdot \exp\left( - O(1) \cdot \nicefrac{\ee[X \log(X)]}{\epsilon \cdot \ee\left[ X \right]} \right)$.  We now sketch the proof of \Cref{thm:ub_lowertail} using \Cref{lem:gen_paley_zygmund}; full details can be found in \Cref{sec:ub_lowertail_proof}.
\paragraph{Proof Sketch of \Cref{thm:ub_lowertail}.}
    Unlike the previous upper bounds, we do not use a median-of-means estimator here, instead simply taking the $\left( 1 - \alpha \right)$-quantile of the samples for appropriately chosen $\alpha$.  The key idea is to use \Cref{lem:gen_paley_zygmund} to lower bound the probability that a single sample exceeds $(1 - \epsilon) Z$, which then allows us to control the lower tail of the quantile estimator via a Chernoff bound.  We then control the upper tail of the estimator via the definition of coverage to ensure that the quantile does not exceed $M \cdot Z$ with high probability.  \hfill$\blacksquare$

\section{Discussion and Related Work}\label{sec:discussion}

In this work, we have provided a complete characterization of the sample complexity of partition function estimation in terms of integrated coverage and $f$-divergences between the target and proposal distributions.  
We now discuss related work.

\paragraph{Partition Function Estimation.}  Partition function estimation is a classical problem with roots dating back to Gibbs and Boltzman in statistical mechanics, with many classical techniques developed and analyzed in the intervening years \citep{neal2001annealed,gelman1998simulating,jerrum2003counting,owen2013monte}.  One common line of work in theoretical computer science and statistical mechanics assumes some kind of discrete structure and constructs sophisticated annealing schedules and samplers in order to estimate partition functions of complex models such as the Ising model or counting problems such as the permanent of a matrix \citep{jerrum2003counting,bezakova2008accelerating,huber2015approximation,kolmogorov2018faster,stefankovic2009adaptive}.  Another line of work, especially in scientific and Bayesian applications, focuses on continuous distributions in Euclidean space and makes use of strong regularity assumptions in order to ensure quality \citep{chehab2023provable,grosse2013annealing,liu2015estimating}.  In contradistinction to those works, we focus on a minimal assumption setting where we characterize the difficulty of partition function estimation in terms of purely information theoretic quantities, without making any structural or regularity assumptions.

\paragraph{Sampling and Importance Sampling.}  In parallel to partition function estimation, the closely related problem of approximate sampling from a target distribution $\nu$ given sample access to a proposal distribution $\mu$ and the ability to evaluate the density ratio $\nicefrac{d\nu}{d\mu}$ in order to construct mean estimates under target distributions has also been extensively studied \citep{knuth1976mathematics,cappe2005inference,owen2013monte}.  Most relevant to our work is those of \citet{devroye2016sub,chatterjee2018sample}.  The former, while not explicitly about partition function estimation, considers heavy-tailed mean estimation under polynomial moment constraints; our $f$-divergence recover their results when considering Renyi divergences.  On the other hand, \citet{chatterjee2018sample} consider importance sampling and examine the sample complexity under KL constraints, proving that exponentially many samples in the KL divergence are necessary and sufficient for importance sampling to succeed, but do not consider more general $f$-divergences or coverage.  Our results generalize and sharpen theirs, recovering their bounds as a special case and unifying these paradigms.  More recently, \citet{block2023sample} studied the problem of approximate rejection sampling, which is closely related to our results, except they assume \emph{normalized} density ratio access.  On the other hand, \citet{li2018strong} consider sampling with unnormalized density ratios but assume bounded density ratios, whereas \citet{flamich2024some} generalizes this to a bounded KL constraint.  We apply the same algorithm, introduced in \citet{maddison2014sampling}, and generalize these prior results to arbitrary $f$-divergences and coverage constraints.

\bibliographystyle{plainnat}
\bibliography{refs}

\tableofcontents

\appendix

\crefalias{section}{appendix} %

\section{Technical Lemmata}\label{app:technical_lemmata}
In this section, we collect some technical lemmata used in the proofs of our main results.  We begin with two concentration inequalities, before proving \Cref{lem:cov_fdiv} relating coverage to $f$-divergences.

\subsection{Concentration Inequalities}
We first state the standard Chernoff bound (cf. for example \citep{boucheron2013concentration}).
\begin{lemma}[Chernoff Bound]\label{lem:chernoff}
    Let $X_1, \dots, X_m$ be independent random variables such that for each $i \in [m]$, it holds that $X_i \in [0, M]$ almost surely and $\mu = \ee[X_i]$.  Then for any $1 \geq t > 0$ if holds that
    \begin{align}
        \pp\left( \frac 1m \sum_{ i = 1}^m X_i \leq (1 - t) \mu \right) \leq e^{- \frac{m \mu t^2 }{2 M}}.
    \end{align}
    Moreover, for any $t > 0$, it holds that
    \begin{align}
        \pp\left( \frac 1m \sum_{i = 1}^m X_i \geq (1 + t) \mu \right) \leq \exp\left( - \nicefrac{m \mu}{M} \cdot \left( (1 + t)\log(1 + t) - t \right) \right).
    \end{align}
\end{lemma}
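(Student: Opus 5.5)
The plan is the standard Cramér–Chernoff method: exponentiate, apply Markov's inequality, tensorize over the independent summands, and bound each moment generating function through the boundedness $X_i \in [0,M]$. The case $M = 1$ can be assumed after rescaling, since dividing every $X_i$ by $M$ sends $\mu \mapsto \mu/M$ and leaves both events unchanged. Both exponents then scale as $m\mu/M$, as claimed.

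\emph{Key step: the MGF bound.} For $x \in [0,1]$ and any real $\lambda$, convexity of $x \mapsto e^{\lambda x}$ gives $e^{\lambda x} \le 1 + x(e^\lambda - 1)$. Taking expectations and using $1+y \le e^y$ yields
\begin{align}
\ee\left[e^{\lambda X_i}\right] \le \exp\left(\mu (e^\lambda - 1)\right),
\end{align}
so every summand is dominated in MGF by a Poisson variable with mean $\mu$. This holds for $\lambda$ of either sign, which is what lets one bound serve both tails.

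\emph{Upper tail.} For $\lambda > 0$, Markov's inequality applied to $e^{\lambda \sum_i X_i}$, together with independence, gives
\begin{align}
\pp\left(\textstyle\sum_i X_i \ge (1+t)m\mu\right) \le \exp\left(m\mu (e^\lambda - 1) - \lambda (1+t) m \mu\right).
\end{align}
Choosing $\lambda = \log(1+t)$ makes the exponent $-m\mu\left((1+t)\log(1+t) - t\right)$, which is exactly the stated bound once $\mu$ is replaced by $\mu/M$.

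\emph{Lower tail.} Apply the same argument with $-\lambda$, $\lambda > 0$, and choose $\lambda = -\log(1-t)$ for $t < 1$. This gives exponent $-m\mu\, h(t)$ with
\begin{align}
h(t) = (1-t)\log(1-t) + t.
\end{align}
It remains to show $h(t) \ge t^2/2$ on $[0,1)$. We have $h(0) = h'(0) = 0$ and $h''(t) = 1/(1-t) \ge 1$, so integrating twice gives $h(t) \ge t^2/2$. The endpoint $t = 1$ is handled either by continuity or directly: $\pp\left(\sum_i X_i \le 0\right) \le \prod_i \ee\left[e^{-\lambda X_i}\right] \to \prod_i \pp(X_i = 0) \le e^{-m\mu}$ as $\lambda \to \infty$, which is at most $e^{-m\mu/2}$.

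There is no real obstacle here. The only care needed is the elementary inequality $h(t) \ge t^2/2$ and the $t = 1$ boundary case, both of which are routine.
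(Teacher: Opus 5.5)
Your argument is correct and complete. The paper gives no proof of its own, only a citation to Boucheron, Lugosi, and Massart, and your Cram\'er--Chernoff derivation is the standard textbook proof behind that citation. It uses rescaling to $[0,1]$, the convexity bound $\mathbb{E}[e^{\lambda X_i}] \le \exp(\mu(e^{\lambda}-1))$ for all real $\lambda$, the choices $\lambda = \log(1+t)$ and $\lambda = -\log(1-t)$, and the inequality $(1-t)\log(1-t) + t \ge t^2/2$. Your separate treatment of the endpoint $t = 1$ also closes that case properly.
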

Because we are generally considering $t \lesssim \epsilon \ll 1$, we will often simplify the upper tail bound using the fact that $(1 + t) \log(1 + t) - t \geq \nicefrac{t^2}{3}$ for $t \in (0,1)$ so that 
\begin{align}
    \pp\left( \abs{\frac 1m \sum_{ i = 1}^m X_i - \mu} \geq t \mu \right) \leq 2 \exp\left( - \nicefrac{m \mu t^2}{3 M} \right)
\end{align}
for $t \in (0,1)$.

We also make use of the classical Bernstein's inequality (cf. for example \citep{boucheron2013concentration}), which we state here for completeness.
\begin{lemma}[Bernstein's Inequality]\label{lem:bernstein}
    Let $\xi_1, \dots, \xi_m$ be independent random variables such that for each $i \in [m]$, the following properties hold:
    \begin{enumerate}
        \item The variables are centred, i.e. $\ee[\xi_i] = 0$.
        \item The variables are bounded almost surely, i.e. $\abs{\xi_i} \leq M$.
        \item The variables have variance at most $\sigma^2$, i.e. $\ee[\xi_i^2] - \ee\left[ \xi_i \right]^2 \leq \sigma^2$.
    \end{enumerate}
    Then it holds for any $0 < \delta < 1$ that with probability at least $1 - \delta$,
    \begin{align}
        \abs{\frac 1m \sum_{i = 1}^m \xi_i} \leq \sqrt{\frac{2 \sigma^2 \log\left( \nicefrac{2}{\delta} \right)}{m}} + \frac{2 M \log\left( \nicefrac{2}{\delta} \right)}{3 m}.
    \end{align}
\end{lemma}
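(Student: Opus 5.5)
The plan is the standard Cramér--Chernoff argument: bound the moment generating function of each $\xi_i$ by a sub-gamma bound, tensorize over the $m$ independent summands, invert the resulting tail bound, and take a union bound over the two tails.

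\textbf{Step 1 (MGF of one summand).} Because $\ee[\xi_i]=0$, the variance hypothesis gives $\ee[\xi_i^2]\le\sigma^2$. Expand $e^{\lambda\xi_i}=1+\lambda\xi_i+\sum_{k\ge2}\lambda^k\xi_i^k/k!$. Use $\abs{\xi_i}^k\le M^{k-2}\xi_i^2$ and $k!\ge 2\cdot 3^{k-2}$. Summing the geometric series then gives, for $0\le\lambda<3/M$,
\begin{align}
\ee\left[e^{\lambda\xi_i}\right]\le 1+\frac{\lambda^2\sigma^2}{2(1-\lambda M/3)}\le \exp\left(\frac{\lambda^2\sigma^2}{2(1-\lambda M/3)}\right).
\end{align}

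\textbf{Step 2 (tensorize and apply Chernoff).} Let $S=\sum_{i=1}^m \xi_i$. By independence, $\log\ee[e^{\lambda S}]\le \lambda^2 v/(2(1-c\lambda))$ with $v=m\sigma^2$ and $c=M/3$. So $S$ is sub-gamma with variance factor $v$ and scale $c$. The Chernoff bound, with $\lambda$ chosen optimally, gives the standard inverted form
\begin{align}
\pp\left(S\ge \sqrt{2vt}+ct\right)\le e^{-t}\qquad\text{for all } t>0.
\end{align}
To verify this, plug in $\lambda=\sqrt{2t/v}\,/\,(1+c\sqrt{2t/v})$. One can also use the Legendre-transform computation $\psi^{*}(u)\ge u^2/(2(v+cu))$ and then solve the quadratic $u^2=2t(v+cu)$, noting that its root is at most $\sqrt{2vt}+2ct$. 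This quadratic step is the only place needing care: one just has to get a clean linear-plus-square-root form. The looser coefficient $2M/3$ stated in \Cref{lem:bernstein} leaves enough slack that either route suffices.

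\textbf{Step 3 (two sides and rescaling).} Apply Step 2 to both $\xi_i$ and $-\xi_i$; these satisfy the same hypotheses. Take a union bound with $t=\log(2/\delta)$ and divide by $m$. With probability at least $1-\delta$,
\begin{align}
\abs{\frac1m\sum_{i=1}^m\xi_i}\le\sqrt{\frac{2\sigma^2\log(2/\delta)}{m}}+\frac{2M\log(2/\delta)}{3m},
\end{align}
which is the claimed bound. The main obstacle is only the bookkeeping in the MGF series and the inversion; no step beyond these standard estimates is needed.
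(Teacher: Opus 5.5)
Your proposal is correct and is essentially the standard sub-gamma Cram\'er--Chernoff proof from the Boucheron--Lugosi--Massart text; the paper does not prove this lemma but simply cites that reference. Your explicit choice of $\lambda$ actually gives the sharper linear term $M\log(\nicefrac{2}{\delta})/(3m)$, and the Legendre-transform route gives exactly the stated $2M\log(\nicefrac{2}{\delta})/(3m)$, so either one closes the argument.
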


\section{Proof of \Cref{thm:ub_coverage_coverage}} \label{sec:proof_ub_coverage}

\subsection{Choice of Estimator}\label{app:median_of_means}
In this section, we define the estimator we will use to prove our upper bounds. Recall that the median of $k$ samples $Y_1, \dots, Y_k$ is defined to be $y = \Med\left(\{Y_1, \dots, Y_k\}\right)$ such that $y \in \left\{ Y_1, \dots, Y_k \right\}$ satisfying
\begin{align}
    \abs{\left\{ j \in [k] | Y_j \leq y \right\}} \geq \frac k 2 \quad \text{and} \quad \abs{\left\{ j \in [k] | Y_j \geq y \right\}} \geq \frac k 2.
\end{align}
We will use a median of means estimator to prove the upper bound.  Let $k = \lceil 8 \log(\nicefrac{1}{\delta}) \rceil$ and partition the $n$ samples into $k$ groups of size $m = \lfloor \nicefrac{n}{k} \rfloor$.  Recall that we define our estimator such that for each group $j \in [k]$, we let
\begin{align}%
    \Zhat_j = \frac{1}{m} \sum_{i=1}^m \lambda(X_i^{(j)}) \quad \text{and} \quad \Zhat = \Med\left( \Zhat_1, \dots, \Zhat_k \right).
\end{align}
The key fact about medians we will use is the following standard lemma, which allows us to convert constant probability bounds into high probability bounds using the median of means technique.  
This lemma is typically used in the context of estimation with bounded variance, but we state it here in a more general form for completeness.

\begin{lemma}\label{lem:median_of_means}
    Let $Y_1, \dots, Y_k$ be independent random variables such that for each $i \in [k]$, it holds that $\pp\left( Y_i \in (a, b) \right) \geq \nicefrac 12 + c$ for some $c > 0$.  Then it holds that
    \begin{align}
        \pp\left( \Med\left( Y_1, \dots, Y_k \right) \in (a, b)\right) \geq 1 - \exp\left( -2 k c^2 \right).
    \end{align}
\end{lemma}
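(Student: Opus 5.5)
The plan is to reduce the claim to a Hoeffding bound on a sum of independent indicators. Let $I_j = \ind{Y_j \in (a,b)}$ for $j \in [k]$. By assumption these are independent Bernoulli variables with $\ee[I_j] = p_j \geq \nicefrac 12 + c$.

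The first step is to observe that the median can fall outside $(a,b)$ only if at most half of the $Y_j$ lie in $(a,b)$. Suppose $\Med(Y_1, \dots, Y_k) = y \leq a$. By the definition of the median recalled above, $\abs{\{ j : Y_j \leq y \}} \geq \nicefrac k2$, and every such $Y_j$ satisfies $Y_j \leq a$, so it lies outside $(a,b)$. Hence $\sum_j I_j \leq \nicefrac k2$. The case $y \geq b$ is symmetric: we use $\abs{\{ j : Y_j \geq y\}} \geq \nicefrac k2$, and all such $Y_j$ satisfy $Y_j \geq b$. Therefore
\begin{align}
    \left\{ \Med(Y_1, \dots, Y_k) \notin (a,b) \right\} \subseteq \left\{ \sum_{j=1}^k I_j \leq \nicefrac k2 \right\}.
\end{align}

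The second step is to bound this event with Hoeffding's inequality for independent $[0,1]$-valued variables. Since $\ee\left[\sum_j I_j\right] \geq k(\nicefrac 12 + c)$, the event requires the sum to fall at least $kc$ below its mean. Hoeffding's inequality then gives
\begin{align}
    \pp\left( \sum_{j=1}^k I_j \leq \nicefrac k2 \right) \leq \pp\left( \sum_{j=1}^k (I_j - p_j) \leq -kc \right) \leq \exp\left( -\frac{2 (kc)^2}{k} \right) = \exp(-2kc^2).
\end{align}
This is exactly the claimed bound.

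There is no real obstacle here. The only point requiring care is the combinatorial inclusion in the first step, specifically that the median is defined as an element of the sample with at least $\nicefrac k2$ points on each side. That definition is what guarantees at least $\nicefrac k2$ failures whenever the median lies outside the interval, including at the boundary points where $y = a$ or $y = b$, since the interval is open.
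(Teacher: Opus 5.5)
Your proof is correct and follows essentially the same route as the paper: you show $\{\Med(Y_1,\dots,Y_k)\notin(a,b)\}\subseteq\{\sum_j I_j\le k/2\}$ from the definition of the median, then apply a concentration bound to the indicator sum. Your choice of Hoeffding's inequality, applied directly to the independent but not necessarily identically distributed indicators, is what actually yields exactly $\exp(-2kc^2)$ without a binomial comparison; the paper's appeal to its multiplicative Chernoff bound (\Cref{lem:chernoff}), taken literally, would only give $\exp\left(-kc^2/(1+2c)\right)$.
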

\begin{proof}
    Let $y = \Med\left(\{Y_1, \dots, Y_k\}\right)$ and let $X_i = \ind{Y_i \in (a,b)}$.  Note that $\ee[X_i] \geq \nicefrac 12 + c$ for all $i \in [k]$ and let $S = \sum_{i=1}^k X_i$.  Note that if $y \notin (a,b)$, then it must hold that $S \leq \nicefrac k 2$.  Thus we have
    \begin{align}
        \pp\left( y \not\in (a, b) \right) \leq \pp\left( S \leq \nicefrac k2 \right) \leq \pp\left( \mathrm{Bin}\left( \nicefrac{k}{2} + c, k \right) \leq \nicefrac k2 \right) \leq e^{- 2 k c^2}
    \end{align}
    where the last inequality follows from a standard Chernoff bound (\Cref{lem:chernoff}).
\end{proof}

\subsection{Analysis of Sample Mean}

Our upper bound proofs will proceed by showing that $\Zhat_j$ is $\epsilon$-close to $Z$ with constant probability for each $j \in [k]$, and then conclude by applying \Cref{lem:median_of_means}.
Further, since our estimator is scale invariant and the guarantees are multiplicative, we can analyze the case where $Z = 1$ without loss of generality.

The first key lemma we need involves controlling the variance of truncated versions of the density ratio $\frac{d\nu}{d\mu}$ in terms of integrated coverage. 

\begin{lemma} \label{lem:variance_ub}
    For any two probability measures $\mu, \nu$, we have that 
    \begin{align}
         \mathbb{E}_{X \sim \mu} \left[ \left(  \frac{d \nu }{d \mu}(X)\right)^2 \mathbb{I}\left[ \frac{d \nu}{d \mu}(X) \leq M \right] \right] \leq \icov{\nu}{\mu} 
    \end{align}
\end{lemma}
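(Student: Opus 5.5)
Write $w = \nicefrac{d\nu}{d\mu}$ for the density ratio. The plan is to change measure from $\mu$ to $\nu$ and then apply the layer-cake (tail integral) formula. First, since $w \, d\mu = d\nu$ on the set where $w$ is finite, we have
\begin{align}
    \ee_{X \sim \mu}\left[ w(X)^2 \, \ind{w(X) \leq M} \right] = \ee_{X \sim \nu}\left[ w(X) \, \ind{w(X) \leq M} \right].
\end{align}
Any singular part of $\nu$ is irrelevant here, because the indicator kills the set where $w = \infty$. So it suffices to bound the $\nu$-expectation of the truncated, nonnegative random variable $Y = w(X)\,\ind{w(X) \leq M}$ with $X \sim \nu$.

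Second, $Y$ takes values in $[0, M]$, so the layer-cake formula gives
\begin{align}
    \ee_\nu[Y] = \int_0^\infty \nu(Y > t)\, dt = \int_0^M \nu(Y > t)\, dt .
\end{align}
For $0 \le t < M$ we have the inclusion of events $\{Y > t\} \subseteq \{ t < w \le M\} \subseteq \{ w \ge t\}$. Hence $\nu(Y > t) \le \nu(w \geq t) = \covm[t]{\nu}{\mu}$.

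Third, integrating this pointwise bound over $t \in [0, M]$ yields $\ee_\nu[Y] \le \int_0^M \covm[t]{\nu}{\mu}\,dt = \icov{\nu}{\mu}$, which is the claim.

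There is no genuine obstacle. The only care needed is in the change of measure: one must check that restricting to $\{w \le M\}$ excludes the singular part of $\nu$, so that $\int w^2 \ind{w\le M}\, d\mu = \int w \ind{w \le M}\, d\nu$ holds exactly. One must also confirm that the strict versus non-strict inequalities in the coverage definition only make the bound easier, since $\{w > t\} \subseteq \{w \ge t\}$.
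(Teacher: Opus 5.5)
Your proof is correct and follows essentially the same route as the paper: change measure from $\mu$ to $\nu$ (the truncation removes any singular part), then apply the tail-integral formula on $[0,M]$. Your direct inclusion bound $\nu(Y>t)\le \covm[t]{\nu}{\mu}$ is slightly cleaner than the paper's intermediate identity $\pp_{\nu}\bigl(t\le \frac{d\nu}{d\mu}\le M\bigr)=\covm[t]{\nu}{\mu}-\covm[M]{\nu}{\mu}$, which omits a $\nu\bigl(\frac{d\nu}{d\mu}=M\bigr)$ term when there is an atom at level $M$; this omission is harmless for the lemma as stated.
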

\begin{proof}
    We use a standard tail integration argument.  Indeed, we have
    \begin{align}
        \mathbb{E}_{X \sim \mu} \left[ \left(  \frac{d \nu }{d \mu}(X)\right)^2 \mathbb{I}\left[ \frac{d \nu}{d \mu}(X) \leq M \right] \right] &= \mathbb{E}_{X \sim \nu} \left[   \frac{d \nu }{d \mu}(Y)\mathbb{I}\left[ \frac{d \nu}{d \mu}(Y) \leq M \right] \right] \\
        &= \int_0^\infty \pp_{Y \sim \nu}\left( \frac{d\nu}{d\mu}(Y) \cdot \bbI\left[ \frac{d\nu}{d\mu}(Y) \leq M\right] \geq t \right) dt \\
        &= \int_0^M \pp_{Y \sim \nu}\left( t \leq \frac{d\nu}{d\mu}(Y) \leq M \right) d t \\
        &= \int_0^M \left( \covm[t]{\nu}{\mu} - \covm[M]{\nu}{\mu} \right) dt \\
        &= \int_0^M \covm[t]{\nu}{\mu} dt - M \cdot \covm[M]{\nu}{\mu}.
    \end{align}
    The result follows.

\end{proof}

\newcommand{\invcov}[3][\epsilon]{\mathrm{IC}_{#1}\left( #2 ||  #3 \right)}

Next, we prove a concentration result for the empirical average of the density ratio in terms of integrated coverage. 
The key idea is to note that even though the density ratio may have unbounded variance, we can analyze truncated versions of the density ratio to control the variance. 

\begin{lemma}[Sample Mean Concentration via Integrated Coverage]\label{thm:ub_sample_mean_icov}
    For any two probability measures $\mu, \nu$, let $X_1, \dots, X_m$ be i.i.d. samples from $\mu$.  
    Suppose $M_{\epsilon, \delta}> 0$ is defined to satisfy
    \begin{align}
        \icov[M_{\epsilon , \delta}]{\nu}{\mu} \leq \epsilon \delta M_{\epsilon, \delta},
    \end{align}  
    with the convention that $M_{\epsilon, \delta} = \infty$ if no such finite value exists.
    Then for any $0 < \epsilon, \delta < 1$, it holds that as long as
    \begin{align}
        m \geq \frac{ M_{\epsilon, \delta} \cdot \log(1/\delta) }{\epsilon} ,
    \end{align}
    then with probability at least $1 - \delta$ it holds that
    \begin{align}
        1 - \epsilon\leq \frac 1m \sum_{i = 1}^m \frac{d\nu}{d\mu}(X_i) \leq 1+ \epsilon .
    \end{align}
\end{lemma}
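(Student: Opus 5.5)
Write $M = M_{\epsilon,\delta}$ and $W_i = \frac{d\nu}{d\mu}(X_i)$, so that $\ee[W_i] = 1$. We split each $W_i$ into a truncated part $W_i \bbI[W_i \leq M]$ and a tail part $W_i \bbI[W_i > M]$. The truncation is only an analytical device; the sample mean itself is never truncated.

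\emph{Lower tail.} Since $W_i \geq W_i \bbI[W_i \le M]$, it suffices to show that the truncated mean is at least $1-\epsilon$. Let
\[
\tilde\mu = \ee[W \bbI[W \le M]] = 1 - \nu(W > M).
\]
The map $t \mapsto \covm[t]{\nu}{\mu}$ is non-increasing, so
\[
M \cdot \nu(W>M) \le M\cdot\covm[M]{\nu}{\mu} \le \icov{\nu}{\mu} \le \epsilon\delta M .
\]
Hence $\tilde\mu \geq 1 - \epsilon\delta$, so the truncation bias is at most $\epsilon\delta \le \epsilon/2$ once $\delta \leq 1/2$. If $\delta$ is not small, we shrink constants or simply use the weaker condition.

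\emph{Bernstein step.} Apply \Cref{lem:bernstein} to the centered variables $\xi_i = W_i\bbI[W_i\le M] - \tilde\mu$. These satisfy $|\xi_i| \le M$. By \Cref{lem:variance_ub}, their variance is at most $\icov{\nu}{\mu} \le \epsilon\delta M$. Bernstein then gives a deviation, with probability $1-\delta/2$, of at most
\[
\sqrt{\frac{2\epsilon\delta M\log(4/\delta)}{m}} + \frac{2M\log(4/\delta)}{3m}.
\]
With $m \ge M\log(1/\delta)/\epsilon$, up to constants, the second term is $O(\epsilon)$. The first term is $O(\sqrt{\epsilon^2\delta}) = O(\epsilon)$. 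Adjusting constants in the sample size, or replacing $\epsilon$ by $\epsilon/4$, makes the total deviation at most $\epsilon/2$. Combined with the bias bound, this gives the lower bound $1-\epsilon$.

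\emph{Upper tail.} First, the event that some $W_i > M$ has probability at most
\[
m\,\mu(W>M) \le m\,\nu(W>M)/M \le m\,\epsilon\delta .
\]
This is too large in general, so instead we handle the tail sum in expectation via Markov's inequality. We have
\[
\ee\Big[\tfrac1m\sum_i W_i\bbI[W_i>M]\Big] = \nu(W>M) \le \epsilon\delta .
\]
By Markov's inequality, this nonnegative tail average exceeds $\epsilon/2$ with probability at most $2\delta$. On the complementary event, the total mean is at most $\tilde\mu + \epsilon/2 + \epsilon/2 \le 1+\epsilon$, using the Bernstein upper deviation and $\tilde\mu \leq 1$. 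A union bound gives total failure probability $O(\delta)$, and rescaling $\delta$ by a constant finishes the proof.

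\emph{Main obstacle.} The hard part is making the tail control compatible with the required sample size. Markov's inequality only yields a failure probability linear in $\delta$ rather than logarithmic. This is exactly why the hypothesis has $\epsilon\delta$ in the integrated coverage condition rather than $\epsilon$: the $\delta$ is absorbed by the Markov step. Likewise, the $\delta$ factor in the variance bound makes the Bernstein square-root term $O(\epsilon)$ even though only $M\log(1/\delta)/\epsilon$ samples are used. The constants need care, but the structure is the one above. Within the median-of-means scheme this lemma is applied with constant $\delta$, which yields the $\log(1/\delta)$ dependence in \Cref{thm:ub_coverage_coverage}.
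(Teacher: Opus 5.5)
Your proposal is correct and follows the paper's proof: truncate at $M$, apply Bernstein with the variance controlled by $\icov{\nu}{\mu}$ via \Cref{lem:variance_ub}, and drop the tail for the lower bound. For the upper bound you handle the tail sum with Markov, which is the only step that genuinely needs the $\delta$ in the hypothesis, and your constants are fixed only up to rescaling, exactly as in the paper. The differences are cosmetic: you bound the square-root term directly rather than via Young's inequality, and you derive $\covm{\nu}{\mu} \le \icov{\nu}{\mu}/M \le \epsilon\delta$ from monotonicity of coverage, which the paper instead imposes as a separate condition on $M$.
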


\begin{proof}
    Fix an $M$ to be chosen later and let $\eta_i = \frac{d\nu}{d\mu}(X_i) \cdot \ind{\frac{d\nu}{d\mu}(X_i) \leq M}$ for $i \in [m]$.
    It is immediate that $\ee\left[ \eta_i \right] \leq 1$.  
    Further, from \Cref{def:coverage}, we have that 
    \begin{align}\label{eq:expectation_lower_bound_1}
        \ee\left[ \eta_i \right] \geq 1 - \covm[M]{\nu}{\mu}
    \end{align}
    By \Cref{lem:variance_ub}, it holds that
    \begin{align}
        \ee\left[ \eta_i^2 \right] \leq \icov[M]{\nu}{\mu}
    \end{align}
    and clearly $0 \leq \eta_i \leq M$ almost surely.  
    Thus by \Cref{lem:bernstein}, it holds that with probability at least $1 - \delta$,
    \begin{align}\label{eq:bernstein_application_1}
        \abs{\frac 1m \sum_{i = 1}^m \eta_i - \ee\left[ \eta_i \right]} \leq \sqrt{\frac{2 \icov[M]{\nu}{\mu} \log\left( \nicefrac{2}{\delta} \right)}{m}} + \frac{2 M \log\left( \nicefrac{2}{\delta} \right)}{3 m}.
    \end{align}
    We now analyze the upper and lower tails separately.

    \paragraph{Lower Tail.}  Combining \eqref{eq:bernstein_application_1} and \eqref{eq:expectation_lower_bound_1}, we have that with probability at least $1 - \delta$,
    \begin{align}
        \frac 1m \sum_{i = 1}^m \frac{d\nu}{d\mu}(X_i) &\geq \frac 1m \sum_{i = 1}^m \eta_i \\
        &\geq 1 - \covm[M]{\nu}{\mu} - \sqrt{\frac{2 \icov[M]{\nu}{\mu} \log\left( \nicefrac{2}{\delta} \right)}{m}} - \frac{2 M \log\left( \nicefrac{2}{\delta} \right)}{3 m}.
    \end{align}
    Applying Young's inequality, which says that for $a,b, \lambda \geq 0$ it holds that $\sqrt{ab} \leq \nicefrac a\lambda + \lambda b$, we see that
    \begin{align}
        \sqrt{\frac{2 \icov[M]{\nu}{\mu} \log\left( \nicefrac{2}{\delta} \right)}{m}} &\leq \frac{\icov[M]{\nu}{\mu}}{ M} + \frac{ M \log\left( \nicefrac{2}{\delta} \right)}{2 m}.
    \end{align}
    This gives that with probability at least $1 - \delta$,
    \begin{align}
        \frac 1m \sum_{i = 1}^m \frac{d\nu}{d\mu}(X_i) &\geq 1 - \covm[M]{\nu}{\mu} - \frac{\icov[M]{\nu}{\mu}}{ M} - \frac{2 M \log\left( \nicefrac{2}{\delta} \right)}{3 m}.
    \end{align}
    Setting, $M$ such that $\covm[M]{\nu}{\mu} \leq \epsilon/3$, $\icov[M]{\nu}{\mu} \leq \epsilon M /3$ and $m$ such that $\frac{2 M \log\left( \nicefrac{2}{\delta} \right)}{3 m} \leq \epsilon /3$, we have that with probability at least $1 - \delta$, 
    \begin{align}
        \frac 1m \sum_{i = 1}^m \frac{d\nu}{d\mu}(X_i) &\geq 1 -  \epsilon.
    \end{align}

    \paragraph{Upper Tail.}  We separate the sample mean as
    \begin{align}
        \frac 1m \sum_{i = 1}^m \frac{d\nu}{d\mu}(X_i) &= \frac 1m \sum_{i = 1}^m \eta_i + \frac{1}{m} \sum_{i = 1}^m \frac{d\nu}{d\mu}(X_i) \cdot \ind{\frac{d\nu}{d\mu}(X_i) > M}
    \end{align}
    and analyze each term separately.  By Markov's inequality, it holds that
    \begin{align}
        \pp\left( \frac{1}{m} \sum_{i = 1}^m \frac{d\nu}{d\mu}(X_i) \cdot \ind{\frac{d\nu}{d\mu}(X_i) > M} > t \right) &\leq \frac{\ee\left[ \frac{d\nu}{d\mu}(X_i) \cdot \ind{\frac{d\nu}{d\mu}(X_i) > M} \right]}{t} \\
        &\leq \frac{\Cov_M\left( \nu \| \mu \right)}{t} . 
    \end{align}
    Thus, as long as $ \covm[M]{\nu}{\mu} \leq \frac{\epsilon \delta }{4} $ it holds that with probability at least $1 - \nicefrac{\delta}{2} $, that
    \begin{align}
        \frac{1}{m} \sum_{i = 1}^m \frac{d\nu}{d\mu}(X_i) \cdot \ind{\frac{d\nu}{d\mu}(X_i) > M} \leq \frac{\epsilon}{2}.
    \end{align}
    We apply \Cref{lem:bernstein} again to control the first term.
      Indeed, it holds that with probability at least $1 - \nicefrac{\delta}{3}$ that
    \begin{align}
        \frac 1m \sum_{i = 1}^m \xi_i &\leq 1 - \covm[M]{\nu}{\mu} + \sqrt{\frac{2 \icov[M]{\nu}{\mu} \log\left( \nicefrac{2}{\delta} \right)}{m}} + \frac{2 M \log\left( \nicefrac{2}{\delta} \right)}{3 m} 
    \end{align}
    As before, applying Young's inequality gives that with probability at least $1 - \nicefrac{\delta}{3}$,
    \begin{align}
        \frac 1m \sum_{i = 1}^m \xi_i &\leq 1 - \covm[M]{\nu}{\mu} + \frac{\icov[M]{\nu}{\mu}}{ M} + \frac{7 M \log\left( \nicefrac{3}{\delta} \right)}{6 m}.
    \end{align}
    Setting, $M$ such that $\covm[M]{\nu}{\mu} \leq \frac{\epsilon}{2}$, $\icov[M]{\nu}{\mu} \leq \frac{\epsilon M}{2}$ and $m$ such that $\frac{7 M \log\left( \nicefrac{3}{\delta} \right)}{6 m} \leq \frac{\epsilon}{2}$, we have that with probability at least $1 - \nicefrac{\delta}{3}$, 
    \begin{align}
        \frac 1m \sum_{i = 1}^m \xi_i &\leq 1 + \epsilon.
    \end{align}
    Combining the two parts of the upper tail and applying a union bound, we get the result. 
\end{proof}

\begin{proof}[Proof of \Cref{thm:ub_coverage_coverage}]
    By the previous lemma, we have that for each $j \in [k]$, as long as
    \begin{align}
        m \geq \frac{ M_{\epsilon, 1/4} \cdot \log(4) }{\epsilon/2} ,
    \end{align}
    it holds that with probability at least $3/4$,
    \begin{align}
        1 - \epsilon \leq \Zhat_j \leq 1 + \epsilon.
    \end{align}
    Applying \Cref{lem:median_of_means} with $c = 1/4$ and $k = \lceil 8 \log(\nicefrac{1}{\delta}) \rceil$ gives that with probability at least $1 - \delta$,
    \begin{align}
        1 - \epsilon \leq \Zhat \leq 1 + \epsilon
    \end{align}
    as required. 
\end{proof}

\section{Relationship between Coverage and $f$-Divergences and the Proof of Theorem \ref{thm:ub_fdiv}} \label{sec:lem_cov_fdiv_proof}

In this appendix we establish some relationships between coverage and $f$-divergences, which we then use to prove \Cref{thm:ub_fdiv} in \Cref{app:ub_coverage_proof}.  We begin with these relationships, culminating in a key relationship (\Cref{lem:fdiv_second_moment}) used in the proof of \Cref{thm:ub_sample_mean_fdiv}, which is in turn necessary to prove our upper bound.

\subsection{Coverage and $f$-Divergences}

We begin by stating a fundamental relationship between coverage and $f$-divergences, which will be useful in proving upper bounds on the sample complexity in terms of $f$-divergences.

\begin{lemma}\label{lem:cov_fdiv}
    For any two probability measures $\mu, \nu$ on $\cX$ and any convex function $f$ as in \Cref{def:fdiv}, it holds for all $M > 1$ that
    \begin{align}
        \covm{\nu}{\mu} \leq \frac{M \cdot \divf{\nu}{\mu}}{f(M)}.
    \end{align}
    Thus if $M \geq \gamma_f(\nicefrac{\divf{\nu}{\mu}}{\epsilon})$ for some $\epsilon \in (0,1)$, then $\covm{\nu}{\mu} \leq \epsilon$.
\end{lemma}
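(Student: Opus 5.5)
The plan is a change of measure from $\nu$ to $\mu$ followed by a pointwise comparison of the density ratio with $f$ on the relevant set. Write $r = \nicefrac{d\nu}{d\mu}$ and $A_M = \{x : r(x) \geq M\}$. I will handle the singular part first. If $\nu$ puts mass on $\{r = \infty\}$, then \eqref{eq:fdiv} charges that mass at rate $f'(\infty)$. By convexity and $f(1)=f'(1)=0$ we have $f(t)/t \uparrow f'(\infty)$ for $t \ge 1$, so $M/f(M) \geq 1/f'(\infty)$. Hence this part contributes at most $\nu(r=\infty) \leq \nicefrac{M \cdot \nu(r = \infty) f'(\infty)}{f(M)}$, which is dominated by the corresponding term of the bound. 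It therefore suffices to treat the absolutely continuous part.

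On that part, $\nu(A_M) = \ee_\mu[r \,\ind{r \geq M}]$. The key pointwise fact is that $t \mapsto f(t)/t$ is non-decreasing on $[1,\infty)$. This follows from convexity together with $f(1) = 0$: the slope $(f(t)-f(1))/(t-1)$ is non-decreasing, and combining this with $f \geq 0$ gives monotonicity of $f(t)/t$. I will state this as a one-line chord argument. Consequently, for $t \geq M > 1$,
\begin{align}
t = \frac{t}{f(t)} f(t) \leq \frac{M}{f(M)} f(t),
\end{align}
assuming $f(M) > 0$; if $f(M) = 0$ the bound is trivial. Integrating against $\mu$ over $A_M$ and using $f \geq 0$ to drop the complement yields $\nu(A_M) \leq \nicefrac{M}{f(M)}\,\ee_\mu[f(r)] \leq \nicefrac{M \divf{\nu}{\mu}}{f(M)}$.

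For the second claim, take $M \geq \gamma_f(\nicefrac{\divf{\nu}{\mu}}{\epsilon})$. By the definition \eqref{eq:gammaf} and the monotonicity of $f(t)/t$ just established, this gives $f(M)/M \geq \nicefrac{\divf{\nu}{\mu}}{\epsilon}$. Here I use that the infimum is attained or approached from above, by continuity of convex $f$. Plugging into the first bound gives $\covm{\nu}{\mu} \leq \epsilon$. One also needs $M > 1$, which holds since $\gamma_f \geq 1$; the edge case $M = 1$ can be handled trivially or by approximation.

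The only real obstacle is the monotonicity of $f(t)/t$ on $[1,\infty)$. Everything else is Markov-type bookkeeping, together with care about the $f'(\infty)$ singular term and the degenerate case $f(M) = 0$.
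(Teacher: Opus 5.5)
Your proof is correct and reaches exactly the paper's bound, but it compresses the paper's two-step argument into one. The paper uses the chord inequality $x \le 1 + \frac{M-1}{f(M)} f(x)$ for $x \ge M$ (\Cref{lem:convexf}, i.e.\ monotonicity of $f(t)/(t-1)$ on $(1,\infty)$) and integrates it over $\{r \ge M\}$. It then removes the leftover $\mu(r \ge M)$ with Markov's inequality, using that $f$ is increasing on $[1,\infty)$. Your pointwise bound $t \le \frac{M}{f(M)} f(t)$ is precisely the sum of those two pointwise facts, since on $\{t \ge M\}$ one has $1 \le f(t)/f(M)$. So integrating either version gives $\covm{\nu}{\mu} \le \frac{M}{f(M)}\ee_\mu[f(r)]$. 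Going through the monotonicity of $f(t)/t$ makes it a single step; the paper itself uses that fact without proof in \Cref{lem:fdiv_nupeps_mu}. Your handling of the singular part is a genuine improvement. The paper writes $\covm{\nu}{\mu} = \ee_\mu[r\,\ind{r \ge M}]$, which tacitly assumes $\nu \ll \mu$, while you absorb $\nu(r=\infty)$ into the $f'(\infty)$ term via $f(M)/M \le f'(\infty)$.

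There is one slip in the second claim: $\gamma_f \ge 1$ gives only $M \ge 1$, not $M > 1$. The right observation is that when $\divf{\nu}{\mu} > 0$, the facts $f(1) = 0$ and continuity of $f$ force $\gamma_f(\divf{\nu}{\mu}/\epsilon) > 1$ and $f(M) \ge M \divf{\nu}{\mu}/\epsilon > 0$. So both $M>1$ and $f(M)>0$ come for free. When $\divf{\nu}{\mu} = 0$ the claim can actually fail at $M = 1$: take $\nu = \mu$, so $\covm[1]{\nu}{\mu} = 1$. That case therefore cannot be handled trivially or by approximation; it has to be excluded from the statement.
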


\begin{proof}
     By definition of coverage, we have that
    \begin{align}
        \Cov_M\left( \nu \| \mu \right) &= \pp_{X \sim \nu}\left( \frac{d\nu}{d\mu}(X) \geq M \right) = \ee_{X \sim \mu}\left[ \frac{d\nu}{d\mu}(X) \cdot \ind{\frac{d\nu}{d\mu}(X) \geq M} \right] \\
        &\leq \ee\left[ \left( 1 + \frac{M - 1}{f(M)} \cdot f\left( \frac{d\nu}{d\mu}(X) \right) \right) \cdot \ind{\frac{d\nu}{d\mu}(X) \geq M} \right] \\
        &\leq \pp_{X \sim \mu}\left( \frac{d\nu}{d\mu}(X) \geq M \right) + \frac{M - 1}{f(M)} \cdot \divf{\nu}{\mu},
    \end{align}
    where the first inequality follows from \Cref{lem:convexf} with $x = \frac{d \nu}{d \mu}(X)$ for $X \sim \mu$ and the second inequality follows from the definition of $f$-divergence in \eqref{eq:fdiv} and the fact that $f \geq 0$.
    To conclude, we observe that for $M \geq 1$, it holds that $f$ is monotone increasing and thus we may apply Markov's inequality to obtain
    \begin{align}
        \pp_{X \sim \mu}\left( \frac{d\nu}{d\mu}(X) \geq M \right) \leq \pp_{X \sim \mu}\left( f\left( \frac{d\nu}{d \mu}(X) \right) \geq f(M)\right) \leq \frac{\ee_{X \sim \mu}\left[ f\left( \frac{d\nu}{d\mu}(X) \right) \right]}{f(M)} \leq \frac{\divf{\nu}{\mu}}{f(M)}.
    \end{align}
    Adding this to the previous bound yields the result.
\end{proof}   

\begin{lemma}\label{lem:convexf}
Let $f: \rr_{\geq 0} \to \rr_{\geq 0}$ be a convex function such that $f(1) = f'(1) = 0$ and let $M \geq 1$.  Then it holds that for any $x \geq M$,
\begin{align}
    x \leq 1 + \frac{M - 1}{f(M)} \cdot f(x).
\end{align}
\end{lemma}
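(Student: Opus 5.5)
The statement is the chord inequality for a convex function, read from the slope side. Fix $M \geq 1$ and $x \geq M$. If $f(M) = 0$, the right-hand side is interpreted as $+\infty$ (or the statement is vacuous), unless $M = 1$. When $M=1$ the coefficient $(M-1)/f(M)$ is $0/0$; I will treat $M>1$ and assume $f(M)>0$, which is the case used in \Cref{lem:cov_fdiv}. So the first step is to dispose of these degenerate cases. If $f(M) = 0$ with $M > 1$, the bound is read with the right-hand side equal to $+\infty$.

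The main step compares slopes. Since $f$ is convex with $f(1) = 0$, the difference quotient $s(y) = \frac{f(y) - f(1)}{y - 1} = \frac{f(y)}{y-1}$ is non-decreasing in $y$ on $(1,\infty)$. This is the standard three-slope property of convex functions. Hence for $x \geq M > 1$ we have
\begin{align}
    \frac{f(x)}{x-1} \geq \frac{f(M)}{M-1},
\end{align}
and, because $f(M)>0$, this rearranges to
\begin{align}
    x - 1 \leq \frac{M-1}{f(M)} \cdot f(x),
\end{align}
which is exactly the claim. The condition $f'(1)=0$ is not needed beyond guaranteeing $f \geq 0$, so that $f(M) \geq 0$.

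The only point that needs care is the monotonicity of $s(y)$, and it follows directly from convexity. Write $M = \theta \cdot 1 + (1-\theta) x$ with $\theta = \frac{x-M}{x-1} \in [0,1)$. Convexity then gives
\begin{align}
    f(M) \leq \theta f(1) + (1-\theta) f(x) = \frac{M-1}{x-1} f(x).
\end{align}
This is the displayed slope inequality, so the one-line convex-combination argument is the entire proof. I expect no real obstacle beyond being explicit about the degenerate cases $M=1$ and $f(M)=0$.
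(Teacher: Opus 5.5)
Your proof is correct and is essentially the paper's argument: write $M$ as a convex combination of $1$ and $x$, use convexity with $f(1)=0$ to get $f(M) \le \frac{M-1}{x-1} f(x)$, and rearrange; your explicit handling of $M=1$ and $f(M)=0$ is extra care the paper omits. Your weight $\theta = \frac{x-M}{x-1}$ on the point $1$ is the correct one, whereas the paper's proof writes $t = \frac{M-1}{x-1}$. That is a harmless typo, since the paper's stated range $t \in [0,1)$ and its final inequality both match your version.
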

\begin{proof}
    Let $t = \frac{M - 1}{x - 1}$; by the assumption that $1 < M \leq x$ we have that $t \in [0,1)$.  Observe that $M = t \cdot 1 + (1 - t) \cdot x$ and so by convexity of $f$, it holds that
    \begin{align}
        f(M) \leq t \cdot f(1) + (1 - t) \cdot f(x) = (1 - t) \cdot f(x) = \frac{M - 1}{x - 1} \cdot f(x).
    \end{align}
    The result follows by rearranging the above inequality.
\end{proof}

We now prove a key relationship between the second moment of the truncated density ratio and the $f$-divergence between the two measures.

\begin{lemma}\label{lem:fdiv_second_moment}
    Let $f$ be a convex function as in \Cref{def:fdiv} such that there exists some $c > 1$ such that for all $M \geq t \geq c$ it holds that
    \begin{align}
        \frac{f(M)}{M^2} \leq \frac{f(t)}{t^2}.
    \end{align}
    Then for any probability measures $\mu, \nu$ on $\cX$ and any $M \geq c$, it holds that
    \begin{align}
        \frac{\icov[M]{\nu}{\mu}}{M} \leq \frac{c^2}{M} + \frac{M \cdot \divf{\nu}{\mu}}{f(M)}.
    \end{align}
\end{lemma}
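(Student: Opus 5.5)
Write $r = \nicefrac{d\nu}{d\mu}$. The plan is to split the integral defining $\icov[M]{\nu}{\mu}$ at $c$, bound the piece on $[0,c]$ trivially, and control the piece on $[c,M]$ pointwise in $t$ using the growth hypothesis on $f$. Concretely,
\begin{align}
    \icov[M]{\nu}{\mu} = \int_0^c \covm[t]{\nu}{\mu}\, dt + \int_c^M \covm[t]{\nu}{\mu}\, dt \leq c + \int_c^M \covm[t]{\nu}{\mu}\, dt,
\end{align}
since coverage is at most $1$. Because $c > 1$, we have $c \leq c^2$, so after dividing by $M$ this first piece gives the $\nicefrac{c^2}{M}$ term.

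For the second piece, apply \Cref{lem:cov_fdiv} at each level $t \in [c, M]$. This is valid since $t \geq c > 1$, and it gives $\covm[t]{\nu}{\mu} \leq \nicefrac{t \cdot \divf{\nu}{\mu}}{f(t)}$. The hypothesis $\nicefrac{f(t)}{t^2} \geq \nicefrac{f(M)}{M^2}$ for $c \leq t \leq M$ rearranges to $\nicefrac{t}{f(t)} \leq \nicefrac{M^2}{t f(M)}$. Integrating over $t$ then yields
\begin{align}
    \int_c^M \covm[t]{\nu}{\mu}\, dt \leq \frac{M^2 \divf{\nu}{\mu}}{f(M)} \log\frac{M}{c}.
\end{align}

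This is the main obstacle: the crude bound carries an extra $\log(\nicefrac{M}{c})$ factor that the lemma does not allow. To remove it, I would avoid bounding coverage level by level. Instead, I would use the tail-integration identity from the proof of \Cref{lem:variance_ub}, which writes the integral as an expectation:
\begin{align}
    \int_c^M \covm[t]{\nu}{\mu}\, dt = \ee_{\mu}\left[ r \,(\min(r,M) - c)_+ \right] \leq \ee_\mu\left[ r^2 \ind{c \leq r \leq M} \right] + M\, \ee_\mu\left[ r\, \ind{r > M} \right].
\end{align}

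The first term is bounded pointwise. On $\{c \leq r \leq M\}$ the hypothesis gives $r^2 \leq \nicefrac{M^2 f(r)}{f(M)}$, so this term is at most $\nicefrac{M^2 \divf{\nu}{\mu}}{f(M)}$.

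The second term equals $M \covm[M]{\nu}{\mu}$ (up to the event $r = M$). By \Cref{lem:cov_fdiv} this is also at most $\nicefrac{M^2 \divf{\nu}{\mu}}{f(M)}$.

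Combining the two terms and dividing by $M$ gives
\begin{align}
    \frac{\icov[M]{\nu}{\mu}}{M} \leq \frac{c^2}{M} + \frac{2 M \divf{\nu}{\mu}}{f(M)},
\end{align}
which is the claim up to a factor of $2$ on the divergence term.

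To match the constant exactly, I would refine the split. The tail contribution only needs $\ee_\mu[r \,\ind{r>M}] \cdot M$, and this can be absorbed more carefully with \Cref{lem:convexf}. Alternatively, one can handle $r^2 \ind{r \le M} + M r\, \ind{r>M} \leq M^2 f(r)/f(M)$ jointly, using that $x \mapsto \nicefrac{f(x)}{x}$ is nondecreasing for $x \geq 1$ (by convexity with $f(1)=0$), so that $M r \leq \nicefrac{M^2 f(r)}{f(M)}$ when $r > M$. This gives a single pointwise inequality and hence the stated bound with constant $1$.
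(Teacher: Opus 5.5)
Your argument is correct and proves the stated inequality, with constant $1$, by a somewhat different route from the paper.

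Write $r = \nicefrac{d\nu}{d\mu}$. The paper splits on the value of $r$ at $c$ and bounds only the truncated second moment $\ee_\mu[r^2 \ind{r \le M}]$. It uses the same pointwise inequality $r^2 \le \nicefrac{M^2 f(r)}{f(M)}$ on $\{c < r \le M\}$ that gives your first term. But $\icov[M]{\nu}{\mu} = \ee_\mu[r \min(r,M)] = \ee_\mu[r^2\ind{r \le M}] + M\,\nu(r > M)$. So that computation controls the smaller quantity: \Cref{lem:variance_ub} points in the unhelpful direction here. It leaves the boundary term $M\,\nu(r>M)$ unaccounted for.

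Your treatment of $\{r > M\}$ supplies exactly that term. Bounding it separately through \Cref{lem:cov_fdiv}, as in your intermediate step, costs a factor of $2$. Your joint pointwise bound $M r \le \nicefrac{M^2 f(r)}{f(M)}$ for $r > M$, from monotonicity of $x \mapsto \nicefrac{f(x)}{x}$ on $[1,\infty)$, removes that factor, because $\{c \le r \le M\}$ and $\{r>M\}$ are disjoint. Splitting the $t$-integral at $c$, rather than the range of $r$, even gives $\nicefrac{c}{M}$ in place of $\nicefrac{c^2}{M}$.

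The paper's version is shorter and directly bounds the variance proxy that enters Bernstein's inequality in \Cref{thm:ub_sample_mean_icov}. Yours proves the inequality for integrated coverage as actually stated, which is the form \Cref{thm:ub_sample_mean_fdiv} invokes.

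Two small clean-ups. In your final paragraph the indicator should be $\ind{c \le r \le M}$, not $\ind{r \le M}$: the bound $r^2 \le \nicefrac{M^2 f(r)}{f(M)}$ fails for $r < c$, for instance at $r = 1$, where $f(1) = 0$. And you can delete the sentence about absorbing the tail ``more carefully with \Cref{lem:convexf}'', since the pointwise argument after it is already complete.
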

\begin{proof}
    Note first that if $\nicefrac{f(M)}{M^2} \leq \nicefrac{f(t)}{t^2}$ then
    \begin{align}\label{eq:fdiv_quadratic}
        t^2 \leq \frac{M^2 \cdot f(t)}{f(M)}.
    \end{align}
    We compute:
    \begin{align}
        \ee_{X \sim \mu}\left[ \left( \frac{d\nu}{d\mu}(X) \cdot \ind{\frac{d\nu}{d\mu}(X) \leq M} \right)^2 \right] &= \ee_{X \sim \mu}\left[ \left( \frac{d\nu}{d\mu}(X) \cdot \ind{\frac{d\nu}{d\mu}(X) \leq M} \right)^2  \ind{\frac{d\nu}{d\mu}(X) \leq c}\right] \\
        &\quad + \ee_{X \sim \mu}\left[ \left( \frac{d\nu}{d\mu}(X) \cdot \ind{\frac{d\nu}{d\mu}(X) \leq M} \right)^2  \ind{M \geq \frac{d\nu}{d\mu}(X) > c}\right] \\
        &\leq c^2 + \ee_{X \sim \mu}\left[ \frac{M^2 \cdot f\left( \frac{d\nu}{d\mu}(X) \right)}{f(M)}  \ind{M \geq \frac{d\nu}{d\mu}(X) > c}\right] \\
        &\leq c^2 + \frac{2 M^2 \cdot \divf{\nu}{\mu}}{f(M)},
    \end{align}
    where the first inequality follows from \eqref{eq:fdiv_quadratic} and the second inequality follows the nonnegativity of the integrand.
\end{proof}
Note that this lemma connnects the key quantity $\icov[M]{\nu}{\mu}/M$ used in our upper bounds to the $f$-divergence between the two measures.  In particular, it shows that if we choose $M$ sufficiently large so that $\nicefrac{M \cdot \divf{\nu}{\mu}}{f(M)}$ is small, then $\nicefrac{\icov[M]{\nu}{\mu}}{M}$ is also small.  We are now ready to prove our main upper bound in terms of $f$-divergences.

\subsection{Proof of Theorem~\ref{thm:ub_fdiv}}\label{app:ub_coverage_proof}
 
As before, we will use the median-of-means estimator defined in \eqref{eq:zhat} and thus it suffices to understand the behavior of the sample mean estimators in terms of $f$-divergences.

\begin{lemma}[Sample Mean Concentration via $f$-divergence]\label{thm:ub_sample_mean_fdiv} 
    For any two probability measures $\mu, \nu$, let $X_1, \dots, X_m$ be i.i.d. samples from $\mu$. 
    Let $f$ be a convex function as in \Cref{def:fdiv} such that there exists some $c > 1$ such that for all $M \geq t \geq c$ it holds that
    \begin{align}
        \frac{f(M)}{M^2} \leq \frac{f(t)}{t^2}.
    \end{align}
    Then for any $0 < \epsilon, \delta < 1$, as long as
    \begin{align}
        m \geq \frac{ \gamma_f\left( \nicefrac{\divf{\nu}{\mu}}{\epsilon \delta} \right) \cdot \log(1/\delta) }{\epsilon} \vee \frac{c^2 \cdot \log(1/\delta)}{\epsilon^2 \delta} ,
    \end{align}
    then with probability at least $1 - \delta$ it holds that
    \begin{align}
        1 - \epsilon\leq \frac 1m \sum_{i = 1}^m \frac{d\nu}{d\mu}(X_i) \leq 1+ \epsilon .
    \end{align}
\end{lemma}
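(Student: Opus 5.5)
The plan is to reduce this lemma to \Cref{thm:ub_sample_mean_icov}, which already gives $(1\pm\epsilon)$ concentration of the sample mean once $m \gtrsim M\log(1/\delta)/\epsilon$ for an $M$ with $\icov[M]{\nu}{\mu} \lesssim \epsilon\delta M$ and $\covm[M]{\nu}{\mu}\lesssim \epsilon\delta$. We therefore only need to exhibit such an $M$ in terms of the $f$-divergence and show that the resulting requirement on $m$ is implied by the stated condition. We can equally rerun that proof directly, with $\eta_i = \frac{d\nu}{d\mu}(X_i)\ind{\frac{d\nu}{d\mu}(X_i)\le M}$. The lower tail uses Bernstein (\Cref{lem:bernstein}) with the variance bound of \Cref{lem:variance_ub}. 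The upper tail adds a Markov bound on the tail part $\frac1m\sum_i \frac{d\nu}{d\mu}(X_i)\ind{\frac{d\nu}{d\mu}(X_i)>M}$, whose mean is at most $\covm[M]{\nu}{\mu}$.

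The choice is $M = \max\{\gamma_f(\divf{\nu}{\mu}/(\epsilon\delta)),\, c^2/(\epsilon\delta)\}$, up to absolute constants absorbed into $\gtrsim$. We check the two conditions in order.

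\emph{Coverage.} \Cref{lem:cov_fdiv} gives $\covm[M]{\nu}{\mu} \le M\divf{\nu}{\mu}/f(M) \le \epsilon\delta$. This uses $M\ge \gamma_f(\divf{\nu}{\mu}/(\epsilon\delta))$ together with monotonicity of $f(t)/t$ on $[1,\infty)$. That monotonicity holds because $f$ is convex with $f(1)=0$, so $f(t)/t$ is nondecreasing for $t\ge1$.

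\emph{Integrated coverage.} Since $M\ge c$, \Cref{lem:fdiv_second_moment} gives
\begin{align}
\frac{\icov[M]{\nu}{\mu}}{M} \le \frac{c^2}{M} + \frac{M\divf{\nu}{\mu}}{f(M)} \lesssim \epsilon\delta.
\end{align}
The first term is at most $\epsilon\delta$ because $M \ge c^2/(\epsilon\delta)$, and the second is at most $\epsilon\delta$ by the coverage step. Strictly, what Bernstein needs is the truncated second moment, and \Cref{lem:fdiv_second_moment}'s proof bounds it by $c^2 + 2M^2\divf{\nu}{\mu}/f(M)$, which yields the same estimate.

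With these, the Bernstein-plus-Young argument from \Cref{thm:ub_sample_mean_icov} controls the bulk term up to an error of order $\epsilon$, provided $m \gtrsim M\log(1/\delta)/\epsilon$. Markov with failure probability $\delta/2$ makes the tail term at most $\epsilon/2$. A union bound finishes the proof.

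It remains to translate $m \gtrsim M\log(1/\delta)/\epsilon$ into the stated condition. The $\gamma_f$ part of $M$ gives exactly the first term in the hypothesis. The $c^2/(\epsilon\delta)$ part gives $c^2\log(1/\delta)/(\epsilon^2\delta)$, which is the second term. This dictates the choice of $M$ above and explains the extra $1/\delta$ in the $c^2$ term.

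The main obstacle is the interplay between the requirements $M\ge c$ and $M\ge\gamma_f(\cdot)$ and the constant bookkeeping. If $\gamma_f(\divf{\nu}{\mu}/(\epsilon\delta)) < c$, the monotonicity assumption on $f(t)/t^2$ cannot be used below $c$, so taking the maximum with $c^2/(\epsilon\delta)\ge c$ is essential. Here $c>1$ and $\epsilon\delta<1$ ensure $c^2/(\epsilon\delta) \ge c$. The constants $1/3$, $1/2$ and $1/4$ must be tracked so that the bias, the Young term and the Markov term each contribute at most a fixed fraction of $\epsilon$. I would handle this by rescaling $\epsilon,\delta$ by absolute constants inside $\gamma_f$, which is monotone, and folding the constants into $\gtrsim$.
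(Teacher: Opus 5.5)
Your proposal follows the paper's proof. You take $M = \max\{\gamma_f(\divf{\nu}{\mu}/(\epsilon\delta)),\, c^2/(\epsilon\delta)\}$ (so $M \ge c$), bound $\icov[M]{\nu}{\mu}/M \lesssim \epsilon\delta$ via \Cref{lem:fdiv_second_moment}, and invoke \Cref{thm:ub_sample_mean_icov}; your separate coverage check via \Cref{lem:cov_fdiv} and your remark that the proof of \Cref{lem:fdiv_second_moment} really controls the truncated second moment just make that reduction more careful.

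One caveat: an absolute constant multiplying the argument of $\gamma_f$ cannot in general be absorbed into an outer $\gtrsim$ (for KL, $\gamma_f$ is exponential), so your argument, like the paper's (whose constants are equally loose), yields the bound with a constant inside $\gamma_f$ and $\log(2/\delta)$ in place of $\log(1/\delta)$; the literal statement cannot hold for $\delta$ near $1$ anyway, since its requirement on $m$ then falls below $1$.
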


\begin{proof}
The first step is the following lemma relating coverage to $f$-divergences, the proof of which we defer to \Cref{sec:lem_cov_fdiv_proof}.

    To apply \Cref{thm:ub_sample_mean_icov}, we need to chose $M$ such that the integrated coverage $\icov[M]{\nu}{\mu} \leq \epsilon \delta M $. 
    From \Cref{lem:fdiv_second_moment}, we have that we need $c^2 / M + M \cdot \divf{\nu}{\mu} / f(M) \leq \epsilon \delta$.
    Recall from the definition of $\gamma_f$ in \eqref{eq:gammaf}, we have if we set $M = \gamma_f(\nicefrac{\divf{\nu}{\mu}}{\epsilon \delta})$, then by definition of $\gamma_f$ it holds that $\frac{M \cdot \divf{\nu}{\mu}}{f(M)} \leq \epsilon \delta$.  
    Thus, we setting $ M \geq \max\left\{ c^2 / \epsilon \delta , \gamma_f\left(\frac{\divf{\nu}{\mu}}{\epsilon \delta}\right) \right\} $ and applying \Cref{thm:ub_sample_mean_icov} completes the proof.
\end{proof}

\section{Proofs of Lower Bounds}\label{app:lower_bound_proof}
In this appendix, we provide complete proofs of our lower bound results.  We first prove the most interesting case, the general lower bound in Theorem~\ref{thm:lb} (Section~\ref{ssec:lower_bound_proof}) for superlinear but subquadratic $f$-divergences.  We proceed with the lower bound in terms of integrated coverage (\Cref{thm:lb_coverage}) in \Cref{ssec:icov_lower_bound} and then the linear lower bound in Proposition~\ref{prop:lb_linear} (Section~\ref{app:lb_linear_proof}); finally we prove the superquadratic lower bound in Proposition~\ref{prop:superquadratic_lb} (Section~\ref{app:superquadratic_lb_proof}).

\subsection{Proof of Theorem~\ref{thm:lb}}\label{ssec:lower_bound_proof}
We first outline the construction and then prove the proposition.  Let $\mu$ be any measure and consider a family of distributions $\nu_{p,\epsilon}$ parameterized by $p \in [0,1]$ and $\epsilon \in (0, \frac{1}{4})$ defined such that
\begin{align}\label{eq:densityratio_nupeps}
    \frac{d\nu_{p,\epsilon}}{d\mu}(X) = \begin{cases}
        1 - \epsilon & \text{w.p. } 1 - p, \\
        1 + \epsilon\left( \frac 1p - 1 \right) & \text{w.p. } p.
    \end{cases}
\end{align} 
We will let $\cV \subset \left\{ \nu_{p,\epsilon} \right\}$ such that for all $\nu \in \cV$, it holds that $\divf{\nu}{\mu} \leq C + f(1 - \epsilon)$.  To help with this objective, we first compute the $f$-divergence between $\nu_{p,\epsilon}$ and $\mu$.
\begin{lemma}\label{lem:fdiv_nupeps_mu}
    Suppose that
    \begin{align}
        \gamma_f\left( \frac{C}{2 \epsilon} \right) \geq 2.
    \end{align}
    Then it holds that $\divf{\nu_{p,\epsilon}}{\mu} \leq C +  f(1 - \epsilon)$ whenever
    \begin{align}
        p \geq \frac{2\cdot\epsilon}{\gamma_f\left( \frac{C}{2 \cdot \epsilon} \right)}.
    \end{align}
\end{lemma}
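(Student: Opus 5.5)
The plan is to compute the divergence of the two-point ratio in \eqref{eq:densityratio_nupeps} exactly and then bound the contribution of the large atom using the definition of $\gamma_f$. Since $\frac{d\nu_{p,\epsilon}}{d\mu}$ takes only two values, we have
\begin{align}
    \divf{\nu_{p,\epsilon}}{\mu} = (1-p)\, f(1-\epsilon) + p\, f\left(1 + \epsilon\left(\tfrac 1p - 1\right)\right) \leq f(1-\epsilon) + p\, f(T_p),
\end{align}
where $T_p = 1 + \epsilon(\nicefrac 1p - 1)$ and we used $f \geq 0$. It therefore suffices to show that $p\, f(T_p) \leq C$ whenever $p \geq \nicefrac{2\epsilon}{\gamma}$, where $\gamma := \gamma_f\left(\nicefrac{C}{2\epsilon}\right)$.

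First I would bound $T_p$. Since $T_p \leq 1 + \nicefrac{\epsilon}{p}$, the hypothesis on $p$ gives $T_p \leq 1 + \nicefrac{\gamma}{2} \leq \gamma$, where the last step uses $\gamma \geq 2$. Next I would control $f(T_p)$. Convexity with $f(1) = 0$ makes $t \mapsto \nicefrac{f(t)}{t}$ nondecreasing on $[1,\infty)$. Combined with the definition of $\gamma$ as an infimum, this should give $\nicefrac{f(t)}{t} \leq \nicefrac{C}{2\epsilon}$ for all $1 \leq t < \gamma$. At $t = \gamma$ itself the same bound holds if $f$ is continuous there, which follows from convexity of a finite-valued $f$. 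Hence $f(T_p) \leq T_p \cdot \nicefrac{C}{2\epsilon}$.

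Finally, $p\, T_p = p + \epsilon(1-p) \leq p + \epsilon$, so
\begin{align}
    p\, f(T_p) \leq \frac{C}{2\epsilon}\,(p + \epsilon).
\end{align}
This is at most $C$ only when $p \leq \epsilon$. For larger $p$ I would argue directly: if $p > \epsilon$, then $T_p \leq 1 + \nicefrac{\epsilon}{p} < 2$. Using monotonicity of $f/t$ again and $\gamma \geq 2$, we get $f(T_p) \leq T_p \cdot \nicefrac{C}{2\epsilon}$, and multiplying by $p$ gives $p\,T_p \cdot \nicefrac{C}{2\epsilon}$. The factor $\nicefrac{1}{\epsilon}$ is the obstacle here.

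I expect this boundary handling to be the delicate step. The fix I would try is a sharper convexity estimate on $[1,2]$. Since $f(1) = f'(1) = 0$ and $f/t$ is monotone, we have $f(T_p) \leq (T_p - 1)\, f(2)$ for $T_p \leq 2$. Moreover $f(2) \leq 2 \cdot \nicefrac{C}{2\epsilon}$ because $2 \leq \gamma$. Then
\begin{align}
    p\, f(T_p) \leq p\,(T_p - 1) \cdot \frac{C}{\epsilon} = \epsilon(1-p) \cdot \frac{C}{\epsilon} \leq C.
\end{align}
In fact this estimate, $f(t) \leq (t-1)\, \nicefrac{f(s)}{s-1}$ for $1 \leq t \leq s$, could be used uniformly in place of the case split. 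Take $s = \gamma$ and bound $\nicefrac{f(\gamma)}{\gamma - 1} \leq \nicefrac{2 f(\gamma)}{\gamma} \leq \nicefrac{C}{\epsilon}$ using $\gamma \geq 2$. This gives $p\, f(T_p) \leq p\,(T_p - 1) \cdot \nicefrac{C}{\epsilon} \leq C$ in all cases, which is the route I would ultimately write up.
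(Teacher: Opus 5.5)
Your final argument (the chord-bound version you say you would write up) is correct, but it takes a different route from the paper's.

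Both proofs begin the same way: drop the factor $(1-p)$ and reduce to showing $p\,f(T_p)\le C$, where $T_p = 1-\epsilon+\epsilon/p$.

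\textbf{The paper's route.} It uses the ray bound $f(t)\le M t$ for $t\le\gamma_f(M)$, at the $p$-dependent level $M = C/(p(1-\epsilon)+\epsilon) = C/(pT_p)$. This level is at least $C/(2\epsilon)$ only for small $p$ (the paper uses $p\le\epsilon$). So the paper needs a separate claim, namely that $q\mapsto q\,f(T_q)$ is nonincreasing, to cover larger $p$.

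\textbf{Your route.} You bound $f$ on $[1,\gamma]$ by the chord from $(1,0)$ to $(\gamma,f(\gamma))$. This exploits $f(1)=0$, and it replaces the weight $pT_p$, which can be close to $1$, by $p(T_p-1)=\epsilon(1-p)\le\epsilon$, uniformly in $p$. Combined with $\gamma-1\ge\gamma/2$ and $f(\gamma)\le \gamma C/(2\epsilon)$, you get $p\,f(T_p)\le(1-p)\,C$ in one line. There is no case split and no monotonicity-in-$p$ step, and the conclusion is slightly stronger.

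\textbf{A loose step your route avoids.} The paper justifies monotonicity in $q$ only by the fact that $t\mapsto f(t)/t$ is nondecreasing. That is not enough on its own, because the prefactor $q(1-\epsilon)+\epsilon$ increases with $q$. The claim is nonetheless true. Write $q\,f(T_q)=\epsilon\,f(T_q)/(T_q-1+\epsilon)$; this is nonincreasing in $q$ because $f(t)/(t-1)$ is nondecreasing on $(1,\infty)$. That is the same convexity fact behind your chord bound.

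\textbf{What the paper's route buys.} Before relaxing, it gives an essentially exact intermediate criterion, $T_p\le\gamma_f\bigl(C/(p(1-\epsilon)+\epsilon)\bigr)$. It also shows that the admissible values of $p$ form an up-set.

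\textbf{Two small remarks.} Your appeal to continuity at $t=\gamma$ is fine, since $\gamma\ge 2$ lies in the interior of the domain. If $\gamma=\infty$ (non-superlinear $f$), replace $s=\gamma$ by any finite $s\ge\max(2,T_p)$; the same estimate then goes through.
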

\begin{proof}
    By definition of the $f$-divergence and \eqref{eq:densityratio_nupeps}, we have that
    \begin{align}
        \divf{\nu}{\mu} &= (1 - p) \cdot f(1 - \epsilon) + p \cdot f\left( 1 + \epsilon\left( \frac 1p - 1 \right) \right) \\
        &\leq f(1 - \epsilon) + p \cdot f\left( 1 - \epsilon + \frac \epsilon p\right).
    \end{align}
    Thus as long as
    \begin{align}\label{eq:lb_fdiv_condition}
        p \cdot f\left( 1 - \epsilon + \frac \epsilon p\right) \leq C,
    \end{align}
    the $f$-divergence condition will be satisfied.

    We first claim that if \eqref{eq:lb_fdiv_condition} holds for $p$, then it holds for all $p < q \leq 1$.  To see this, observe that for such $q$,
    \begin{align}
        q \cdot f\left( 1 - \epsilon + \frac \epsilon q\right) &= \left( q (1 - \epsilon) + \epsilon \right) \cdot \frac{f\left( \frac{q (1 - \epsilon) + \epsilon}{q}\right)}{\frac{q (1 - \epsilon) + \epsilon}{q}} \\
        &\leq \left( p(1 - \epsilon) + \epsilon  \right) \cdot \frac{f\left( \frac{p (1 - \epsilon) + \epsilon}{p} \right)}{\frac{p(1 - \epsilon) + \epsilon}{p}} \\
        &= p \cdot f\left( 1 - \epsilon + \frac \epsilon p\right) \leq C,
    \end{align}
    where the first inequality comes from the fact that $t \mapsto \nicefrac{f(t)}t$ is non-decreasing for $t \geq 1$.

    Now observe that the \eqref{eq:lb_fdiv_condition} holds if and only if
    \begin{align}
        \frac{f\left( \frac{p (1 - \epsilon) + \epsilon}{p} \right)}{\frac{p (1 - \epsilon) + \epsilon}{p} } \leq \frac{C}{p (1 - \epsilon) + \epsilon}.
    \end{align}
    By definition of the $\gamma_f$ function in \eqref{eq:gammaf}, this holds whenever
    \begin{align}
        \frac{p(1 - \epsilon) + \epsilon}{p} \leq \gamma_f\left( \frac{C}{p (1 - \epsilon) + \epsilon} \right),
    \end{align}
    which, after rearranging, is satisfied whenever
    \begin{align}
        p \geq \frac{\epsilon}{\gamma_f\left( \frac{C}{p (1 - \epsilon) + \epsilon} \right) - 1}.
    \end{align}
    By the assumption that $\gamma_f\left( \frac{C}{2 \epsilon} \right) \geq 2$, this last is implied by $p \geq \frac{2 \cdot \epsilon}{\gamma_f\left( \frac{C}{p (1 - \epsilon) + \epsilon} \right)}$.  Note that for $p \leq \epsilon$ it holds that
    \begin{align}
        \frac{2 \cdot \epsilon}{\gamma_f\left( \frac{C}{p (1 - \epsilon) + \epsilon} \right)} \leq\frac{2 \cdot \epsilon}{\gamma_f\left( \frac{C}{2 \cdot \epsilon} \right)},
    \end{align}  
    and thus the result follows.
\end{proof}
Thus for fixed $\epsilon$, let
\begin{align}\label{eq:definition_cV}
    \cV = \left\{ \nu_{p,\epsilon'} \bigg| \frac{2 \epsilon'}{\gamma_f\left( \frac{C}{2 \epsilon'} \right)} \leq p \leq 1, \, 0 \leq \epsilon' \leq \frac 14 \right\}
\end{align}

We can now complete the proof of the proposition.
\begin{proof}[Proof of \Cref{thm:lb}]
    Let $\mu$ be any probability measure and let $\cV$ be defined as in \eqref{eq:definition_cV}.  By \Cref{lem:fdiv_nupeps_mu}, it holds that for all $\nu \in \cV$, we have that $\divf{\nu}{\mu} \leq C + f(1 - \epsilon)$.

    Note that for any fixed $p$, in order to distinguish between distributions $\nu_{0} = \mu$ and $\nu_{p,\epsilon}$, with probability at least $\nicefrac 23$, one must observe at least one sample from the high-density region of $\nu_{p,\epsilon}$, which occurs with probability $p$ under $\mu$.  On the other hand, the probability of observing only low-density samples after $n$ samples is
    \begin{align}
        (1 - p)^n \geq e^{- 2 n p}.
    \end{align}
    Thus for $n \leq \nicefrac{\log(\nicefrac 32)}{2p}$, no estimator can distinguish between $\nu_0$ and $\nu_{p,\epsilon}$ with probability at least $\nicefrac 23$.  On the other hand, the normalizing constants satisfy $Z = \lambda(X_1)$ in the former case and $Z = \nicefrac{\lambda(X_1)}{1 - \epsilon}$ in the latter case, and thus any estimator that achieves $(1 - \epsilon) Z \leq \Zhat \leq (1 + \epsilon) Z$ with probability at least $\nicefrac 23$ must be able to distinguish between these two distributions.  The result follows by setting $p = \frac{2 \cdot \epsilon}{\gamma_f\left( \frac{C}{2 \cdot \epsilon} \right)}$.
\end{proof}

\subsection{Proof of Theorem \ref{thm:lb_coverage}}\label{ssec:icov_lower_bound}
    We use the same construction as in the proof of \Cref{thm:lb}.  Indeed, let $\nu_{p,\epsilon}$ be defined as in \eqref{eq:densityratio_nupeps}.  Observe that for any $\nu_{p,\epsilon}$, it holds that
    \begin{align}
        \covm[M]{\nu_{p,\epsilon}}{\mu} = \begin{cases}
            1 & M < 1 - \epsilon \\
            p & 1 - \epsilon \leq M < 1 + \epsilon\left( \frac 1p - 1 \right) \\
            0 & M \geq 1 + \epsilon\left( \frac 1p - 1 \right)
        \end{cases}
    \end{align}
    Thus it holds that
    \begin{align}
        \icov[M]{\nu_{p,\epsilon}}{\mu} = \begin{cases}
            M & M < 1 - \epsilon \\
            (1 - \epsilon) + p \cdot (M - (1 - \epsilon)) & 1 - \epsilon \leq M < 1 + \epsilon\left( \frac 1p - 1 \right) \\
            1 + \epsilon \left( \frac 1p - 1 \right) & M \geq 1 + \epsilon\left( \frac 1p - 1 \right)
        \end{cases}.
    \end{align}
    By the identical argument as the proof of \Cref{thm:lb}, in order to achieve $(1 \pm \epsilon)$ multiplicative accuracy with probability at least $\nicefrac 23$, one must be able to distinguish $\nu_0 = \mu$ from $\nu_{p,\epsilon}$ with probability at least $\nicefrac 23$, which requires at least $\nicefrac{\log(\nicefrac 32)}{2p}$ samples.  Letting $p = \nicefrac{\epsilon^2}{1 + \epsilon}$, we claim that for $M = \nicefrac{\epsilon}{p}$, it holds that $\icov[M]{\nu_{p,\epsilon}}{\mu} \leq M \cdot \epsilon$.  Indeed, observe that for such $M$, we are clearly are in the intermediate regime above and thus
    \begin{align}
        \frac{\icov[M]{\nu}{\mu}}{M} = \frac{(1 - \epsilon)(1 - p)}{M} + p \leq \frac{p}{4 \cdot \epsilon} + p \leq \epsilon.
    \end{align}
    On the other hand, we have established that at least $\nicefrac{\log(\nicefrac 32)}{2p} = \Theta\left( \nicefrac{M_\epsilon}{\epsilon} \right)$ samples are required to achieve the desired accuracy.  The result follows. \hfill$\blacksquare$

\subsection{Proof of Proposition~\ref{prop:lb_linear}}\label{app:lb_linear_proof}
We prove a slightly restated version of Proposition~\ref{prop:lb_linear}.
\begin{proposition}
    Let $f$ be a convex function as in \Cref{def:fdiv} such that the map $t \mapsto f(t) / t$ is uniformly bounded.  For any $0 < C < f'(\infty)$, there exist measures $\mu, \nu$ such that $\divf{\nu}{\mu} \leq C$ and such that no algorithm given any finite number of independent samples from $\mu$ and oracle access to $\frac{d\nu}{d\mu}(\cdot) \cdot Z$ can produce an estimate $\Zhat$ satisfying a $1 \pm \epsilon$ approximation of $Z$ with probability at least $\nicefrac 34$ for any $\epsilon \leq \nicefrac{f'(\infty)}{(f'(\infty) - C)}$.  
    
    Moreover, for any $C \geq f'(\infty)$, there exist measures $\mu, \nu$ such that $\divf{\nu}{\mu} \leq C$ and such that no algorithm given any finite number of independent samples from $\mu$ and oracle access to $\frac{d\nu}{d\mu}(\cdot) \cdot Z$ can produce an estimate $\Zhat$ satisfying a $1 \pm \epsilon$ approximation of $Z$ with probability at least $\nicefrac 34$ for any $\epsilon > 0$. 
\end{proposition}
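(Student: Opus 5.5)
We plan to prove both parts with a two-point construction in which $\nu$ places some mass on a region that $\mu$ almost never samples, together with an indistinguishability argument of the kind used in the proof of \Cref{thm:lb}.

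The key observation is that when $f$ is linear the singular part is cheap. Under \eqref{eq:fdiv}, mass on $\{d\nu/d\mu = \infty\}$ costs exactly $f'(\infty)$ per unit, which is finite. The same holds approximately for mass on a set of tiny $\mu$-measure, since $p\, f(a/p) \to a f'(\infty)$ as $p \to 0$.

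\textbf{Second claim ($C \geq f'(\infty)$).} We take $\mu$ non-atomic and split $\cX = A \sqcup B$. We let $\nu_1 = \mu$, and we let $\nu_2$ be a measure singular with respect to $\mu$ (for instance supported on a $\mu$-null set), so that $d\nu_2/d\mu = 0$ $\mu$-a.e. Then $\divf{\nu_2}{\mu} = f(0) + f'(\infty)$. To get the sharper bound $\leq C$, we instead use the mixture $\nu_2 = (1-a)\mu + a\rho$ with $\rho \perp \mu$. Its divergence is $f(1-a) + a f'(\infty)$, which is at most $f'(\infty) \leq C$ once $a$ is suitably chosen, and we can take $a$ close to $1$ when $C \geq f(0) + f'(\infty)$.

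The oracle returns $\lambda = Z\, d\nu/d\mu$ on $\mu$-samples. Under $\nu_2$ this equals $Z(1-a)$ $\mu$-a.s., and under $\nu_1$ it equals $Z'$. Choosing $Z' = Z(1-a)$ makes the two observation laws identical for every $n$. The two true normalizing constants then differ by the factor $1/(1-a)$, so no estimator can be a $(1\pm\epsilon)$-approximation for both with probability $\geq 3/4$ once $(1-\epsilon)/(1-a) > 1+\epsilon$. Letting $a \to 1$ covers every $\epsilon > 0$, provided the budget allows $a$ near $1$. This is the delicate point: $f(1-a) + a f'(\infty)$ must stay $\leq C$, and convexity with $f(1) = 0$ should give $f(1-a) \le (1-a) f(0)$. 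We expect to check carefully that the boundary accounting works at $C = f'(\infty)$.

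\textbf{First claim ($0 < C < f'(\infty)$).} We run the same construction with $a$ chosen so that $f(1-a) + a f'(\infty) \leq C$, taking roughly $a \approx C/f'(\infty)$ after controlling the $f(1-a)$ term by convexity. Equivalently, we can use a set of $\mu$-measure $p \to 0$ carrying $\nu$-mass $a$, where $f$-linearity bounds the cost by $a\sup_t f(t)/t$, and then argue as in \Cref{thm:lb}: with probability $(1-p)^n \to 1$ no sample lands there. The ratio of normalizing constants between the two indistinguishable hypotheses is $1/(1-a)$, so estimation fails whenever $\epsilon$ is below the threshold determined by $a$. Translating this threshold into the stated form $\nicefrac{f'(\infty)}{f'(\infty)-C}$ is bookkeeping.

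The main obstacle is this constant-level accounting: making $f(1-a)$ negligible or absorbing it, and handling the boundary case $C = f'(\infty)$ by a limiting choice of $a$.
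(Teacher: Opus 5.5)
Your construction is essentially the paper's. Part of $\nu$'s mass sits where $\mu$ never samples, so $\lambda$ has the same law under hypotheses whose normalizing constants differ by $1/(1-a)$. The gap is that neither deferred step (the ``boundary accounting'' and the ``bookkeeping'') can be carried out.

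For the second claim, the convexity bound you plan to use is wrong. Writing $1-a=a\cdot 0+(1-a)\cdot 1$ gives $f(1-a)\le a f(0)$, not $(1-a)f(0)$, and in fact $f(1-a)\to f(0)$ as $a\to 1$. Since $a\mapsto f(1-a)+af'(\infty)$ increases from $0$ to $f(0)+f'(\infty)$, sending $a\to 1$ requires $C\ge f(0)+f'(\infty)$. In the range $f'(\infty)\le C<f(0)+f'(\infty)$, which is nonempty whenever $f(0)>0$, your mixture gives only a bounded ratio, hence impossibility only for small $\epsilon$.

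Changing the construction does not help. By Jensen, singular mass $s$ forces $\divf{\nu}{\mu}\ge f(1-s)+sf'(\infty)$, and mass where $\frac{d\nu}{d\mu}>K$ costs at least $f(K)/K\to f'(\infty)$ per unit. So for large $K$, the $\nu$-mass that is singular or lies where $\frac{d\nu}{d\mu}>K$ is at most some $1-\delta<1$, uniformly over $\{\divf{\nu}{\mu}\le C\}$. Then $\Zhat=\frac{1}{5n}\sum_i\lambda(X_i)$ satisfies $\frac{\delta}{10}Z\le\Zhat\le\frac{8}{5}Z$ with probability at least $\frac34$ once $n\gtrsim K/\delta$ (Markov for the upper tail, Chernoff on the truncated terms for the lower). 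That makes it a $(1\pm\epsilon)$-approximation with $\epsilon=1-\frac{\delta}{10}$, so the second claim is actually false in this range. Your argument does go through for $C\ge f(0)+f'(\infty)$, using a fully singular $\nu$, for which $\lambda\equiv 0$ whatever $Z$ is.

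For the first claim, no bookkeeping reaches the stated threshold. Two observationally identical hypotheses whose constants are in ratio $R$ defeat every $(1\pm\epsilon)$ estimator iff $(1-\epsilon)R>1+\epsilon$, i.e.\ $\epsilon<\frac{R-1}{R+1}<1$. But $\frac{f'(\infty)}{f'(\infty)-C}>1$, and for $\epsilon\ge 1$ the estimator $\Zhat=0$ always succeeds. The budget also forces $a<C/f'(\infty)$ strictly whenever $f(1-a)>0$. What your construction actually proves is impossibility for $\epsilon<\frac{a}{2-a}$, where $a$ is the largest value with $f(1-a)+af'(\infty)\le C$; this $a$ is positive since $f(1-a)=o(a)$.

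The paper's proof (with $\mu=\delta_0$ and a Bernoulli $\nu$) reaches its constants only because it writes $\divf{\nu}{\mu}=qf'(\infty)$. That charges for the singular atom but omits the term $f(\nu(\{0\}))$ from the absolutely continuous part. It then reads the resulting factor-$\frac{f'(\infty)}{f'(\infty)-C}$ ambiguity in $Z$ as a bound on $\epsilon$.

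Your accounting, which keeps $f(1-a)$, is the correct one. What is needed is not a cleverer choice of $a$ but a weaker statement matching what the construction gives: threshold $C\ge f(0)+f'(\infty)$ in the second part, and $\epsilon<\frac{a}{2-a}$ in the first.
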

\begin{proof}
    Let $\mu = \delta(0)$ and $\nu = \ber(1 - q)$.  Then it holds that
    \begin{align}
        \divf{\nu}{\mu} = q f'(\infty).
    \end{align}
    Thus, for any finite $C > 0$, choosing $q \leq \nicefrac{C}{f'(\infty)}$ ensures that $\divf{\nu}{\mu} \leq C$.  However, as $\mu$ is a point mass at $0$, no finite number of samples from $\mu$ reveals any information about $q$.
    
    In the case where $C \leq f'(\infty)$, we have
    \begin{align}
        1 - \frac{C}{f'(\infty)} \leq q \leq 1.
    \end{align}
    and thus the tightest possible constant factor approximation of $Z$ is at least
    \begin{align}
        \frac{f'(\infty)}{f'(\infty) - C}.
    \end{align}
    In the case where $C > f'(\infty)$, we may choose any $q \in [0, 1]$ and thus no finite number of samples from $\mu$ can yield any constant factor approximation of $Z$.

\end{proof}

\subsection{Proof of Proposition~\ref{prop:superquadratic_lb}}\label{app:superquadratic_lb_proof}

Let $\cX = \left\{ 0, 1 \right\}$ with $\mu_p(0) = 1- p$ and $\mu_p(1) = p$; let $\cU = \left\{ \mu_p : \nicefrac 14 \leq p \leq \nicefrac 12 \right\}$.  Let $\nu = \delta_1$ so
\begin{align}
    \frac{d\nu}{d\mu}(X) = \begin{cases}
        0 & X = 0 \\
        \frac{1}{p} & X = 1
    \end{cases}.
\end{align} 
Thus $\norm{\frac{d\nu}{d\mu}}_{\infty} \leq 4$ for all $\mu \in \cU$ by the lower bound on $p$.  Let $\lambda(X) = \ind{X = 1}$ be an unnormalized density ratio for $\nu$ with respect to $\mu_p$ and observe that estimating $Z$ is equivalent to estimating $p$.  Let $p = \nicefrac 12$ and $p' = \nicefrac 12 - 2 \epsilon$ and consider the two measures $\mu = \mu_p$ and $\mu' = \mu_{p'}$.  In order to estimate $Z$ to within a factor of $(1 \pm \epsilon)$ with probability at least $\nicefrac{2}{3}$ under both $\mu$ and $\mu'$, one must distinguish between the two distributions with probability at least $\nicefrac{1}{3}$.  By standard hypothesis testing lower bounds such as Le Cam's method (cf. e.g. \citet{wainwright2019high}), the probability of error is lower bounded by $\nicefrac 12 \left( 1 - \tv\left( \mu^{\otimes n}, \mu^{' \otimes n} \right) \right)$.  Thus we require $\tv\left( \mu^{\otimes n}, \mu^{' \otimes n} \right)  \geq \nicefrac 13$.  On the other hand, by Pinsker's inequality and the chain rule for KL divergence, it holds that
\begin{align}
    \tv\left( \mu^{\otimes n}, \mu^{' \otimes n} \right) \leq \sqrt{\frac{n}{2} \cdot \dkl{\mu}{\mu'}}.
\end{align}
Thus the result will follow if we can show that 
\begin{align}
    \dkl{\mu}{\mu'} \lesssim \epsilon^2.
\end{align}
We compute that
\begin{align}
    \dkl{\mu}{\mu'} &= \frac 12 \cdot \left( \log\left( \frac{\nicefrac 12}{\nicefrac 12 - 2 \epsilon} \right) + \log\left( \frac{\nicefrac 12}{\nicefrac 12 + 2 \epsilon} \right) \right) \\
    &= \frac 12 \cdot \log\left( \frac{1}{1 - 16 \epsilon^2} \right) \\
    &\leq 8 \epsilon^2,
\end{align}
where the last inequality follows from the fact that $\log\left( \nicefrac{1}{1 - x} \right) \leq 2 x$ for all $x \in (0, \nicefrac 12)$.  This completes the proof. \hfill$\blacksquare$

\section{Proof of Theorem~\ref{thm:ub_lowertail}}\label{sec:ub_lowertail_proof}
In this appendix, we prove \Cref{thm:ub_lowertail}.  We begin by proving the Generalized Paley-Zygmund inequality, before applying it to demonstrate that for appropriately chosen $\alpha$, the empirical $\left( 1 - \alpha \right)$-quantile of the samples has the stated performance guarantees with high probability.

\subsection{Generalized Paley-Zygmund}\label{app:paley_zygmund}
We begin by stating a slightly stronger result than the Generalized Paley-Zygmund inequality in \Cref{lem:gen_paley_zygmund}.
\begin{lemma}\label{lem:gen_paley_zygmund_stronger}
    Let $\mu, \nu$ be probability measures on $\cX$.
    Then for any $0 < \epsilon < 1$, it holds for any $0 < u < 1$ that 
    \begin{align}
        \pp_{X \sim \mu}\left( \frac{d\nu}{d\mu}(X) \geq 1 - \epsilon \right) \geq \frac{(1-  u)\cdot \epsilon}{M} \quad \text{where } \Cov_M\left( \nu \| \mu \right) \leq u \cdot \epsilon.
    \end{align}
\end{lemma}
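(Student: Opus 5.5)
Write $w = \frac{d\nu}{d\mu}$, viewed as a random variable under $X \sim \mu$, so that $\ee_\mu[w] = 1$; we may assume $\nu \ll \mu$, since otherwise the singular part only helps the argument below. The plan is to split the unit mass $\ee_\mu[w]=1$ according to the value of $w$ into three regions:
\begin{align}
A = \{w < 1-\epsilon\}, \qquad B = \{1-\epsilon \leq w < M\}, \qquad C = \{w \geq M\}.
\end{align}
We then bound the contribution of each region separately, exactly in the spirit of the classical proof of Paley--Zygmund.

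On $A$ we have $\ee_\mu[w \,\ind{A}] < (1-\epsilon)\,\mu(A) \leq 1-\epsilon$. On $C$, by \Cref{def:coverage}, $\ee_\mu[w\,\ind{C}] = \nu(w \geq M) = \covm{\nu}{\mu} \leq u\epsilon$. Subtracting both from $\ee_\mu[w] = 1$ gives
\begin{align}
\ee_\mu[w\,\ind{B}] \geq 1 - (1-\epsilon) - u\epsilon = (1-u)\epsilon.
\end{align}
Since $w < M$ on $B$, we also have $\ee_\mu[w\,\ind{B}] \leq M\,\mu(B)$. Hence $\mu(B) \geq (1-u)\epsilon/M$, and because $B \subseteq \{w \geq 1-\epsilon\}$, the claimed lower bound on $\pp_{X\sim\mu}(w(X) \geq 1-\epsilon)$ follows.

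The only subtle point is bookkeeping at the boundary $w = M$. Coverage is defined with the non-strict inequality $w \geq M$, so the region $C$ captures exactly the coverage mass, and on $B$ we have the strict bound $w < M$. We do not need the bound on $\mu(C)$, because $C$ also lies inside $\{w \geq 1-\epsilon\}$ and we only drop it. I expect no real obstacle here.

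For the ``in particular'' form in \Cref{lem:gen_paley_zygmund}, take $M = \gamma_f\!\left(\divf{\nu}{\mu}/(u\epsilon)\right)$. By \Cref{lem:cov_fdiv}, this choice gives $\covm{\nu}{\mu} \leq u\epsilon$. Substituting into the bound above and taking the supremum over $u \in (0,1)$ completes the argument.
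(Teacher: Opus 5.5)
Your proof is correct and follows the paper's argument. Both start from $\ee_\mu[w]=1$, bound the contribution of $\{w<1-\epsilon\}$ by $1-\epsilon$ and that of $\{w\ge M\}$ by $\covm{\nu}{\mu}\le u\epsilon$, and then use $\ee_\mu[w\,\ind{1-\epsilon\le w<M}]\le M\,\pp_\mu(w\ge 1-\epsilon)$; you get the marginally sharper bound on $\mu(\{1-\epsilon\le w<M\})$. One small quibble: if $w=\infty$ on the singular part, that part lowers $\ee_\mu[w]$ by exactly the amount it adds to the coverage, so it is neutral rather than helpful, and the paper sidesteps this by simply taking $\ee[Y]=1$, i.e.\ $\nu\ll\mu$.
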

\begin{proof}
    Let $Y = \frac{d\nu}{d\mu}(X)$ for $X \sim \mu$.  By definition, we have that $\ee[Y] = 1$ and thus
    \begin{align}
        1 = \ee\left[ Y \right] \leq 1 - \epsilon + \ee\left[ Y \cdot \ind{Y \geq 1 - \epsilon} \right].
    \end{align}
    Rearranging, we have for any $M$ that
    \begin{align}
        \epsilon &\leq \ee\left[ Y \cdot \ind{Y \geq 1 - \epsilon} \right] \\
        &= \ee\left[ Y \cdot \ind{1 - \epsilon \leq Y < M} \right] + \ee\left[ Y \cdot \ind{Y \geq M} \right] \\
        &\leq M \cdot \pp\left( Y \geq 1 - \epsilon \right) + \ee\left[ Y \cdot \ind{Y \geq M} \right] \\
        &= M \cdot \pp\left( Y \geq 1 - \epsilon \right) + \Cov_M(\nu \| \mu).
    \end{align}
    The result follows by rearranging and using the assumption that $\Cov_M(\nu \| \mu) \leq u \cdot \epsilon$.
\end{proof}
We now derive \Cref{lem:gen_paley_zygmund} from \Cref{lem:gen_paley_zygmund_stronger} by applying \Cref{lem:cov_fdiv}.
\begin{proof}[Proof of \Cref{lem:gen_paley_zygmund}]
    The first part of the statement is just \Cref{lem:gen_paley_zygmund_stronger}.  For the second part, by \Cref{lem:cov_fdiv}, it holds that for
    \begin{align}
        M \geq \gamma_f\left( \frac{\divf{\nu}{\mu}}{u \cdot \epsilon} \right),
    \end{align}
    we have that $\Cov_M(\nu \| \mu) \leq u \cdot \epsilon$.  The result follows by plugging this into \Cref{lem:gen_paley_zygmund_stronger} and optimizing over $u$.
\end{proof}

\subsection{Completing the Proof of Theorem~\ref{thm:ub_lowertail}}

We restate the result with constants included.
\begin{theorem}\label{thm:ub_other_cov}
    Let $\mu, \nu$ be probability measures on $\cX$ and suppose $X_1, \dots, X_n \sim \mu$ are independent with $\lambda$ an unnormalized density ratio. Then there is an estimator $\Zhat$ such that for any $0 < \epsilon < 1$, if
    \begin{align}
        n \geq \frac{18 M}{\epsilon} \cdot \log\left( \nicefrac{2}{\delta} \right) \quad \text{where } \Cov_M\left( \nu \| \mu \right) \leq \nicefrac \epsilon 4,
    \end{align}
    it holds that with probability at least $1 - \delta$ that $(1 - \epsilon) \cdot Z \leq \Zhat \leq M \cdot Z$.
\end{theorem}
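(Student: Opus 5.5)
We will use the empirical quantile estimator sketched in \Cref{ssec:proof_ub_fdiv}. Since the estimator and guarantee are scale invariant, we may take $Z = 1$, so that $\lambda = \nicefrac{d\nu}{d\mu}$. Let $Y_i = \lambda(X_i)$ and set $\alpha = \nicefrac{\epsilon}{2M}$. We define $\Zhat$ to be the $\lceil \alpha n \rceil$-th largest value among $Y_1, \dots, Y_n$, which is an empirical $(1-\alpha)$-quantile. The event $\Zhat < 1 - \epsilon$ means that fewer than $\lceil \alpha n\rceil$ samples satisfy $Y_i \geq 1 - \epsilon$. The event $\Zhat > M$ means that at least $\lceil \alpha n \rceil$ samples satisfy $Y_i > M$. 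We bound each event by a Chernoff bound on a binomial count and then take a union bound, allotting $\nicefrac{\delta}{2}$ to each.

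\textbf{Lower tail.} Apply \Cref{lem:gen_paley_zygmund_stronger} with $u = \nicefrac14$. Since $\Cov_M(\nu\|\mu) \leq \nicefrac{\epsilon}{4}$, it gives $p_- := \pp(Y \geq 1-\epsilon) \geq \nicefrac{3\epsilon}{4M}$. The count $S_- = \sum_i \ind{Y_i \geq 1-\epsilon}$ is therefore binomial with mean at least $\nicefrac{3\epsilon n}{4M} = \nicefrac{3\alpha n}{2}$. We need $S_- \geq \lceil \alpha n\rceil$, which is a deviation of a constant fraction (about $\nicefrac13$) below the mean. The lower-tail bound of \Cref{lem:chernoff}, applied with range $1$, gives failure probability $\exp(-c\, \epsilon n / M)$ for an absolute constant $c$. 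This is at most $\nicefrac{\delta}{2}$ once $n \gtrsim \nicefrac{M}{\epsilon}\log(\nicefrac{2}{\delta})$. The constant $18$ should come from tracking $t = \nicefrac13$ in $e^{-m\mu t^2/2}$ together with the rounding in $\lceil \alpha n\rceil$.

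\textbf{Upper tail.} By Markov's inequality under $\nu$ (equivalently, from the definition of coverage), $p_+ := \pp_\mu(Y > M) \leq \nicefrac{\ee_\mu[Y \ind{Y\geq M}]}{M} = \nicefrac{\Cov_M(\nu\|\mu)}{M} \leq \nicefrac{\epsilon}{4M} = \nicefrac{\alpha}{2}$. So $S_+ = \sum_i \ind{Y_i > M}$ has mean at most $\nicefrac{\alpha n}{2}$, and we need $S_+ < \lceil \alpha n\rceil$. This is a deviation to twice the mean bound. The upper tail of \Cref{lem:chernoff}, with $t = 1$ and $(1+t)\log(1+t) - t = 2\log 2 - 1 \approx 0.386$, gives failure probability $\exp(-c' \alpha n)$; if $p_+$ is smaller, we stochastically dominate $S_+$ by a binomial with parameter $\nicefrac{\alpha}{2}$. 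This is again at most $\nicefrac{\delta}{2}$ under the same scaling of $n$.

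The main obstacle is not conceptual but lies in the constants. The two Chernoff exponents, $\nicefrac{(1/3)^2}{2}\cdot\nicefrac32$ for the lower tail and $0.386\cdot\nicefrac12$ for the upper tail, must both beat $\log(\nicefrac{2}{\delta})$ at $n = \nicefrac{18M}{\epsilon}\log(\nicefrac2\delta)$. If the lower-tail constant is too tight, I would instead choose $\alpha$ strictly between the two means, for example at their geometric midpoint, to balance the two exponents. Combining the two events gives $1 - \epsilon \leq \Zhat \leq M$ with probability at least $1-\delta$.
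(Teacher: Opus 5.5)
Your plan is the paper's: an empirical upper quantile of the $\lambda(X_i)$, the lower tail from generalized Paley--Zygmund plus Chernoff, the upper tail from $\pp_\mu(Y\ge M)\le \Cov_M(\nu\|\mu)/M$ plus Chernoff, and a union bound. The step that does not close as written is the constant. Your expectation that $18$ ``should come from tracking $t=\nicefrac{1}{3}$'' is mistaken.

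Put $a=\nicefrac{\epsilon}{4M}$ and $L=\log(\nicefrac{2}{\delta})$. Then $n\ge 18ML/\epsilon$ means $na\ge 4.5L$, while $p_-\ge 3a$ and $p_+\le a$. Take $\alpha=ca$.
\begin{itemize}
\item Lower tail: the exponent $np_-t^2/2$, with $t=1-\alpha/p_-$, is worst at $p_-=3a$, so it is at least $6.75(1-\nicefrac{c}{3})^2L$.
\item Upper tail: the exponent is at least $4.5\,h(c)L$, where $h(x)=x\log x-x+1$.
\end{itemize}
Your choice $\alpha=\nicefrac{\epsilon}{2M}$ is $c=2$. This gives only $0.75L$ on the lower tail, i.e.\ a failure bound of $(\nicefrac{\delta}{2})^{3/4}$ rather than $\nicefrac{\delta}{2}$. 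Rounding cannot rescue this: for the integer count $S_-$, the event $S_-<\lceil\alpha n\rceil$ is exactly $S_-<\alpha n$.

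Your fallback, the geometric midpoint $c=\sqrt3$, fails on the other side, since $4.5\,h(\sqrt3)\approx 0.987<1$. Any $c\in[1.74,1.84]$ does work. For example, $\alpha=\nicefrac{9\epsilon}{20M}$ gives exponents $1.08L$ and $1.16L$. Since $\alpha$ does not depend on $n$, both exponents only increase for larger $n$.

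With that correction your argument proves the statement as written, and it is more careful than the paper's own proof. The paper takes $u=\nicefrac12$ and $\alpha=\nicefrac{\epsilon}{4M}$, which equals the only available bound on $\pp_\mu(Y\ge M)$. This causes three problems:
\begin{itemize}
\item Its requirement $(1+s)\Cov_{M'}(\nu\|\mu)/M'\le\alpha$ cannot be met at $M'=M$; it needs $M'\ge\tfrac32M$.
\item Its condition $n\ge 3M'\log(\nicefrac{2}{\delta})/(s^2\Cov_{M'}(\nu\|\mu))$ blows up when the coverage is tiny.
\item Its arithmetic $12/(2\alpha/3)$ gives $\nicefrac{72M}{\epsilon}$, not $\nicefrac{18M}{\epsilon}$.
\end{itemize}
Your ingredients are exactly what is needed. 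Using $u=\nicefrac14$ makes the lower rate $\nicefrac{3\epsilon}{4M}$ three times the upper rate $\nicefrac{\epsilon}{4M}$. You place the threshold strictly between them, and you dominate $S_+$ stochastically by $\mathrm{Bin}(n,\nicefrac{\epsilon}{4M})$. With the paper's $u=\nicefrac12$, the two exponent constraints are incompatible at constant $18$.
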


We first define the estimator, motivated by \Cref{lem:gen_paley_zygmund_stronger}.  Indeed for arbitrary $u, t$, we will choose some $M$ such that $\Cov_M(\nu \| \mu) < u \cdot \epsilon$ and let $\alpha = \frac{(1 - t)(1 - u) \cdot \epsilon}{M}$.  We then define the estimator as the $\left( 1 - \alpha \right)$-quantile of the samples $\lambda(X_1), \dots, \lambda(X_n)$; that is, letting $\lambda_{(1)} \leq \lambda_{(2)} \leq \dots \leq \lambda_{(n)}$ be the order statistics of the samples, we define $\Zhat = \lambda_{(\lceil (1 - \alpha) n \rceil)}$.  We now analyze the quality of this estimator when $t = u = \nicefrac 12$.
\begin{proof}[Proof of \Cref{thm:ub_other_cov}]
    We analyze each tail separately and then apply a union bound to conclude.
    
    \paragraph{Lower Tail.}  Let $\eta_i = \ind{\frac{d\nu}{d\mu}(X_i) \geq 1 - \epsilon}$ for $i \in [n]$.  By \Cref{lem:gen_paley_zygmund_stronger} it holds for $M$ such that $\Cov_M(\nu \| \mu) \leq u \cdot \epsilon$ that
    \begin{align}
        \ee[\eta_i] = \pp_{X \sim \mu}\left( \frac{d\nu}{d\mu}(X) \geq 1 - \epsilon \right) \geq \frac{(1- u) \cdot \epsilon}{M}.
    \end{align}
    Applying a standard Chernoff bound (\Cref{lem:chernoff}), we have that
    \begin{align}
        \pp\left( \frac 1n \sum_{i = 1}^n \eta_i \leq \frac{(1 - t)(1 - u) \cdot \epsilon}{M} \right) \leq \exp\left( - \frac{n t^2 (1 - u) \cdot \epsilon}{2 M} \right).
    \end{align}
    Now observe that by definition of the quantile,
    \begin{align}
        \pp\left( \Zhat \leq (1 - \epsilon) Z \right) \leq \pp\left( \frac 1n \sum_{i = 1}^n \eta_i \leq \alpha \right) \leq \exp\left( - \frac{n t^2 (1 - u) \cdot \epsilon}{2 M} \right).
    \end{align}
    Thus by setting
    \begin{align}\label{eq:other_lower_tail_n}
        n \geq \frac{2 M}{t^2 (1 - u) \cdot \epsilon} \cdot \log\left( \nicefrac{2}{\delta} \right),
    \end{align}
    we have with probability at least $1 - \nicefrac{\delta}{2}$ that $\Zhat \geq (1 - \epsilon) Z$.

    \paragraph{Upper Tail.} Fix $M' \geq 1$ and let $\xi_i = \ind{\frac{d\nu}{d\mu}(X_i) \geq M'}$ for $i \in [n]$.  By \Cref{lem:cov_under_mu}, it holds that
    \begin{align}
        \ee[\xi_i] = \pp_{X \sim \mu}\left( \frac{d\nu}{d\mu}(X) \geq M' \right) \leq \frac{\Cov_{M'}(\nu \| \mu)}{M'}.
    \end{align}
    Thus by a standard Chernoff bound (\Cref{lem:chernoff}), we have that for $0 < s \leq 1$,
    \begin{align}
        \pp\left(\frac 1n \sum_{i = 1}^n \xi_i \geq (1 + s) \frac{\Cov_{M'}\left( \nu \| \mu \right)}{M'}  \right) \leq \exp\left( - \frac{n s^2 \cdot \Cov_{M'}(\nu \| \mu)}{3M'} \right).
    \end{align}
    It then follows that as long as
    \begin{align}\label{eq:other_upper_tail_n}
        n \geq \frac{3 M'}{s^2 \cdot \Cov_{M'}(\nu \| \mu)} \cdot \log\left( \nicefrac{2}{\delta} \right) \quad \text{and} \quad (1 + s) \frac{\Cov_{M'}(\nu \| \mu)}{M'} \leq \alpha,
    \end{align}
    we have that with probability at least $1 - \nicefrac{\delta}{2}$, that $\Zhat \leq M' Z$.
    
    \paragraph{Concluding the Proof.}  We now set $t = u = \nicefrac 12$ so that $\alpha = \frac{\epsilon}{4 M}$. Thus \eqref{eq:other_lower_tail_n} implies that as long as $n \geq \nicefrac{16 M}{\epsilon} \cdot \log\left( \nicefrac{2}{\delta} \right)$ with $\Cov_M(\nu \| \mu) \leq \nicefrac{\epsilon}{2}$, we have with probability at least $1 - \nicefrac{\delta}{2}$ that $\Zhat \geq (1 - \epsilon) Z$.  Moreover, setting $s = \nicefrac 12$, it follows from \eqref{eq:other_upper_tail_n} that as long as
    \begin{align}
        n \geq \frac{12}{\nicefrac{2\alpha}3} \cdot \log\left( \nicefrac{2}{\delta} \right) = \frac{18 M}{\epsilon} \cdot \log\left( \nicefrac{2}{\delta} \right)
    \end{align}
    that with probability at least $1 - \nicefrac{\delta}{2}$, it holds that $\Zhat \leq M' \cdot Z$ for any $M'$ such that $\Cov_{M'}(\nu \| \mu) \leq \nicefrac{\epsilon}{4}$.  The result follows by applying a union bound.
\end{proof}

We now prove a necessary technical lemma used in the proof of \Cref{thm:ub_other_cov} above.
\begin{lemma}\label{lem:cov_under_mu}
    Let $\mu, \nu$ be probability measures on $\cX$ and let $M \geq 1$.  Then it holds that
    \begin{align}
        \pp_{X \sim \mu}\left(\frac{d\nu}{d \mu}(X) \geq M \right) \leq \frac{\Cov_M(\nu \| \mu)}{M}.
    \end{align}
\end{lemma}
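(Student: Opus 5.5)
This is a one-line change of measure followed by a pointwise comparison; I would not use any of the heavier machinery from earlier. Write $Y = \frac{d\nu}{d\mu}(X)$ for $X \sim \mu$. The idea is that on the event $\{Y \geq M\}$ the density ratio is at least $M$. Hence the $\mu$-probability of that event is at most $1/M$ times the $\nu$-probability of the same event, and the latter is exactly $\Cov_M(\nu \| \mu)$ by \Cref{def:coverage}.

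First, I would record the change-of-measure identity already used in the proof of \Cref{lem:cov_fdiv}:
\begin{align}
    \Cov_M(\nu \| \mu) = \nu\left( \frac{d\nu}{d\mu} \geq M \right) \geq \ee_{X \sim \mu}\left[ \frac{d\nu}{d\mu}(X) \cdot \ind{\frac{d\nu}{d\mu}(X) \geq M} \right].
\end{align}
This is an equality when $\nu \ll \mu$. In general, any singular part of $\nu$, where the ratio is $+\infty$, lies inside the set $\{d\nu/d\mu \geq M\}$ and only adds to the left-hand side, so the inequality holds as written.

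Second, on the event $\{Y \geq M\}$ we have the pointwise bound $Y \cdot \ind{Y \geq M} \geq M \cdot \ind{Y \geq M}$. Taking $\mu$-expectations gives
\begin{align}
    \ee_{X \sim \mu}\left[ Y \cdot \ind{Y \geq M} \right] \geq M \cdot \pp_{X \sim \mu}\left( Y \geq M \right).
\end{align}
Combining the two displays and dividing by $M > 0$ yields the claim. The hypothesis $M \geq 1$ is not needed beyond ensuring $M > 0$.

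There is no real obstacle here. The only point needing care is the convention for the density ratio when $\nu \not\ll \mu$: the set where $d\nu/d\mu = \infty$ has $\mu$-measure zero, so it does not affect the left-hand side of the lemma, and it can only increase the coverage on the right-hand side.
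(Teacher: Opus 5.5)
Your proof is correct and is essentially the paper's argument. The paper changes measure in the opposite direction: it writes $\pp_{X\sim\mu}(\frac{d\nu}{d\mu}(X)\ge M)=\ee_{Y\sim\nu}[(\frac{d\nu}{d\mu}(Y))^{-1}\ind{\frac{d\nu}{d\mu}(Y)\ge M}]$ and bounds the reciprocal by $1/M$ on the event, which is the same pointwise comparison you make under $\mu$. Your direction lands directly on $\Cov_M(\nu \| \mu)$, whereas the paper's displayed chain has a slip, writing $\frac{1}{M}\ee_{Y\sim\nu}[\frac{d\nu}{d\mu}(Y)\ind{\cdot}]$ where $\frac{1}{M}\nu(\frac{d\nu}{d\mu}\ge M)$ is meant. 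Your remarks on the singular part of $\nu$ and on $M\ge 1$ being unnecessary are both correct.
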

\begin{proof}
    We compute
    \begin{align}
        \pp_{X \sim \mu}\left( \frac{d\nu}{d \mu}(X) \geq M \right) &= \ee_{X \sim \mu}\left[ \ind{ \frac{d\nu}{d \mu}(X) \geq M } \right] \\
        &= \ee_{Y \sim \nu}\left[ \frac{1}{\frac{d\nu}{d \mu}(Y)} \cdot \ind{ \frac{d\nu}{d \mu}(Y) \geq M } \right] \\
        &\leq \frac{1}{M} \cdot \ee_{Y \sim \nu}\left[ \frac{d\nu}{d \mu}(Y) \cdot \ind{ \frac{d\nu}{d \mu}(Y) \geq M } \right] \\
        &= \frac{\Cov_M(\nu \| \mu)}{M}.
    \end{align}
\end{proof}

We now conclude this section by deriving \Cref{thm:ub_lowertail} from \Cref{thm:ub_other_cov}.
\begin{proof}[Proof of \Cref{thm:ub_lowertail}]
    By \Cref{lem:cov_fdiv}, it holds that for
    \begin{align}
        M \geq \gamma_f\left( \frac{4 \cdot \divf{\nu}{\mu}}{\epsilon} \right),
    \end{align}
    we have that $\Cov_M(\nu \| \mu) \leq \nicefrac{\epsilon}{4}$.  The result follows by plugging this into \Cref{thm:ub_other_cov}.
\end{proof}

\section{Proof of Proposition~\ref{prop:sampling}}\label{sec:sampling_proof}

We use the classical $A^*$ sampling algorithm~\citep{li2018strong} which is known to produce approximate samples from the target distribution in the case where the Radon-Nikodym derivative is uniformly bounded or more generally under finite $\dkl{\nu}{\mu}$ \citep{flamich2024some}. The algorithm proceeds by first sampling $X_1, \dots, X_n \sim \mu$ and $E_1, \dots, E_n \sim \text{Exp}(1)$ independently.  We let $N_i = \sum_{j = 1}^i E_j$ be the Poisson arrival times and let
\begin{align}
    S_i = \frac{N_i}{\nicefrac{d \nu}{d\mu}(X_i)}.
\end{align}
In order to construct a sample, we let $\jhat = \argmin_{1 \leq i \leq n} S_i$ and return $X_{\jhat}$.  We now analyze the total variation distance between the law of $X_{\jhat}$ and $\nu$.
\begin{lemma}
    Let $\mu, \nu$ be probability measures on $\cX$ and let $X_1, \dots, X_n \sim \mu$ be independent.  Let $\jhat$ be defined as above.  Letting $\nu_n$ be the law of $X_{\jhat}$, it holds that $\tv\left( \nu_n, \nu \right) \leq  \epsilon$ as long as
    \begin{align}
        n \geq 2 M \cdot \log\left( \nicefrac 3\epsilon \right) \quad \text{where} \quad \Cov_M\left( \nu \| \mu \right) \leq \nicefrac{\epsilon}{3}.
    \end{align}
\end{lemma}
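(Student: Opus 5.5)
The plan is to compare the truncated algorithm with its idealized infinite version, in which the $A^*$ / Poisson-process construction is known to produce an exact sample from $\nu$. Write $r = \nicefrac{d\nu}{d\mu}$. First, note that using $\lambda = Z r$ instead of $r$ multiplies every $S_i$ by the same factor $\nicefrac 1Z$, so $\jhat$ does not change. This is why unnormalized access suffices.

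\textbf{Step 1: the infinite process samples exactly from $\nu$.} Extend the sequence to all $i \geq 1$. The points $\{(N_i, X_i)\}_{i \geq 1}$ form a Poisson process on $\rr_{\geq 0} \times \cX$ with intensity $dt \otimes \mu$. Map each point to $(S_i, X_i)$ with $S_i = \nicefrac{N_i}{r(X_i)}$. The mapping theorem then shows that $\{(S_i, X_i)\}$ is a Poisson process with intensity $ds \otimes r\, d\mu = ds \otimes \nu$. Let $J = \argmin_{i \geq 1} S_i$ and $T = S_J$. It follows that $X_J \sim \nu$, $T \sim \mathrm{Exp}(1)$, and $T$ is independent of $X_J$. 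The truncated algorithm outputs $X_J$ whenever $J \leq n$, so coupling the two procedures on the same randomness gives $\tv(\nu_n, \nu) \leq \pp(J > n)$.

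\textbf{Step 2: controlling the index $J$.} The index satisfies $J - 1 = \#\{i : N_i < N_J\}$, and $N_J = T \cdot r(X_J)$. Condition on the winning point $(T, X_J)$. By the Poisson (Palm/thinning) structure, the remaining points form a Poisson process restricted to the region $\{(t,x) : \nicefrac{t}{r(x)} > T\}$. Hence the number of points with $N_i < N_J$ is Poisson with mean at most $N_J = T\, r(X_J)$. This gives the stochastic domination $J - 1 \preceq \pois(T r(X_J))$ conditionally. I then split $\pp(J > n)$ into three pieces:
\begin{align}
    \pp(J > n) \leq \pp_{\nu}\left( r(X_J) \geq M \right) + \pp\left( T > \log(\nicefrac 3\epsilon) \right) + \pp\left( \pois\left( M \log(\nicefrac 3\epsilon) \right) \geq n - 1 \right).
\end{align}
The first piece equals $\covm[M]{\nu}{\mu} \leq \nicefrac \epsilon3$ exactly, because $X_J \sim \nu$. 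The second equals $\nicefrac \epsilon3$, because $T \sim \mathrm{Exp}(1)$.

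\textbf{Step 3: the Poisson tail.} Take $n \geq 2M\log(\nicefrac 3\epsilon)$. Then the Poisson variable has mean at most $\nicefrac n2$ and must exceed roughly twice its mean. A Chernoff bound (the Poisson analogue of \Cref{lem:chernoff}) makes this probability at most $e^{-c n}$, and I will adjust constants, using $M \geq 1$ and $\log(\nicefrac 3\epsilon) \geq 1$, so that this is at most $\nicefrac \epsilon3$. A union bound over the three pieces then finishes the proof.

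\textbf{Main obstacle.} I expect the main obstacle to be Step 2: rigorously justifying that, conditional on the winner, the earlier arrivals form a Poisson count with mean at most $T r(X_J)$. I would handle this with the Slivnyak–Mecke formula. Alternatively, I would avoid conditioning altogether and compute $\pp(J > n) \leq \pp(N_{n} < N_J)$ directly, integrating over the winner's location via Mecke's formula.
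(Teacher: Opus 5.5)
Your Steps 1 and 2 match the paper's proof. You couple with the infinite $A^*$ process so that $\tv(\nu_n,\nu)\le\pp(J>n)$. You use that $T\sim\mathrm{Exp}(1)$ is independent of $X_J\sim\nu$. You dominate $J-1$ conditionally by $\pois(T\,r(X_J))$. The paper imports these facts from prior work, whereas your mapping/restriction argument proves them; the conditional mean is exactly $T\int (r(X_J)-r(x))_+\,d\mu(x)\le T\,r(X_J)$.

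The gap is in Step 3. It cannot be closed by ``adjusting constants'', because the constant $2$ in $n\geq 2M\log(\nicefrac3\epsilon)$ is part of the statement.

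After truncating at $T\le\log(\nicefrac3\epsilon)$, your third term is the probability that a Poisson variable with mean $\lambda=M\log(\nicefrac3\epsilon)$ reaches about $2\lambda$. This is $e^{-(2\log 2-1)\lambda(1+o(1))}\approx(\nicefrac\epsilon3)^{0.39M}$, which is far larger than $\nicefrac\epsilon3$ when $M$ is near $1$. Concretely, take $M=1.01$, $\epsilon=0.01$, $n=12$. Then $\pp(\pois(5.76)\geq 12)\approx 0.015>\nicefrac\epsilon3\approx 0.0033$, so your three-term union bound exceeds $\epsilon$.

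Your threshold $n-1$ makes this worse; $J>n$ only requires $J-1\geq n$. As $\epsilon\to0$, the truncation route needs $M\gtrsim 2.6$, or a constant of at least $e$ in place of $2$. The paper's final step has the same defect: its bound $e^{-\lambda/4}=(\nicefrac\epsilon3)^{M/4}$ is at most $\nicefrac\epsilon3$ only when $M\geq 4$.

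The fix is to not truncate $T$ at all. For fixed $y$, $\int_0^\infty e^{-\theta}\,\pp(\pois(\theta\,r(y))\geq n)\,d\theta$ is the probability that a rate-$r(y)$ Poisson process has $n$ arrivals before an independent $\mathrm{Exp}(1)$ clock rings. That probability is exactly $\bigl(\frac{r(y)}{1+r(y)}\bigr)^n$. Since this is increasing in $r(y)$,
\[
\pp(J>n)\le\ee_{Y\sim\nu}\Bigl[\Bigl(\frac{r(Y)}{1+r(Y)}\Bigr)^n\Bigr]\le\covm{\nu}{\mu}+\Bigl(\frac{M}{1+M}\Bigr)^n\le\frac\epsilon3+e^{-n/(1+M)}\le\frac\epsilon3+\Bigl(\frac\epsilon3\Bigr)^{2M/(1+M)},
\]
which is at most $\nicefrac{2\epsilon}3$ when $M\geq1$. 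If $M<1$, then $\covm{\nu}{\mu}\geq 1-M$ forces $M\geq 1-\nicefrac\epsilon3$, and the same bound is still at most $\epsilon$. This proves the lemma with the stated constants.
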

\begin{proof}
    We adopt the proof strategy found in \citet[Theorem 1]{li2018strong}.  We form a coupling between the law of $Z_{\jhat}$ and $\nu$ as follows.  Extend the sequences $X_i, E_i$ to be infinite and let $K = \argmin_{i \geq 1} \frac{N_i}{\nicefrac{d \nu}{d\mu}(X_i)}$.  By the argument in \citet{li2018strong}, it holds that $Z_K \sim \nu$.  Thus it holds that 
    \begin{align}
        \tv\left( \nu_n, \nu \right) \leq \pp\left( K \neq \jhat \right) = \pp\left( K > n \right).
    \end{align}
    Letting $\Theta = \min_{i \geq 1} \frac{N_i}{\nicefrac{d \nu}{d\mu}(X_i)}$, we have by \citet{li2018strong} the the following hold: (i) $\Theta \sim \text{Exp}(1)$ and (ii) conditioned on $\Theta$ and $X_{K}$, it holds that $K - 1$ is a Poisson random variable with rate at most $\Theta \frac{d \nu}{d \mu}(X_K)$. Thus it holds that
    \begin{align}
        \pp\left( K > n \right) \leq \ee_{Y \sim \nu}\left[ \int_0^\infty e^{- \theta} \pp\left( \pois\left( \theta \frac{d \nu}{d \mu}(Y) \right) > n \right) d \theta\right].
    \end{align}
    Observe that poisson random variables with larger rate stochastically dominate those with smaller rate; thus it holds for any $M > 1$ that
    \begin{align}
        \ee_{Y \sim \nu}\left[ \int_0^\infty e^{- \theta} \pp\left( \pois\left( \theta \frac{d \nu}{d \mu}(Y) \right) > n \right) d \theta\right] &\leq \Cov_M\left( \nu \| \mu \right) + \int_0^\infty e^{- \theta} \pp\left( \pois\left( \theta M \right) > n \right) d \theta \\
        &\leq \Cov_M\left( \nu \| \mu \right) + \inf_{t > 0} e^{-t} + \pp\left( \pois\left( t M \right) > n \right) \\
        &\leq \nicefrac{2 \epsilon}{3} + \pp\left( \pois\left( M \cdot \log\left( \nicefrac{3}{\epsilon} \right) \right) > n \right).
    \end{align}
    Using standard Poisson tail bounds~\citep{vershynin2018high}, for $\lambda, t > 0$, it holds that
    \begin{align}
        \pp\left( \pois\left( \lambda \right) > (1 + t) \lambda \right) \leq e^{- \frac{t^2 \lambda}{2 (1 + t)}}.
    \end{align}
    Thus for $n \geq 2 M \log\left( \nicefrac{3}{\epsilon} \right)$, we have that $\pp\left( \pois\left( M \cdot \log\left( \nicefrac{3}{\epsilon} \right) \right) > n \right) \leq \nicefrac{\epsilon}{3}$.  The result follows.
\end{proof}
We can now prove the proposition by appealing to the above lemma and \Cref{lem:cov_fdiv}.
\begin{proof}[Proof of \Cref{prop:sampling}]
    By \Cref{lem:cov_fdiv}, it holds that
    \begin{align}
        \Cov_M\left( \nu \| \mu \right) \leq \frac{M \cdot \divf{\nu}{\mu}}{f(M)}.
    \end{align}
    Thus by setting $M$ such that $\frac{M \cdot \divf{\nu}{\mu}}{f(M)} \leq \nicefrac{\epsilon}{3}$, we have that
    \begin{align}
        n \gtrsim \log\left( \nicefrac 3\epsilon \right) \cdot \gamma_f\left( \frac{3 \cdot \divf{\nu}{\mu}}{\epsilon} \right)
    \end{align}
    samples suffice to ensure that $\tv\left( \nu_n, \nu \right) \leq \epsilon$.
\end{proof}

\end{document}